\documentclass{article}

\usepackage[margin=1.4in]{geometry}

\usepackage[utf8]{inputenc} % allow utf-8 input
\usepackage[T1]{fontenc}    % use 8-bit T1 fonts
\usepackage{hyperref}       % hyperlinks
\usepackage{url}            % simple URL typesetting
\usepackage{booktabs}       % professional-quality tables
\usepackage{amsfonts}       % blackboard math symbols
\usepackage{nicefrac}       % compact symbols for 1/2, etc.
\usepackage{microtype}      % microtypography

\usepackage{amsmath}
\usepackage{algorithm, algorithmic}
\usepackage{amssymb}
\usepackage{amsthm}
\usepackage{xcolor}

\newtheorem{theorem}{Theorem}[section]

\newtheorem{lemma}[theorem]{Lemma}
\newtheorem{corollary}[theorem]{Corollary}
\newtheorem{definition}[theorem]{Definition}

\newcommand{\bs}[1]{\boldsymbol{#1}}
\newcommand{\spl}[1]{X^{#1}}

\newcommand{\bdis}[1]{\bs{d}^{#1}}
\newcommand{\cst}[2]{{c}^{#1}_{#2}}
\newcommand{\dis}[2]{d^{#1}_{#2}}

\newcommand{\ind}[1]{\mathcal{I}\left(#1\right)}

\newcommand{\na}[1]{[#1]}

\newcommand{\real}[1]{\mathbb{R}^{#1}}

\newcommand{\lr}{\eta}

\newcommand{\pder}[1]{\partial_{#1}}

\newcommand{\der}{\nabla}

\newcommand{\bset}{X^*}

\newcommand{\lf}[1]{\ell^{#1}}

\newcommand{\n}{n}

\newcommand{\bone}[1]{\bs{1}^{#1}}

\newcommand{\bfc}{\bs{c}}
\newcommand{\fc}{c}
\newcommand{\srn}{\sqrt{N}}

\newcommand{\lact}{\mathcal{B}}

\newcommand{\nac}{f}
\newcommand{\lac}{x}
\newcommand{\ntr}{T}
\newcommand{\mdis}{D}
\newcommand{\mcos}{C}

\newcommand{\nsi}{N}
\newcommand{\pows}[1]{\mathcal{P}(#1)}
\newcommand{\bocv}[1]{\bs{c}^{#1}}
\newcommand{\bccm}[1]{\bs{d}^{#1}}
\newcommand{\ocv}[2]{c_{#2}^{#1}}
\newcommand{\ccm}[2]{d_{#2}^{#1}}
\newcommand{\sels}[1]{X^{#1}}
\newcommand{\strat}[1]{\sigma_{#1}}
\newcommand{\spx}[1]{\Delta_{#1}}

\newcommand{\bnac}{\bs{f}}

\newcommand{\bstrat}{\bs{\sigma}}

\newcommand{\bcstr}[1]{\bs{\gamma}^{#1}}
\newcommand{\cstr}[2]{{\gamma}^{#1}_{#2}}
\newcommand{\ints}{\int}

\newcommand{\genr}{R}
\newcommand{\grd}[2]{\nabla#1(#2)}
\newcommand{\inorm}[1]{\|#1\|_{\infty}}
\newcommand{\wtf}[1]{w^{#1}}
\newcommand{\bwtf}[1]{\bs{w}^{#1}}

\newcommand{\bnstrat}[1]{\bs{\sigma}^{\operatorname{#1}}}
\newcommand{\nstrat}[2]{{\sigma}^{\operatorname{#1}}_{#2}}
\newcommand{\pgrd}[3]{\partial_{#3}#1(#2)}

\newcommand{\dpm}[2]{\delta_{}(#2)}
\newcommand{\gens}{\mathcal{S}}
\newcommand{\indi}[1]{\mathcal{I}(#1)}

\newcommand{\avlsn}[2]{\mathcal{L}\left(#1,#2\right)}

\newcommand{\mts}[1]{\mu_{#1}}
\newcommand{\biid}{\bs{s}}
\newcommand{\iid}[1]{s_{#1}}
\newcommand{\trsh}{\theta}

\newcommand{\act}[1]{\mathcal{X}_{#1}}
\newcommand{\lost}[1]{\mathcal{F}_{#1}}
\newcommand{\cplx}[1]{\lambda_{#1}}
\newcommand{\ntri}[1]{T_{#1}}
\newcommand{\g}{\mathcal{G}}
\newcommand{\sfun}[2]{\left\langle#1,#2\right\rangle}
\newcommand{\rplus}{\mathbb{R}^+}

\newcommand{\strs}[1]{\Omega_{#1}}
\newcommand{\gnrs}[2]{R_{#1}^{#2}}
\newcommand{\abs}[2]{\mathcal{Q}(#1,#2)}
\newcommand{\ity}{\omega}
\newcommand{\fl}{\operatorname{FL}}
\newcommand{\ual}{0}
\newcommand{\fll}[2]{\ell_{#1,#2}}
\newcommand{\nosf}{\nu}

\newcommand{\dt}{\operatorname{DT}}
\newcommand{\shift}[2]{#1^{[#2]}}

\newcommand{\ltr}[1]{l_{#1}}
\newcommand{\ttr}[1]{\trsh_{#1}}
\newcommand{\tatr}[1]{\tau_{#1}}
\newcommand{\maxl}[1]{q^{#1}}
\newcommand{\rl}{L}
\newcommand{\rk}{\Gamma}
\newcommand{\h}{\mathcal{H}}
\newcommand{\tran}{\mathcal{W}}
\newcommand{\pt}[1]{\alpha_{#1}}
\newcommand{\ft}[1]{\beta_{#1}}
\newcommand{\at}[1]{\psi_{#1}}
\newcommand{\lm}[1]{\phi_{#1}}
\newcommand{\co}[1]{\operatorname{CO}(#1)}
\newcommand{\fns}{K}
\newcommand{\flf}[1]{\fl_{#1}}
\newcommand{\flb}[1]{{\fl}^{\circ}_{#1}}
\newcommand{\spf}{X}
\newcommand{\tro}{\mathcal{Y}}
\newcommand{\trt}{\mathcal{Z}}
\newcommand{\ord}{v}
\newcommand{\bdv}{\bs{d}}
\newcommand{\dv}[1]{d_{#1}}
\newcommand{\tfns}{\Upsilon}
\newcommand{\hphi}{\hat{\phi}}
\newcommand{\boc}{\bocv{}}
\newcommand{\oc}[1]{\ocv{}{#1}}
\newcommand{\rele}[2]{\mathcal{D}(#1,#2)}
\newcommand{\bwst}{\bs{u}}
\newcommand{\wst}[1]{u_{#1}}
\newcommand{\bgv}[1]{\bs{g}^{#1}}
\newcommand{\gv}[2]{g^{#1}_{#2}}
\newcommand{\nrmc}[1]{Z_{#1}}
\newcommand{\nosg}{\rho}
\newcommand{\sets}{\mathcal{X}}
\newcommand{\on}[1]{\operatorname{#1}}

\newcommand{\initialise}[1]{\bs{\on{initialise}}_{#1}}
\newcommand{\play}[1]{\bs{\on{play}}_{#1}}
\newcommand{\update}[1]{\bs{\on{update}}_{#1}}
\newcommand{\binitialise}[1]{\bs{\on{initialise}}_{#1}}
\newcommand{\bplay}[1]{\bs{\on{play}}_{#1}}
\newcommand{\bupdate}[1]{\bs{\on{update}}_{#1}}
\newcommand{\ona}{\on{A}}
\newcommand{\iput}[1]{y_{#1}}
\newcommand{\oput}[1]{X_{#1}}
\newcommand{\oneg}{\on{CO}}

\newcommand{\gneg}[2]{\tilde{\on{CO}}(#1,#2)}
\newcommand{\onflo}{\on{FL^{\bullet}}}
\newcommand{\onflt}{\on{FL^{\circ}}}
\newcommand{\return}{\bs{\on{return}}}
\newcommand{\la}{\leftarrow}
\newcommand{\f}{f}
\newcommand{\lchild}[1]{{\triangleleft}(#1)}
\newcommand{\rchild}[1]{{\triangleright}(#1)}
\newcommand{\cod}{\tilde{\on{CO}}}
\newcommand{\flo}{\tilde{\on{FL}^{\bullet}}}
\newcommand{\flt}{\tilde{\on{FL}^{\circ}}}
\newcommand{\onfl}{\on{FL}}
\newcommand{\flg}{\tilde{\on{FL}}}
\newcommand{\glob}{\bs{\on{global}}}
\newcommand{\seg}[1]{S_{#1}}
\newcommand{\stf}[1]{m_{#1}}
\newcommand{\zfun}{\zeta}
\newcommand{\des}[1]{{\Downarrow}(#1)}

\allowdisplaybreaks

\title{\bf Online Learning of Facility Locations}

\author{{\bf Stephen Pasteris}\\ University College London\\ London, UK\\ \texttt{s.pasteris@cs.ucl.ac.uk}\\
\and {\bf Ting He}\\ Pennsylvania State University\\ University Park, PA, USA\\ \texttt{t.he@cse.psu.edu}\\
\and {\bf Fabio Vitale}\\ University of Lille \& INRIA\\ Lille, France\\ \texttt{fabio.vitale@inria.fr}\\
\and {\bf Shiqiang Wang}\\ IBM Research T. J. Watson\\ Yorktown Heights, NY, USA\\ \texttt{wangshiq@us.ibm.com}\\
\and {\bf Mark Herbster}\\ University College London\\ London, UK\\ \texttt{m.herbster@cs.ucl.ac.uk}}

\begin{document}

\date{}

\maketitle

\begin{abstract}
In this paper, we provide a rigorous theoretical investigation of an online learning version of the Facility Location problem which is motivated by emerging problems in real-world applications. In our formulation, we are given a set of sites and an online sequence of user requests. At each trial, the learner selects a subset of sites and then incurs a cost for each selected site and an additional cost which is the price of the user's connection to the nearest site in the selected subset. The problem may be solved by an application of the well-known Hedge algorithm. This would, however, require time and space exponential in the number of the given sites, which motivates our design of a novel quasi-linear time algorithm for this problem, with good theoretical guarantees on its performance.
\end{abstract}

\section{Introduction}
In this paper we consider an online learning version of the Facility location problem where users need to be served one at a time  in a sequence of trials. The goal is to select, at each trial, a subset of a given set of sites, and then pay a loss equal to their total ``opening cost'' plus the minimum ``connection cost'' for connecting the user to one of the sites in the subset. More precisely, we are given a set of $N$ sites. At the beginning of each trial, an opening cost and a connection cost for the arriving user are associated with each site and are unknown. At each trial, the learner has to select a subset of sites and incurs a loss given by the {\em minimum} connection cost over the selected sites plus the sum of the opening costs of all selected sites. After each subset selection, the opening and connection costs of all sites are revealed. 

To solve this problem, we design and rigorously analyse an algorithm which belongs to the class of online learning algorithms that make use of the Exponentiated gradient method \cite{Kivinen97}. We measure, and rigorously analyse, the performance of our method by comparing its cumulative loss with that of any fixed subset of sites. Moreover, our algorithm is very scalable: it requires a per-trial time quasi-linear in $N$ and logarithmic in the number of trials, and requires a total space linear in $N$.

The Facility location problem is one of the most well-studied problems in the Operations Research literature \cite{Cornuejols90,Laoutaris07,Shmoys01}. 
In this work we focus on an online version of this problem, which encompasses problems where both the opening and connection costs of the sites change over time.
As far as we are aware, this is the first investigation of this online learning version of the Facility location problem. Our formulation is general and very natural, and can 
model several real-world applications. In the {\em mobile edge computing} context, computing capabilities are pushed from the centralised cloud to the network edge~\cite{Wang18,Wang19A,Wang19B}. The users that need to be served move dynamically and the main challenge consists in reducing the user-perceived latency. In our problem formulation, the connection cost of the sites and can be interpreted as the transmission cost. The opening costs can be viewed as arising from the cost incurred by the resource contention among different service entities. It is natural to assume that this cost grows proportionally to its demand and that it commonly cannot be deduced from information available before having to select the subset of sites. 

Concrete problems like selecting and matchmaking groups of players with low latency to each other in  {\em online multiplayer gaming} can also be cast into this framework.  
This is a very challenging problem because of the real-time interaction required for online computer games, the difficulties in predicting user request locations and the lack of guarantees of timely delivery and network capacity, which in turn can be viewed as related to the connection costs of the sites in our formulation. Another example is represented by  {\em robo-taxis (self-driving taxis) services} which are being piloted
 in a number of major metropolitan areas.  In this example, the connection cost of each site can be viewed again as depending on several unpredictable variables which will be typically revealed after the service is used. Furthermore, the opening cost, i.e., the cost of activating a service, can be viewed, for instance,  as arising by different services competing for the same resources. 

More generally, the connection cost for each site can be viewed as defined by the fixed location of the site (e.g., an edge server~\cite{Wang19B}) and the location of the current service request (e.g., the edge server directly covering the requesting user). Then, each trial corresponds to the service of one request, which is assumed to be delay-sensitive and needs to be served immediately (e.g.,  matchmaking requests for multi-player online games). This interpretation implies that our formulation models a discrete event-driven system, where each trial starts with the placement of the service and ends with the arrival of a new request, not necessarily from the same user. This justifies the assumption of arbitrarily changing connection costs, although the location of a real user will have a temporal correlation. Switching from one service placement to another generally incurs some operation cost and some delay. In this work, we assume that the service is stateless (i.e., no migration needed) so that the operation cost is mainly the cost of activating the service at the newly selected sites. Furthermore, we assume that the inter-arrival time between consecutive requests is relatively large compared to the service switching time, so that the switching delay can be ignored (we leave the consideration of switching cost/delay to future work).
 
We point out that  our problem formulation is not restricted to two-dimensional (geographic) distances, nor even metric spaces.
Our formulation captures opening and connection cost models that are very general. More specifically, the connection costs in our model are not required to be metric-conforming. 

\subsection{Related Work}

Our problem is an online learning version of the classic ``(Uncapacitated) Facility location problem'' (FLP)~\cite{Cornuejols90,Laoutaris07,Shmoys01} in which all costs and all $T$ users are given a-priori and the aim is to select a set of sites that approximately minimises the sum of the ``opening costs'' of those sites plus the sum of the minimum ``connection cost'' from each user to the selected sites. With no other assumptions it has been shown that, by reduction of the ``Weighted Set Cover problem'' \cite{Chvatal79} to FLP, it is impossible (unless P=NP) to get a polynomial-time algorithm that obtains an approximation ratio better than logarithmic in $T$ in general \cite{Dinur14}. 

FLP reduces to the Weighted Set Cover (WSC) problem, in which the greedy algorithm for set cover can give an approximation ratio that is logarithmic in $T$. In the reduction, each subset of users appears $N$ times: each time with a corresponding site. Given a subset of users and a site, the weight of that instance of the subset is the opening cost of the site plus the sum of the connection costs (to that site) of the users in the subset. Although the size (i.e., the number of given subsets) of the equivalent WSC problem is exponential in $T$, the greedy algorithm will, on each iteration, only select a subset from one of $N T$ known subsets (where $N$ is the number of possible sites) and will hence run in polynomial time. 

Algorithms have been developed for online linear optimisation where the set of allowed vectors is in an arbitrary compact subset of $\real{m}$ \cite{Fujita13,Kakade07,Kalai05,Hazan18}. These algorithms utilise an $\alpha$-approximation algorithm for the offline linear optimisation problem. The online learning of a Weighted Set Cover (OWSC) is such a problem and the greedy algorithm is a $(\log(T)+1)$-approximation algorithm. Hence, due to the reduction of FLP to WSC, it would appear that this could solve our problem. Hence, we will now argue that our problem does not reduce to OWSC in the way that FLP reduces to WSC (albeit with a number of sets exponential in $T$).  On each trial we have a single user so the base set of WSC in the reduction contains only the single user. This means that all sets in the reduction cover the base set. Since every set in the cover corresponds to a single site, and the set covers the base set, the weight of that set must be equal to the sum of the opening and connection costs of that site. The sum of the weights of two sets therefore does not necessarily equal the loss incurred by selecting both those sites (in our problem), which is equal to the sum of the opening costs of the sites plus the minimum (not the sum) of their connection costs. Hence, OWSC does not correspond to our problem.

When the distances satisfy the requirement of a metric (which is not enforced in our problem), then constant approximation ratio algorithms for FLP are known \cite{Charikar99b,Guha99,Jain01}. We will now discuss using such algorithms with the well known ``Follow the Leader'' (FTL) strategy. FTL is perhaps the most simple online learning algorithm: the action we choose on any trial is that which would minimise the sum of the losses of the previous trials if it had been selected on all those trials (we call this action ``the leader''). Due to the NP-hardness of FLP we cannot expect to be able to do FTL exactly (with an efficient algorithm) but we could use the greedy algorithm (or constant factor approximation algorithms for metric cases) to approximate the leader, and then use the approximate leader instead. However, doing this results in a deterministic algorithm and we prove, in Appendix \ref{failsec}, that no deterministic algorithm can achieve the (expected) loss bound of our algorithm. FTL hence does not have the performance guarantee of our algorithm, even if the actual leader could be found. Also, FTL with the greedy strategy has a per trial time complexity of $\Theta(\n T)$ whilst that of ours is only $\Theta(\n\ln(\n)\ln(\ntr))$.

An improvement over the FTL approach is that of ``Hedge''~\cite{Freund97} which maintains a weight for each possible action and draws, on any trial, an action with probability proportional to its weight. Actions which have performed well so far have higher weights than those that have not performed well. Unlike FTL, Hedge~has a non-vacuous bound for our problem. However, each subset of sites is an action so there are exponentially many, implying that Hedge~has an exponential time and space complexity. The idea of Hedge~has been extended to algorithms such as ``Component Hedge'' \cite{Koolen10} where, like our problem, each action is a subset of a set of components (in our case the sites). However, Component Hedge assumes that the loss on each trial is a weighted sum of the components in the action so cannot deal with the connection cost (which is a minimisation over sites in the action). Like Hedge, our algorithm is one of a family of algorithms that use the ``Exponentiated Gradient method'' \cite{Kakade08,Kivinen97} to update probability distributions by using gradients. 

A variant of FLP which is close in spirit to ours is the ``Online Facility Location problem'' (OFL)~\cite{Fotakis11,Cygan18,Meyerson01} which has been extensively studied. In this problem, like in ours, the game runs over a set of trials, with a single user request on each trial. In OFL, the costs are fixed and if a site has been selected on any trial we pay its cost only once. Our problem is different in a number of ways. 1) In OFL, the location of the next user is seen before choosing a potentially new site, whilst in our problem the next user location is unknown. 2) In our problem, the opening costs vary from trial to trial, whilst in OFL, they are fixed.  3) OFL assumes the connection costs satisfy the conditions of a metric, whilst ours does not have to. The two problems are sufficiently distinct so that a methodology for one does not imply a methodology for the other.

Perhaps the closest work to ours is that of ``MaxHedge''~ \cite{Pasteris19a}. In the problem that MaxHedge~solves, the learner, like in our problem, picks a subset of sites, each with unknown cost, and then a user appears. The difference from our problem is that, in the problem of MaxHedge, the user gives us a reward based on its distance rather than giving us a penalty (the connection cost) based on distance. The objective is to maximise the profit which is the difference between the reward and the total cost of selecting the sites. Problems involving the maximisation of a profit are very different from those of minimising a loss, in that having an $\alpha$-approximation algorithm for one does not give an approximation algorithm for the other. Hence the problem of MaxHedge~ is very different from ours. The algorithms and analysis of MaxHedge~and our algorithm are also very different, although we utilise the sorting of sites that MaxHedge~does, which was in turn inspired by \cite{Pasteris19b}.

\subsection{Structure of the Paper}
This paper is structured as follows. In Subsection \ref{definitions}, we define the notation that is used in the main body of paper. In Section \ref{PDAR} we introduce our problem and give the loss-bound of our algorithm.  In Section \ref{algsec} we give our algorithm and describe its mechanics. In Appendix \ref{ecs} we give two subrountines in order to make the algorithm of Section \ref{algsec} efficient.  In Appendix \ref{oogdefsec} we define the notation used in the analysis of the algorithm. In Appendix \ref{oogsec} we give the theoretical concepts that underly the analysis of the algorithm. In Appendix \ref{devstratsec} we mathematically formulate and analyse our algorithm. In Appendix \ref{proofsec} we prove all of the theorems that were stated during the analysis of the algorithm (i.e. in appendices \ref{ecs}, \ref{oogsec} and \ref{devstratsec}). Appendix \ref{infclsec} describes how online classification can be formulated in terms of the theory of Appendix \ref{oogsec} and is intended as an example for the reader to familiarise themselves with the theoretical concepts. In Appendix \ref{failsec} we prove that no deterministic algorithm, e.g. follow the (approximate) leader, can achieve the bound on the (expected) loss that our algorithm does.

\subsection{Definitions}\label{definitions}
We now define the notation used in the main body of the paper. We define $\mathbb{R}^+:=\{x\in\mathbb{R}~|~x\geq0\}$. Given real numbers $x, x'\in\mathbb{R}$ we define $[x,x']=\{y\in\mathbb{R}~|~x\leq y \leq x'\}$. We define $\mathbb{N}$ to be the set of positive integers. Given $x\in\mathbb{R}^+$ we define $\lceil x\rceil:=\min\{n\in\mathbb{N}~|~n\geq x\}$. Given $n\in\mathbb{N}$ we define $[n]:=\{m\in\mathbb{N}:m\leq n\}$. Given any vector $\bs{x}\in\mathbb{R}^P$, for some $P\in\mathbb{N}$, we define $x_i$ to be it's $i$-th component.  Given a set $\gens$ we define $\pows{\gens}$ to be the power-set of $\gens$: that is, the set of all subsets of $\gens$. Given $P\in\mathbb{N}$, a subset $\gens$ of $\mathbb{R}^{P}$, a differentiable function $f:\gens\rightarrow\mathbb{R}$, and some $\bs{x}\in\gens$ we define $\grd{f}{\bs{x}}$ to be the gradient of $f$ evaluated at $\bs{x}$. In addition we define $\pgrd{f}{\bs{x}}{i}$ to be the $i$th component of $\grd{f}{\bs{x}}$. Given $P\in\mathbb{N}$ we define $\spx{P}$ to be the set of vectors $\bs{x}$ in $\mathbb{R}^P$ such that $\sum_{i\in \na{P}}x_i=1$ and for all $i\in\na{P}$ we have $x_i\geq 0$. Given a predicate $\pi$, we define $\indi{\pi}$ to be its indicator function: that is, $\indi{\pi}:=0$ if $\pi$ is false and $\indi{\pi}:=1$ if $\pi$ is true. Given $P\in\mathbb{N}$ we define $\bs{1}^P$ to be the vector in $\mathbb{R}^P$ in which each component is equal to $1$.

\section{Problem Description and Result}\label{PDAR}
We now introduce an online learning version of the classic ``Facility location problem'', which we call the ``Facility location game''. The Facility location game is based on the following family of functions. We have constants $\mcos$ and $\mdis$ and define $\sets:=\pows{\na{\nsi}}\setminus\{\emptyset\}$ for some given natural number $\nsi$. Given $\bocv{}\in[0,\mcos]^{\nsi}$ and $\bccm{}\in[0,\mdis]^{\nsi}$ we define the function $\fll{\bocv{}}{\bccm{}}:\sets\rightarrow \rplus$ by:
$$\fll{\bocv{}}{\bccm{}}(\sels{}):=\sum_{i\in\sels{}}\ocv{}{i}+\min_{i\in\sels{}}\ccm{}{i}$$
Intuitively we have $\nsi$ sites and a single user. Each site $i$ has an ``opening cost'' $\ocv{}{i}$, which is the cost of opening a facility there, and a ``connection cost'' $\ccm{}{i}$, which is the cost of connecting the user to it. We open facilities on the set $\sels{}$ of selected sites. We pay the total cost $\sum_{i\in\sels{}}\ocv{}{i}$ for opening the facilities plus the cost $\min_{i\in\sels{}}\ccm{}{i}$ of connecting the user to the nearest open facilility. The Facility location game is a repeated game between Learner and Nature that runs over trials $t=1,2,\ldots \ntr$. On trial $t$:
\begin{enumerate}
\item Nature selects $\bocv{t}\in[0,\mcos]^{\nsi}$ and $\bccm{t}\in[0,\mdis]^{\nsi}$ but does not reveal them to Learner.
\item Learner chooses $\sels{t}\in\sets$.
\item $\bocv{t}$ and $\bccm{t}$ are revealed to Learner.
\item Learner incurs loss $\fll{\bocv{t}}{\bccm{t}}(\sels{t})$
\end{enumerate}
The goal of Learner is to choose $\sels{t}$ in such a way that it incurs a small cumulative loss $\sum_{t=1}^{\ntr}\fll{\bocv{t}}{\bccm{t}}(\sels{t})$ in expectation (over an internal randomisation of its choices). The problem of choosing, in retrospect, the set $\sels{*}$ that minimises the objective function $\sum_{t=1}^{\ntr}\fll{\bocv{t}}{\bccm{t}}(\sels{*})$ is the famous ``Facility location problem''. We seek an efficient algorithm for Learner whose cumulative loss is bounded respect to this this objective function.

In this paper we will present an efficient algorithm for Learner in which, for any set $\sels{*}\in\sets$, we have:
\begin{equation}\label{mbeq}
\mathbb{E}\left(\sum_{t=1}^{\ntr}\fll{\bocv{t}}{\bccm{t}}(\sels{t})\right)\in\mathcal{O}\left(\ln(\ntr)\sum_{t=1}^{\ntr}\fll{\bocv{t}}{\bccm{t}}(\sels{*})+|\sels{*}|(\mcos+\mdis)\ln(\ntr)\sqrt{{\ln(\nsi)}{\ntr}}\right)
\end{equation}
The algorithm is efficient in that it runs in a time of $\mathcal{O}(\nsi\ln(\nsi)\ln(\ntr))$ per trial.

We now argue that this bound on the expected cumulative loss is good for a polynomial-time algorithm. We first consider the first term on the right hand side of Equation \eqref{mbeq}. As noted above, the problem of minimising $\sum_{t=1}^{\ntr}\fll{\bocv{t}}{\bccm{t}}(\sels{*})$ is the facility location problem. This problem is $\on{NP}$-hard and it has been shown, via reduction from the set cover problem, that, unless $\on{P}=\on{NP}$, no polynomial time algorithm can achieve an approximation ratio better than $(1-\epsilon)\ln(T)$ in general, for every $\epsilon\in\mathbb{R}^+$ \cite{Dinur14}. With this negative result in hand we do not expect to see a polynomial time algorithm for the Facility location game whose expected loss is smaller than $\mathcal{O}\left(\ln(\ntr)\sum_{t=1}^{\ntr}\fll{\bocv{t}}{\bccm{t}}(\sels{*})\right)$ in general. We now turn to the second term in the right hand side of Equation \eqref{mbeq}. Since the loss of any action (i.e. selection of set $\sels{}$) is bounded above by $\nsi\mcos+\mdis$, and there are $2^{N}-1$ possible actions, the standard analysis of the (exponential time) Hedge algorithm, leads to a regret bound of $\mathcal{O}\left((\nsi\mcos+\mdis)\sqrt{NT}\right)$. This is close to, and often outperformed by, our term $\mathcal{O}\left(|\sels{*}|(\mcos+\mdis)\ln(\ntr)\sqrt{{\ln(\nsi)}{\ntr}}\right)$.

\section{The Algorithm}\label{algsec}

In this section we give our algorithm for Learner, when playing the facility location game. We will build up the algorithm gradually: starting from the classic exponentiated gradient algorithm \cite{Kivinen97} for online convex optimisation on a simplex, and going via two intermediate algorithms for the Facility location game. Each algorithm builds on the last in that it uses the previous algorithm's methods as subroutines in its own methods. The two intermediate algorithms have a parameter $\fns\in\mathbb{N}$ and have bounds on the cumulative loss, relative to a fixed set of sites, only when the fixed set of sites has cardinality equal to $\fns$ and bounded above by $\fns$, respectively. %Our final algorithm is then obtained by performing a ``doubling trick'' with the second intermediate algorithm. 

In Appendix \ref{devstratsec} we will reformulate all the algorithms in this section formally as ``strategies'' for instances of what we call ``online optimisation games'' and analyse their performance. In order to understand Appendix \ref{devstratsec} it is necessary to first read appendices \ref{oogdefsec} and \ref{oogsec} which contain the required definitions and theoretical concepts respectively. The proofs of all theorems in these appendices are to be found in Appendix \ref{proofsec}.

All algorithms in this section have three methods: $\initialise{}$, $\play{}$ and $\update{}(\cdot)$. The method $\initialise{}$ takes no parameters and has no output, $\play{}$ takes no parameters but returns an output, and $\update{}(\cdot)$ takes a single parameter but has no output.  For an algorithm $\ona$ we will refer to its methods as $\initialise{\ona}$, $\play{\ona}$ and $\update{\ona}(\cdot)$, but will drop, on the subscripts, any parameters associated with $\ona$.

Each algorithm runs over trials $t=1,2,\ldots, \ntr$. On each trial $t$ it outputs some object $\oput{t}$ and then receives some input $\iput{t}$. This process is given in Algorithm \ref{alg1}:

\begin{algorithm}\caption{Algorithm $\ona$}\label{alg1}
\begin{algorithmic}
\STATE $\bullet$ $\initialise{\ona}$
\STATE $\bullet$ For trials $t=1,2,\ldots, \ntr$:
\STATE ~~~~~~~  $\bullet$ $\oput{t}\leftarrow\play{\ona}$
\STATE ~~~~~~~  $\bullet$ $\update{\ona}(\iput{t})$
\end{algorithmic}
\label{alg:mw}
\end{algorithm}

The inputs to $\update{\ona}$ are convex functions when $\ona$ is the Exponentiated gradient algorithm $\oneg$, and a pair of (opening and connection cost) vectors when $\ona$ is one of the algorithms for the Facility location game. For the Facility location game algorithms we define $(\bocv{t},\bccm{t}):=\iput{t}$. The outputs of $\play{\ona}$ are vectors when $\ona$ is the Exponentiated gradient algorithm and sets when $\ona$ is one of the algorithms for the Facility location game.

\subsection{The Exponentiated Gradient Method}\label{algs1}

In Algorithm \ref{alg2} we give the methods of our base algorithm $\oneg(N,G)$, which takes, as inputs, convex functions in $[0,1]^N$ and outputs vectors in $\spx{N}$. The parameter $G$ is an upper bound on the magnitude of any component of the (sub)gradient of any of the input functions, anywhere on $\spx{N}$. The name $\oneg$ stands for ``Convex Optimisation'' and it implements the well studied ``Exponentiated gradient method''. The following property is well known:

If the algorithm $\oneg(N,G)$ is inputted with functions $\iput{1},\iput{2},\cdots,\iput{\ntr}\in [0,1]^N$ that obey the above properties, then the output $\oput{1},\oput{2},\cdots,\oput{\ntr}$ satisfies:
\begin{equation}\label{egboundeq}
\sum_{t\in\na{\ntr}}\iput{t}(\oput{t})-\iput{t}(\oput{}^*)\leq 2G\sqrt{\ln(N)/\ntr}
\end{equation}
for any $\oput{}^*\in\spx{N}$. Note that the objective of the exponentiated gradient method is to minimise $\sum_{t\in\na{\ntr}}\iput{t}(\oput{t})$.

When the method $\update{\oneg}$ is called as a subroutine in another algorithm, the line ``$\glob~\lambda\la f(\bs{w})$'' sets a global variable $\lambda$ equal to $f(\bs{w})$. This will be used in our final algorithm $\onfl$.

\begin{algorithm}\caption{$\oneg(N,G)$}\label{alg2}
\begin{algorithmic}
\STATE $\binitialise{}$:
\STATE ~~~~~~~$\bullet$ $\bs{w}\leftarrow \bone{N}/N$
\STATE ~~~~~~~$\bullet$ $\lr\la\frac{1}{G}\sqrt{\frac{\ln(N)}{T}}$
\STATE
\STATE $\bplay{}$:
\STATE ~~~~~~~ $\bullet$ $\return~ \bs{w}$
\STATE
\STATE $\bupdate{}(f)$:
\STATE ~~~~~~~ $\bullet$ $\glob~\lambda\la f(\bs{w})$
\STATE ~~~~~~~ $\bullet$ $\bs{g}\la \der{f}(\bs{w})$
\STATE ~~~~~~~ $\bullet$ For $i\in\na{N}$: $u_i\la w_i\exp(-\lr g_i)$
\STATE ~~~~~~~ $\bullet$ $Z\la\sum_{i\in\na{N}}u_i$
\STATE ~~~~~~~  $\bullet$ For $i\in\na{N}$: $w_i\la u_i/Z$
\end{algorithmic}
\label{alg:mw}
\end{algorithm}

\subsection{An Algorithm for when the Cardinality of a Comparator Set is Known}\label{algs2}

In Algorithm \ref{alg3} we give the methods of our first algorithm $\onflo(N,C,D,\fns)$ for the facility location game; where $N, C$ and $D$ are defined as in Section \ref{PDAR}. Note that we now also have a parameter $\fns$: we will only compare the performance of the algorithm to that of a fixed sets of sites which has cardinality $\fns$. When the method $\play{}$ is called we choose a vector $\bs{p}\in\spx{N}$ and then form the output $\oput{}$ by drawing $\fns\lceil\ln(\ntr)/2\rceil$ sites with replacement from the probability distribution on $\na{N}$ characterised by $\bs{p}$. Let $\bs{p}^t$ be the value of $\bs{p}$ on trial $t$. We will now describe how and why $\bs{p}^t$ is selected:

Let $\f_{\bocv{},\bccm{}}$ be the function $\f$ created in the method $\play{\onflo}$ when it is inputted with $(\bocv{},\bccm{})$. In Appendix \ref{devstratsec} we shall show that the expected value of $\fll{\bocv{}}{\bccm{}}(\oput{})$ is bounded above $\f_{\bocv{},\bccm{}}(\bs{p})$. In Appendix \ref{devstratsec} we also prove that $\f_{\bocv{},\bccm{}}$ is convex and  that magnitude of any component of its gradient, anywhere on $\spx{N}$ is no more than $(C+D)\fns\lceil\ln(\ntr)/2\rceil$. Since the objective is to minimise the expected cumulative loss $\sum_{t\in\na{\ntr}}\fll{\bocv{t}}{\bccm{t}}(\oput{t})$ where $\oput{t}$ is the output of $\play{\onflo}$ on trial $t$, we will, instead, seek to minimise $\sum_{t\in\na{\ntr}}\f_{\bocv{t},\bccm{t}}(\bs{p}^t)$. This is exactly the goal of the exponentiated gradient method, so we use the exponentiated gradient method with inputs $\{\f_{\bocv{t},\bccm{t}}~|~t\in\na{\ntr}\}$ to produce our sequence $\{\bs{p}^{t}~|~t\in\na{\ntr}\}$.

In Appendix \ref{devstratsec} we bound the value of $\sum_{t\in\na{\ntr}}\f_{\bocv{},\bccm{}}(\bs{p}^*)$, minimised over all $\bs{p}^*\in\spx{N}$, which, by utilising Equation \eqref{egboundeq}, gives us:
\begin{equation}\label{fixeq}
\mathbb{E}\left(\sum_{t\in\na{\ntr}}\fll{\bocv{t}}{\bccm{t}}(\oput{t})\right)\leq\lceil \ln(\ntr)/2\rceil\sum_{t\in\na{\ntr}}\fll{\bocv{t}}{\bccm{t}}(\oput{}^*)+(2\fns(C+D)\lceil \ln(\ntr)/2\rceil+D)\sqrt{\ln(N)\ntr}
\end{equation}
for any selection of sites $\oput{}^*$ with cardinality $\fns$.

Of course, to run the algorithm we must sample $\fns\lceil\ln(\ntr)/2\rceil$ sites from a probability distribution over $\na{N}$ characterised by a vector $\bs{p}\in\spx{N}$ and, during the subrountine $\update{\oneg}(\f)$, compute the value and gradient of ${f}(\bs{w})$. In Appendix \ref{ecs} we show how to perform each of these tasks in a time of $\mathcal{O}(N\ln(N)\ln(T))$.

\begin{algorithm}\caption{$\onflo(N,C,D,\fns)$}\label{alg3}
\begin{algorithmic}
\STATE $\binitialise{}$:
\STATE ~~~~~~~$\bullet$ $\tfns\la\fns\lceil\ln(\ntr)/2\rceil$
\STATE ~~~~~~~$\bullet$ $\ona\la\oneg(N,(C+D)\tfns)$
\STATE ~~~~~~~$\bullet$ $\initialise{\ona}$
\STATE
\STATE $\bplay{}$:
\STATE ~~~~~~~ $\bullet$ $\bs{p}\la\play{\ona}$
\STATE ~~~~~~~ $\bullet$ For all $i\in\na{\tfns}$ sample some $k_i\in\na{N}$ with probability $p_{k_i}$
\STATE ~~~~~~~ $\bullet$ $\spf\la \{j\in\na{N}~|~\exists i\in\na{\tfns}:k_i=j\}$
\STATE ~~~~~~~ $\bullet$ $\return~ \spf$
\STATE
\STATE $\bupdate{}(\bocv{},\bdv)$:
\STATE ~~~~~~~ $\bullet$ Sort $\na{N}$ as $\ord(1),\ord(2),\cdots\ord(N)$ such that $\dv{\ord(i+1)}\leq\dv{\ord(i)}$ for all $i\in\na{N}$
\STATE ~~~~~~~ $\bullet$ Define $\f:[0,1]^N\rightarrow\rplus$ by: $$f(\bs{w}):=\tfns\bocv{}\cdot\bs{w}+\dv{\ord(N)}+\sum_{i\in\na{N-1}}\left(\dv{\ord(i)}-\dv{\ord(i+1)}\right)\left(\sum_{j\in\na{i}}w_{\ord(j)}\right)^{\tfns}$$
\STATE ~~~~~~~ $\bullet$ $\update{\ona}(\f)$
\end{algorithmic}
\label{alg:mw}
\end{algorithm}

\subsection{An Algorithm for when a Bound on the Cardinality of a Comparator Set is Known}\label{algs3}

In Algorithm \ref{alg4} we give the methods of our second algorithm $\onflt(N,C,D,\fns)$ for the facility location game. Instead of being able to compare against just fixed sets of sites with cardinality equal to $\fns$, we can now compare against any fixed sets of sites with cardinality bounded above by $\fns$. To do this we add $N$ ``dummy'' sites, each with zero opening cost, and use the algorithm $\onflo(2N,C,C+D,\fns)$ on this extended collection of sites. When $\play{\onflt}$ is called we simply take the output from $\play{\onflo}$, which is a subset of the $2N$ sites, and remove the dummy sites. Since we can't choose the empty set, if all sites in the output of $\play{\onflo}$ are dummy sites then we will simply choose $\{1\}$ as the output of $\play{\onflt}$, which has a total cost of no more than $C+D$. Because of this we assign a connection cost of $C+D$ to all the dummy sites. We can now compare to any fixed set $\oput{}^*$ of sites in $\na{N}$ with cardinality no greater than $\fns$: if $|\oput{}^*|<\fns$ we simply add $\fns-|\oput{}^*|$ dummy sites to it so the cardinality becomes $\fns$ and we can use the bound of $\onflo(2N,C,C+D,\fns)$. Our bound on the expected cumulative loss is then:
\begin{equation}\label{flbe}
\mathbb{E}\left(\sum_{t\in\na{\ntr}}\fll{\bocv{t}}{\bccm{t}}(\oput{t})\right)\leq\left\lceil \frac{\ln(\ntr)}{2}\right\rceil\sum_{t\in\na{\ntr}}\fll{\bocv{t}}{\bccm{t}}(\oput{}^*)+(2\fns(2C+D)\left\lceil \frac{\ln(\ntr)}{2}\right\rceil+(C+D))\sqrt{\ln(2N)\ntr}
\end{equation}
For any subset of sites $\oput{}^*$ with cardinality no greater than $\fns$.

\begin{algorithm}\caption{$\onflt(N,C,D,\fns)$}\label{alg4}
\begin{algorithmic}
\STATE $\binitialise{}$:
\STATE ~~~~~~~$\bullet$ $\ona\la\onflo(2N,C,C+D,\fns)$
\STATE ~~~~~~~$\bullet$ $\initialise{\ona}$
\STATE
\STATE $\bplay{}$:
\STATE ~~~~~~~ $\bullet$ $\hat{\spf}\la\play{\ona}$
\STATE ~~~~~~~ $\bullet$ ${\spf}\la\hat{\spf}\cap\na{N}$
\STATE ~~~~~~~ $\bullet$ If ${\spf}=\emptyset$ then set ${\spf}\la\{1\}$
\STATE ~~~~~~~ $\bullet$ $\return~ {\spf}$
\STATE
\STATE $\bupdate{}(\bocv{},\bdv)$:
\STATE ~~~~~~~ $\bullet$ For all $i\in\na{N}$ set $\hat{c}_i\la \ocv{}{i}$ and $\hat{d}_i\la\dv{i}$
\STATE ~~~~~~~ $\bullet$ For all $i\in\na{2N}\setminus\na{N}$ set $\hat{c}_i\la 0$ and $\hat{d}_i\la C+D$
\STATE ~~~~~~~ $\bullet$ $\update{\ona}(\hat{\bs{c}},\hat{\bs{d}})$
\end{algorithmic}
\label{alg:mw}
\end{algorithm}

\subsection{The Main Algorithm}\label{algs4}

Finally, in Algorithm \ref{alg5} we give the methods of our main algorithm $\onfl(N,C,D)$ which works by performing a ``doubling trick''  with $\onflt$. During the method $\update{\onfl}$, the method $\update{\onflt}$ is called and hence so is $\update{\oneg}$. During the method $\update{\oneg}$ a global variable $\lambda$ is modified. Let $\lambda_t$ be the value of $\lambda$ at the end of trial $t$. We define $a:=\lceil\ln(\ntr/2)\rceil(4C+2D)$ and $b:=C+D$. The trials are divided into segments $\seg{0}, \seg{1}, \seg{2} \cdots $. At the start of segment $\seg{i}$ the algorithm $\onflt(N,C,D,\lceil2^i(a+b)-b)/a\rceil)$ is initialised and runs until the sum of the values $\lambda_t$ over trials $t$ in the segment so far exceeds $2^{i+1}(a+b)\trsh\sqrt{\ln(2N)\ntr}$. When this happens, $\seg{i}$ finishes and $\seg{i+1}$ starts. In appendices \ref{oogsec} and \ref{devstratsec} we analyse our doubling trick (Appendix \ref{GDTss} defines an analyses the doubling trick in general and then Appendix \ref{flgs} applies it to the facility location game) which, combined with Equation \eqref{flbe}, gives a bound, for $\onfl(N,C,D)$, of:
\begin{equation}\label{eqfinmain}
\mathbb{E}\left(\sum_{t=1}^{\ntr}\fll{\bocv{t}}{\bccm{t}}(\sels{t})\right)\in\mathcal{O}\left(\ln(\ntr)\sum_{t=1}^{\ntr}\fll{\bocv{t}}{\bccm{t}}(\sels{*})+|\sels{*}|(\mcos+\mdis)\ln(\ntr)\sqrt{{\ln(\nsi)}{\ntr}}\right)
\end{equation}
for any non-empty set of sites $\sels{*}$. By using the subroutines of Appendix \ref{ecs} the time complexity of this algorithm is only $\mathcal{O}(N\ln(N)\ln(\ntr))$ per trial.

\begin{algorithm}\caption{$\onfl(N,C,D)$}\label{alg5}
\begin{algorithmic}
\STATE $\binitialise{}$:
\STATE ~~~~~~~$\bullet$ $a\la\lceil\ln(\ntr)/2\rceil(4C+2D)$
\STATE ~~~~~~~$\bullet$ $b\la C+D$
\STATE ~~~~~~~$\bullet$ $\trsh\la 1$
\STATE ~~~~~~~$\bullet$ $\fns\la\lceil(\trsh(a+b)-b)/a\rceil$
\STATE ~~~~~~~$\bullet$ $\ona\la\onflt(N,C,D,\fns)$
\STATE ~~~~~~~$\bullet$ $\initialise{\ona}$
\STATE ~~~~~~~$\bullet$ $l\la 0$
\STATE
\STATE $\bplay{}$:
\STATE ~~~~~~~ $\bullet$ $\spf\la\play{\ona}$
\STATE ~~~~~~~ $\bullet$ $\return~ {\spf}$
\STATE
\STATE $\bupdate{}(\bocv{},\bdv)$:
\STATE ~~~~~~~ $\bullet$ $\update{\ona}(\bocv{},\bdv)$
\STATE ~~~~~~~ $l\la l+\lambda$
\STATE ~~~~~~~ $\bullet$ $\on{if}~l\geq 2(a+b)\trsh\sqrt{\ln(2N)\ntr}$:
\STATE ~~~~~~~~~~~~~~~~~$\bullet$ $\trsh\la 2\trsh$
\STATE ~~~~~~~~~~~~~~~~~$\bullet$ $\fns\la\lceil\trsh(a+b)-b)/a\rceil$
\STATE ~~~~~~~~~~~~~~~~~$\bullet$ $\ona\la\onflt(N,C,D,\fns)$
\STATE ~~~~~~~~~~~~~~~~~$\bullet$ $\initialise{\ona}$
\STATE ~~~~~~~~~~~~~~~~~$\bullet$ $l\la 0$
\end{algorithmic}
\label{alg:mw}
\end{algorithm}

\section{Conclusions and Ongoing Work}

In this paper, we have proposed a novel online learning version of the classic Facility Location problem. We have proposed an algorithm for this problem and derived bounds on its expected loss relative to that of the any fixed set of sites. Ongoing work for this problem includes:

\begin{itemize}

\item Complement our study by carrying out an experimental evaluation of our algorithm on real-world datasets. 

\item Extend the algorithm to the bandit setting \cite{Auer02,Robbins52}, when only the total cost, or perhaps total opening cost and minimum connection cost, is revealed on each trial.

\item In some real-world domains, we may have to pay a migration cost to move sites while selecting a subset of sites, or may be limited to how far we can move them. We would like to design an algorithm to handle such problems.

\end{itemize}

\section{Acknowledgements}

This research was sponsored by the U.S. Army Research Laboratory and the U.K. Ministry of Defence under Agreement Number W911NF-16-3-0001. The views and conclusions contained in this document are those of the authors and should not be interpreted as representing the official policies, either expressed or implied, of the U.S. Army Research Laboratory, the U.S. Government, the U.K. Ministry of Defence or the U.K. Government. The U.S. and U.K. Governments are authorized to reproduce and distribute reprints for Government purposes notwithstanding any copyright notation hereon.

\begin{appendix}

\section{Efficient Computation}\label{ecs}

In this section we give two subroutines for the algorithm of Subsection \ref{algs2}, bringing the time complexity of the algorithm of Subsection \ref{algs4} down to $\mathcal{O}(N\ln(N)\ln(\ntr))$ per trial. The proofs of both theorems in this section are to be found in Section \ref{proofsec}

\subsection{Computing $\lambda$ and $\bs{g}$}
When the method $\update{\oneg}$ is run as a subroutine of $\update{\onflo}$ we have some $f$ defined by:

\begin{equation}\label{fdefeq}
f(\bs{p}):=\tfns\bocv{}\cdot\bs{p}+\dv{\ord(N)}+\sum_{i\in\na{N-1}}\left(\dv{\ord(i)}-\dv{\ord(i+1)}\right)\left(\sum_{j\in\na{i}}p_{\ord(j)}\right)^{\tfns}
\end{equation}
and for some $\bs{w}\in\spx{N}$ we need to compute $\lambda:=f(\bs{w})$ and $\bs{g}:=\der{f}(\bs{w})$. Algorithm \ref{algef} shows how to compute both of these quantities in time $\mathcal{O}(N)$. The following theorem asserts the correctness of Algorithm \ref{algef}:

\begin{theorem}\label{fderth}
Given the function $f$ defined in Equation \eqref{fdefeq} and the outputs, $\lambda,\bs{g}$ of Algorithm \ref{algef} we have that $\lambda=f(\bs{w})$ and $\bs{g}=\der{f}(\bs{w})$
\end{theorem}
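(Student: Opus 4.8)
\noindent\emph{Proof idea.} The plan is to rewrite $f$ entirely in terms of the prefix sums $S_i := \sum_{j\in\na{i}} w_{\ord(j)}$ determined by the sorted order $\ord$ (with $S_0 := 0$), and then differentiate the resulting expression coordinate by coordinate. First I would observe that the $S_i$ satisfy $S_i = S_{i-1} + w_{\ord(i)}$, so a single left-to-right pass over $\na{N}$ computes all of them in $\mathcal{O}(N)$ time; that the accumulator produced this way equals $\sum_{j\in\na{i}} w_{\ord(j)}$ is a one-line induction on $i$. Substituting into \eqref{fdefeq} then gives
$$\lambda = f(\bs{w}) = \tfns\,\bocv{}\cdot\bs{w} + \dv{\ord(N)} + \sum_{i\in\na{N-1}}\big(\dv{\ord(i)} - \dv{\ord(i+1)}\big)\,S_i^{\tfns},$$
which is one more $\mathcal{O}(N)$ pass, establishing the first assertion $\lambda = f(\bs{w})$.

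For $\bs{g}$, note that since $\tfns\in\mathbb{N}$ each map $\bs{p}\mapsto\big(\sum_{j\in\na{i}} p_{\ord(j)}\big)^{\tfns}$ is a polynomial, so $f$ is differentiable on all of $[0,1]^N$ and the chain rule applies with no boundary subtleties. Fix a site $k$ and let $r$ be its position in the sorted order, i.e.\ $\ord(r) = k$; then $p_k$ occurs in the $i$-th inner sum $\sum_{j\in\na{i}} p_{\ord(j)}$ precisely when $r \le i$, so differentiating \eqref{fdefeq} termwise yields
$$\pgrd{f}{\bs{w}}{k} = \tfns\,\ocv{}{k} + \tfns\sum_{i=r}^{N-1}\big(\dv{\ord(i)} - \dv{\ord(i+1)}\big)\,S_i^{\tfns-1}.$$
The point that makes this linear-time is that the tail $T_r := \sum_{i=r}^{N-1}\big(\dv{\ord(i)} - \dv{\ord(i+1)}\big)\,S_i^{\tfns-1}$ obeys the backward recursion $T_N = 0$, $T_r = T_{r+1} + \big(\dv{\ord(r)} - \dv{\ord(r+1)}\big)\,S_r^{\tfns-1}$ for $r\in\na{N-1}$; a single right-to-left pass therefore produces every $T_r$, and one then sets $\gras{}{k} = \tfns(\ocv{}{k} + T_r)$ for the site $k$ sitting in position $r$. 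A second short induction, downward on $r$, shows the accumulator of Algorithm \ref{algef} equals $T_r$, which together with the displayed derivative formula gives $\bs{g} = \der{f}(\bs{w})$ componentwise.

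I expect the only real care to go into bookkeeping rather than mathematics: keeping the permutation $\ord$ and its inverse rank map consistent so that each partial derivative is written back to the correct coordinate; matching the index range $\na{N-1}$ and the lone constant term $\dv{\ord(N)}$ in \eqref{fdefeq} against exactly where the two recursions start and stop; and the degenerate case $\tfns = 1$, where $S_i^{\tfns-1} = 1$ and the prefix sums play no role in $\bs{g}$ (the derivative formula still holds with the convention $S_i^0 = 1$). The substance of the theorem is just the two telescoping identities above, so the write-up should be brief.
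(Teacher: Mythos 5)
Your proposal is correct and follows essentially the same route as the paper's proof: forward prefix sums $S_i$ established by induction give $\lambda = f(\bs{w})$, termwise differentiation with the rank observation ($p_k$ enters the $i$-th inner sum iff $r\le i$) gives the gradient formula, and the backward tail accumulator $T_r$ (the paper's $s'_i$) is shown by a second induction to match the algorithm's right-to-left pass. The only cosmetic difference is that you are somewhat more explicit about distinguishing a site's index $k$ from its sorted rank $r=\ord^{-1}(k)$, which is a sensible bookkeeping precaution given the permutation-indexed updates in Algorithm \ref{algef}.
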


\begin{algorithm}\caption{Computing $\lambda$ and $\bs{g}$}\label{algef}
\begin{algorithmic}
\STATE $\bullet$ $s_1\la w_{\ord(1)}$
\STATE $\bullet$ For $i=1,2,\ldots N-2$:
\STATE ~~~~~$\bullet$ $s_{i+1}\la s_i+w_{\ord(i+1)}$
\STATE $\bullet$ $\lambda\la \tfns\bocv{}\cdot\bs{w}+\dv{\ord(N)}+\sum_{i\in\na{N-1}}\left(\dv{\ord(i)}-\dv{\ord(i+1)}\right)s_i^{\tfns}$
\STATE $\bullet$ $s'_{N-1}\la \left(\dv{\ord(N-1)}-\dv{\ord(N)}\right)s_{N-1}^{\tfns-1}$
\STATE $\bullet$ For $i=N-1,N-2,\ldots 2$:
\STATE ~~~~~$\bullet$ $s'_{i-1}=s'_{i}+\left(\dv{\ord(i-1)}-\dv{\ord(i)}\right)s_{i-1}^{\tfns-1}$
\STATE ~~~~~$\bullet$ For $i\in\na{N}$ set $g_{\ord(i)}\la \tfns\ocv{}{i}+\tfns s'_i$
\STATE $\bullet$ $\return~\lambda,~ \bs{g}$
\end{algorithmic}
\label{alg:mw}
\end{algorithm}

\subsection{Multiple Samples from a Finite Set}

In Algorithm \ref{algsam} present an algorithm for the efficient sampling of many sites in $\na{N}$ from a probability distribution characterised by a vector $\bs{p}\in\spx{N}$. This algorithm is required in the method $\play{\onflo}$. Algorithm \ref{algsam} has the following notation: given an oriented full binary tree and some internal node $j$ we define $\lchild{j}$ and $\rchild{j}$ to be the left and right child of $j$ respectively. The algorithm has two methods: the method $\bs{\on{initialsation}}(\bs{p})$ constructs the data-structure, taking a time of $\mathcal{O}(N)$. The method $\bs{\on{sample}}$ samples a single point and takes a time of $\mathcal{O}(\ln(N))$. The computational complexities of both methods are clear, whilst the correctness is confirmed by the following theorem:

\begin{theorem}\label{msfsth}
Suppose we have some $i\in\na{N}$ and $\bs{p}\in\spx{N}$. Then, given $\bs{\on{initialsation}}(\bs{p})$ is run a-priori, the method $\bs{\on{sample}}$ returns $i$ with probability $p_i$.
\end{theorem}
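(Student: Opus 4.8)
The plan is to prove Theorem \ref{msfsth} by establishing a probabilistic invariant along the root-to-leaf path traversed by $\bs{\on{sample}}$. First I would pin down what the data structure stores: the method $\bs{\on{initialsation}}(\bs{p})$ builds a full binary tree with $N$ leaves, one for each element of $\na{N}$, and labels every node $j$ with the quantity $\mu(j):=\sum_k p_k$ where the sum ranges over the leaves $k$ lying in the subtree rooted at $j$. A routine induction up the tree — a leaf $\{k\}$ gets $\mu(\{k\})=p_k$, and an internal node gets $\mu(j)=\mu(\lchild{j})+\mu(\rchild{j})$ — shows that this labelling is computed correctly by the bottom-up pass of the initialisation, in time $\mathcal{O}(N)$. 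Note also that $\mu(\text{root})=\sum_{k\in\na{N}}p_k=1$ since $\bs{p}\in\spx{N}$.

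Next, for an internal node $j$ reached by the traversal, I would verify from the pseudocode of Algorithm \ref{algsam} that $\bs{\on{sample}}$ descends to $\lchild{j}$ with conditional probability $\mu(\lchild{j})/\mu(j)$ and to $\rchild{j}$ with conditional probability $\mu(\rchild{j})/\mu(j)=1-\mu(\lchild{j})/\mu(j)$, and that this branching choice is independent of the choices made higher up on the path (this is the content of the rescaling of the single uniform random draw as the traversal descends). Granting this, let $\mathcal{R}(j)$ denote the event that the traversal visits node $j$. I claim that $\prob{\mathcal{R}(j)}=\mu(j)$ for every node $j$, which I would prove by induction on the depth of $j$: the root is visited with probability $1=\mu(\text{root})$; and if $\prob{\mathcal{R}(j)}=\mu(j)$ for an internal node $j$, then $\prob{\mathcal{R}(\lchild{j})}=\prob{\mathcal{R}(j)}\cdot\mu(\lchild{j})/\mu(j)=\mu(\lchild{j})$, and symmetrically $\prob{\mathcal{R}(\rchild{j})}=\mu(\rchild{j})$. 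Applying this to the leaf $\{i\}$ — whose label is $\mu(\{i\})=p_i$ — gives that the traversal ends at leaf $i$, equivalently that $\bs{\on{sample}}$ returns $i$, with probability $p_i$, which is exactly the assertion. The $\mathcal{O}(\ln(N))$ running time is then immediate, since the tree has depth $\mathcal{O}(\ln(N))$ and the work done at each node on the path is constant.

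The main obstacle is the degenerate case $\mu(j)=0$, which can occur when some $p_k=0$, and where the conditional branching probability $\mu(\lchild{j})/\mu(j)$ is formally undefined. I would dispatch this by observing that, by the invariant just established, any such node is reached with probability $0$, so the traversal's behaviour at that node does not affect the distribution of the returned element; whatever convention the code adopts there (e.g. always branching left) is harmless. The only remaining point demanding care is confirming, directly from Algorithm \ref{algsam}, the exact conditional distribution and the independence of each branching step — a mechanical check once the structure of the algorithm is written out — after which the induction closes cleanly.
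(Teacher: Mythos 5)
Your proposal is correct and substantively matches the paper's proof. Both arguments rest on the same two observations: the initialisation stores at each node $j$ the value $p'_j=\sum_{k\in\des{j}}p_{\tau(k)}$ and hence $p'_{v_0}=1$ (the paper's Lemma~\ref{msfsl1}, your bottom-up labelling check), and the sampler descends from any internal node to a child with conditional probability proportional to that child's label. The only technical divergence is the direction of the path induction: the paper's Lemma~\ref{msfsl2} runs a \emph{reverse} induction from the leaf, computing the conditional probability $\mathbb{P}(l=v_H\mid v_\delta=a_\delta)=p_i/p'_{a_\delta}$, whereas you run a \emph{forward} induction from the root on the marginal reach-probability $\prob{\mathcal{R}(j)}=\mu(j)$; both close identically at $p_i/p'_{v_0}=p_i$. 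Your formulation is arguably a shade cleaner since it never conditions on events that could have probability zero, and you also explicitly dispatch the degenerate case $\mu(j)=0$, which the paper does not address. One small misreading to fix: you describe the branching as ``the rescaling of the single uniform random draw,'' but Algorithm~\ref{algsam} actually draws a fresh uniform $r_\delta$ at every level $\delta$; this does not affect the branching law or the validity of your induction, but the justification for independence should quote the fresh draws rather than a rescaling.
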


\begin{algorithm}\caption{Sampling from a Finite Set}\label{algsam}
\begin{algorithmic}

\STATE $\bs{\on{initialsation}}(\bs{p})$:
\STATE ~~~~~  $\bullet$ $H\la \lceil \ln(N)\rceil$
\STATE ~~~~~  $\bullet$ $N'\la\exp(H)$
\STATE ~~~~~  $\bullet$ For all $i\in \na{N'}\setminus\na{N}$ set $p_i\la 0$
\STATE ~~~~~  $\bullet$ Construct a full, oriented and balanced binary tree $\mathcal{B}$ of height $H$.
\STATE ~~~~~  $\bullet$ Construct an arbitrary bijection $\tau$ from the leaves of $\mathcal{B}$ into $N'$.
\STATE ~~~~~  $\bullet$ For all leaves $j$, of $\mathcal{B}$, set $p'_j\la p_{\tau(i)}$
\STATE ~~~~~  $\bullet$ For $\delta=H-1, H-2, \cdots 1$:
\STATE ~~~~~  ~~~~~~~ $\bullet$ For all nodes $j$ of $\mathcal{B}$ at depth $\delta$ set $p'_j=p'_{\lchild{j}}+p'_{\rchild{j}}$
\STATE
\STATE $\bs{\on{sample}}$:
\STATE ~~~~~  $\bullet$ Set $v_0$ to be the root of of the $\mathcal{B}$.
\STATE ~~~~~ For $\delta=0,1,\cdots H-1$
\STATE ~~~~~  ~~~~~~~ $\bullet$ Sample a random number $r_{\delta}$ uniformly at random from $[0,1]$
\STATE ~~~~~  ~~~~~~~ $\bullet$ If $r_{\delta}\leq p'_{\lchild{v_{\delta}}}/\left(p'_{\lchild{v_{\delta}}}+p'_{\rchild{v_{\delta}}}\right)$ then set $v_{\delta+1}\la\lchild{v_{\delta}}$. Else set $v_{\delta+1}\la\rchild{v_{\delta}}$
\STATE ~~~~~ $\bullet$ $\return~\tau(v_{H})$
\end{algorithmic}
\label{alg:mw}
\end{algorithm}

\section{Definitions}\label{oogdefsec}
We now define the notation used in the analysis of the algorithm.

We let $\ity\in\rplus$ be a surrogate for $\infty$. Throughout the paper we will always assume the limit $\ity\rightarrow\infty$.

Given sets $\gens$ and $\gens'$ we define $\sfun{\gens}{\gens'}$ to be the set of functions from $\gens$ into $\gens'$. 

Given a set $\gens$ and a natural number $T'$ we define $\gens^{T'}$ to be the set of sequences of elements of $\gens$ of length $T'$. Given $\bnac\in\gens^{T'}$ we define $\nac_t$ to be the $t$-th element of the sequence $\bnac$. Given a sequence $\bnac\in\gens^{\ntr'}$ and a function $\ft{}:\gens\rightarrow\hat{\gens}$ for sets $\gens$ and $\hat{\gens}$, we define $\ft{}(\bnac)$ as the sequence in $\hat{\gens}^{\ntr'}$ with $[\ft{}(\bnac)]_t:=\ft{}(\nac_t)$ for all $t\in\na{\ntr'}$.

We define the maximum of the empty-set, $\max\emptyset$, equal to $0$.

\subsection{Measures and Integrals}
We note that, although the definitions in this subsection are about measures, the reader need not be proficient in measure theory to understand the paper.

 When we talk of a ``set'' in what follows, we implicitly assume that the set has a natural associated set of measurable subsets.

 Given a measure $\mu$ on a set $\gens$ and a function $f:\gens\rightarrow\mathbb{R}$ we let $\int_{\gens}f~[\mu]$ be the Lebesgue integral of $f$ with respect to measure $\mu$. Note that if $\gens$ is a finite set then $\int_{\gens}f~[\mu]=\sum_{x\in\gens}f(x)\mu(\{x\})$\,.

A measure $\mu$ on a set $\gens$ is a ``probability measure'' if and only if $\int_{\gens}f~[\mu]=1$, when $f:\gens\rightarrow\mathbb{R}$ is such that $f(x)=1$ for all $x\in\gens$. We let $\spx{\gens}$ be the set of all probability measures on $\gens$.

Given a set $\gens$ and some $x\in\gens$ we define $\dpm{\gens}{x}\in\spx{\gens}$ such that for all measurable subsets $\gens'$ of $S$ we have $[\dpm{\gens}{x}](\gens')=\indi{x\in\gens'}$. Informally, $\dpm{\gens}{x}$ is the probability measure in which all the probability mass in concentrated on $x$, so that any sample from $\dpm{\gens}{x}$ is equal to $x$ (with probability $1$). For all $f:\gens\rightarrow\mathbb{R}$ we have $\int_{\gens}f~[\dpm{\gens}{x}]=f(x)$\,.

Given a measure $\mu$ on a set $\gens$ and a value $a\in\rplus$ we define $a\mu$ to be the measure on $\gens$ defined by $[a\mu](\gens')=a\mu(\gens')$ for all measurable subsets $\gens'$ of $\gens$. Given, in addition, a measure $\mu'$ on $\gens$ we define $\mu+\mu'$ to be the measure on $\gens$ such that $[\mu+\mu'](\gens')=\mu(\gens')+\mu'(\gens')$ for all measurable subsets $\gens'$ of $\gens$.

Given sets $\gens$ and $\hat{\gens}$, a probability measure $\mu\in\spx{\gens}$ and a function $f:\gens\rightarrow\spx{\hat{\gens}}$ we define $\int_{\gens} f~[\mu]$ to be the probability measure $p$ on $\hat{\gens}$ defined by $p(\hat{\gens}')=\int_{\gens}[f(\cdot)](\hat{\gens}')~[\mu]$ where $[f(\cdot)](\hat{\gens}')$ is the function that maps $x\in\gens$ to $[f(x)](\hat{\gens}')$\,.

\section{Online Optimisation Games and the Conversion of Strategies}\label{oogsec}

Here we introduce the theoretical definitions and results that underpin the development of the algorithm. First, we define the notion of an ``online optimisation game'' (OOG) of which many problems in the subject of online learning are instances of. As we define an online optimisation game we also define the notion of a ``strategy'' for Learner and its ``generalised regret'' which measures its performance. After defining OOGs we define two ways in which to convert a class of strategies for one class of OOG into a strategy for another: specifically via ``transformations'' and our ``doubling trick''. The proofs of both theorems in this section are to be found in Section \ref{proofsec}

\subsection{Online Optimisation Games}

We now define an ``Online optimisation game'' (OOG). An OOG $\g$ is defined by the following:

\begin{itemize}
\item $\act{\g}$ is the set of Learner's possible actions.
\item $\lost{\g}$ is a set of ``loss'' functions from $\act{\g}$ into $\rplus$.
\item $\cplx{\g}$ is a ``complexity'' function from $\act{\g}$ into $\rplus$. Actions that have higher complexity are in some sense less natural.
\item $\ntri{\g}$ is the number of trials in the game. We will assume that all OOGs $\g$ in this paper have $\ntri{\g}:=\ntr$.
\end{itemize}

Informally, learning proceeds in trials $t=1,2,...\ntri{}$. On trial $t$:

\begin{enumerate}
\item Nature chooses a loss function $\nac_t\in\lost{\g}$ but does not reveal it to Learner
\item Learner (randomly) chooses an action $\lac_t\in\act{\g}$
\item $\nac_t$ is revealed to Learner
\item Learner suffers loss $\nac_t(\lac_t)$
\end{enumerate}

Given an online optimisation game $\g$ we make the following definitions. Note that we have dropped the subscript $\g$ on its elements.

\begin{definition}
A ``strategy'' is any $\bstrat\in\sfun{\lost{}^{\ntri{}}}{\spx{\act{}}}^{\ntri{}}$ in which, given $t\in\na{\ntri{}}$ and $\bnac,\bnac'\in\lost{}^{\ntri{}}$ with $\nac_{s}=\nac'_{s}$~ for all $s\in\na{t-1}$, we have $\strat{t}(\bnac)=\strat{t}(\bnac')$. Let $\strs{\g}$ be the set of all strategies.
\end{definition}

Informally, a strategy $\bstrat$ defines, on every trial $t$, a probability measure $\strat{t}(\bnac)$ from which $\lac_t$ is drawn. This probability depends on all of Nature's actions $\nac_{t'}$ for all $t'< t$ (since it cant depend of Nature's future selections). Hence, we have the condition that if $\nac_{s}=\nac'_{s}$ for all  $s\in\na{t-1}$, we have $\strat{t}(\bnac)=\strat{t}(\bnac')$.

The expected average loss of a strategy $\bstrat\in\strs{\g}$ when Nature's sequence of selections is $\bnac$ is then:
$$\avlsn{\bstrat}{\bnac}:=\frac{1}{\ntri{}}\sum_{t\in\na{\ntri{}}}\ints_{\act{}}\nac_t~[\strat{t}(\bnac)]$$
To evaluate the performance of a strategy $\bstrat\in\strs{\g}$ we compare its expected average loss to that of a strategy $\bcstr{\lac'}$ that always chooses $\lac_t=\lac'$ for some $\lac'\in\act{}$. Specifically, we define a constant strategy:
\begin{definition}
Given $\lac\in\act{}$ we define $\bcstr{\lac}\in\strs{\g}$ by $\cstr{\lac}{t}(\bnac):=\dpm{\lact}{\lac}$
for all $t\in\na{\ntr}$ and $\bnac\in\lost{}^{\ntri{}}$.
\end{definition}
and we define the ``generalised regret'' $\gnrs{\g}{\bstrat}:\rplus\times\rplus\rightarrow\rplus$ by:
\begin{definition}
Given $\bstrat\in\strs{\g}$ we define its ``generalised regret'' $\gnrs{\g}{\bstrat}\in\sfun{\rplus\times\rplus}{\rplus}$ by:
$$\gnrs{\g}{\bstrat}(\rl,\rk):=\max\{\avlsn{\bstrat}{\bnac}~|~\bnac\in\abs{\rl}{\rk}\}$$
where $\abs{\rl}{\rk}$ is the set of all $\bnac\in\lost{}^{\ntri{}}$ such that there exists $\lac\in\act{}$ with $\cplx{}(\lac)\leq\rk$ and $\avlsn{\bcstr{\lac}}{\bnac}\leq\rl$.
\end{definition}
where unambiguous we will drop the subscript and superscript from $\gnrs{\g}{\bstrat}$.

\subsection{Transformations}
In this paper we will, on two occasions, transform an OOG $\g$ into a special case of an OOG $\h$ (note that this does not mean that $\g$ is itself a special case of $\h$). A ``transformation'' $\tran$ from $\g$ into $\h$ is defined by the following:
\begin{itemize}
\item A function $\at{\tran}:\act{\g}\rightarrow\act{\h}$ with $\cplx{\h}(\at{\tran}(\lac))\leq\cplx{\g}(\lac)$ for all $\lac\in\act{\g}$\,.
\item A function $\pt{\tran}:\act{\h}\rightarrow\spx{\act{\g}}$\,.
\item A function $\ft{\tran}:\lost{\g}\rightarrow\lost{\h}$ such that for all $f\in\lost{\g}$ and $x\in\act{\h}$ we have:
$$[\ft{\tran}(f)](x)\geq\int_{\act{\g}}f~[\pt{\tran}(x)]\,.$$
\end{itemize}

Now suppose we have a transformation $\tran$ from $\g$ into $\h$. We define a function $\lm{\tran}{}:\rplus\times\rplus\rightarrow\rplus$ by:
$$\lm{\tran}(\rl,\rk):=\max\{[\ft{\tran}(f)](\at{\tran}(x))~|~(x,f)\in\act{\g}\times\lost{\g},~f(x)\leq\rl,~\cplx{\g}(x)\leq\rk\}$$
and we define the function $\stf{\tran}:\strs{\h}\rightarrow\strs{\g}$ by:
$$[\stf{\tran}(\bstrat)]_t(\bnac):=\int_{\act{\h}}\pt{\tran}~[\strat{t}(\ft{\tran}(\bnac))]$$
for all $\bstrat\in\strs{\h}$ and $\bnac\in\lost{\g}^{\ntr}$. Note that to sample $x$ from $[\stf{\tran}(\bstrat)]_t(\bnac)$ one simply samples $\hat{x}$ from $\strat{t}(\ft{\tran}(\bnac))$ and then samples $x$ from $\pt{\tran}(\hat{x})$.  The following theorem bounds the generalised regret of $\stf{\tran}$:

\begin{theorem}\label{t1r1}
Suppose we have OOGs $\g$ and $\h$ and a strategy $\bstrat$ for $\h$. Suppose also that we have a transformation, $\tran$, from $\g$ into $\h$ such that $\lm{\tran}$ is bounded above (pointwise) by a function $\hphi$ that is concave in its first argument. We then have:
$$\gnrs{\g}{\stf{\tran}(\bstrat)}(\rl,\rk)\leq\gnrs{\h}{\bstrat}(\hphi(\rl,\rk),\rk)\,.$$
\end{theorem}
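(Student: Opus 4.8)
The plan is to unwind the definition of generalised regret on the left-hand side and then chase each Nature-sequence $\bnac \in \lost{\g}^{\ntr}$ through the transformation $\tran$ into a corresponding sequence in $\lost{\h}^{\ntr}$, showing that whenever $\bnac$ lies in the witness set $\abs{\rl}{\rk}$ for $\g$, its image $\ft{\tran}(\bnac)$ lies in the witness set $\abs{\hphi(\rl,\rk)}{\rk}$ for $\h$. Concretely, I would first fix $\rl,\rk \in \rplus$ and $\bnac \in \abs{\rl}{\rk}$; by definition there is an action $\lac \in \act{\g}$ with $\cplx{\g}(\lac) \le \rk$ and $\avlsn{\bcstr{\lac}}{\bnac} = \frac{1}{\ntr}\sum_{t} \nac_t(\lac) \le \rl$. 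The natural candidate comparator in $\h$ is $\at{\tran}(\lac)$, whose complexity is $\le \cplx{\g}(\lac) \le \rk$ by the defining property of $\at{\tran}$.

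Next I would bound the average loss of the constant strategy $\bcstr{\at{\tran}(\lac)}$ against the sequence $\ft{\tran}(\bnac)$. For each trial $t$ we have $[\ft{\tran}(\nac_t)](\at{\tran}(\lac))$, and I want to compare this with $\nac_t(\lac)$ via the function $\lm{\tran}$. Setting $\rl_t := \nac_t(\lac)$, the pair $(\lac, \nac_t)$ witnesses that $[\ft{\tran}(\nac_t)](\at{\tran}(\lac)) \le \lm{\tran}(\rl_t, \rk) \le \hphi(\rl_t, \rk)$, using $\cplx{\g}(\lac) \le \rk$. Summing over $t$ and dividing by $\ntr$, the average loss of $\bcstr{\at{\tran}(\lac)}$ against $\ft{\tran}(\bnac)$ is at most $\frac{1}{\ntr}\sum_t \hphi(\rl_t, \rk)$. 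Here is where concavity of $\hphi$ in its first argument enters: by Jensen's inequality, $\frac{1}{\ntr}\sum_t \hphi(\rl_t,\rk) \le \hphi\!\left(\frac{1}{\ntr}\sum_t \rl_t, \rk\right) = \hphi\!\left(\avlsn{\bcstr{\lac}}{\bnac}, \rk\right) \le \hphi(\rl,\rk)$, the last step using monotonicity of $\hphi$ in its first argument — which I should note follows because $\hphi$ upper bounds $\lm{\tran} \ge 0$ and we only need the inequality to propagate, or I can simply extend $\hphi$ to be nondecreasing in its first argument without loss (this is the one bookkeeping point worth being careful about). Thus $\ft{\tran}(\bnac) \in \abs{\hphi(\rl,\rk)}{\rk}$.

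Finally I would connect the expected average loss of $\stf{\tran}(\bstrat)$ against $\bnac$ to that of $\bstrat$ against $\ft{\tran}(\bnac)$. By the definition of $\stf{\tran}$ and the defining inequality $[\ft{\tran}(f)](x) \ge \int_{\act{\g}} f~[\pt{\tran}(x)]$, for each trial
$$\ints_{\act{\g}} \nac_t~[[\stf{\tran}(\bstrat)]_t(\bnac)] = \ints_{\act{\h}}\left(\ints_{\act{\g}}\nac_t~[\pt{\tran}(\cdot)]\right)[\strat{t}(\ft{\tran}(\bnac))] \le \ints_{\act{\h}}[\ft{\tran}(\nac_t)]~[\strat{t}(\ft{\tran}(\bnac))],$$
where the middle equality is just Fubini/iterated integration over the two-stage sampling, and note $\ft{\tran}(\bnac)$ has the correct prefix-dependence so $\strat{t}(\ft{\tran}(\bnac))$ is well-defined. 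Averaging over $t$ gives $\avlsn{\stf{\tran}(\bstrat)}{\bnac} \le \avlsn{\bstrat}{\ft{\tran}(\bnac)}$. Since $\ft{\tran}(\bnac) \in \abs{\hphi(\rl,\rk)}{\rk}$, the right side is at most $\gnrs{\h}{\bstrat}(\hphi(\rl,\rk),\rk)$ by definition of generalised regret. Taking the maximum over all $\bnac \in \abs{\rl}{\rk}$ yields the claim.

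The main obstacle, such as it is, is getting the measure-theoretic manipulation in the last step exactly right — justifying the interchange of the two integrals (over $\act{\h}$ via $\strat{t}$ and over $\act{\g}$ via $\pt{\tran}$) and checking that $\int_{\act{\h}}\pt{\tran}~[\strat{t}(\cdot)]$ is genuinely the law of the two-stage sample — plus the minor care needed around monotonicity of $\hphi$ in its first argument so that Jensen plus the bound $\avlsn{\bcstr{\lac}}{\bnac}\le\rl$ can be chained. Everything else is a direct unwinding of definitions.
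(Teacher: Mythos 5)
Your proof is correct and follows essentially the same route as the paper's: the Fubini/iterated-integration step to show $\avlsn{\stf{\tran}(\bstrat)}{\bnac}\leq\avlsn{\bstrat}{\ft{\tran}(\bnac)}$, then exhibiting $\at{\tran}(\lac)$ as a witness that $\ft{\tran}(\bnac)\in\abs{\hphi(\rl,\rk)}{\rk}$ via the bound on $\lm{\tran}$ and Jensen's inequality, only with the two halves presented in the opposite order. You are also slightly more careful than the paper's proof about the need for $\hphi$ to be non-decreasing in its first argument (which, as you half-note, does follow automatically from concavity plus non-negativity on $\rplus$).
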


\subsection{A General Doubling Trick}\label{GDTss}

In this section we introduce a generalisation of doubling trick which was introduced in \cite{Herbster16}. However, our analysis is sharper, giving us significantly smaller loss bounds.

This subsection deals with complexity functions that evaluate to infinity (i.e. equal to $\ity$) on some actions. To get some intuition behind infinite complexities we advise the reader to first read Section \ref{infclsec}\,.

In this section we consider a general OOG $\g$ with $\min\{\cplx{\g}(\lac)~|~\lac\in\act{\g}\}=1$ and such that there exists a function $\zfun\in\lost{\g}$ in which $\zfun(\lac):=0$ for all $\lac\in\act{\g}$. As stated in the definitions, we use $\ity$ as a surrogate for infinity, taking the limit $\ity\rightarrow\infty$. We will also drop the subscript $\g$ from $\act{\g}$ and $\lost{\g}$.

First, given $\trsh\in\mathbb{R}^+$ we denote by $\cplx{\g}^{\trsh}$ the function from $\act{\g}$ into $\mathbb{R}^+$ defined by:
$$\cplx{\g}^{\trsh}(\lac):=\indi{\cplx{\g}(\lac)>\trsh}\ity$$
and we define the OOG $\g^{\trsh}$ by:
\begin{itemize}
\item $\act{\g^{\trsh}}:=\act{}$\,.
\item $\lost{\g^{\trsh}}:=\lost{}$\,.
\item $\cplx{\g^{\trsh}}:=\cplx{\g}^{\trsh}$\,.
\end{itemize}

Now suppose we have some $\nosf, \nosg\in\rplus$ and, for all $\trsh\in\mathbb{R}^+$, a strategy $\bstrat^{\trsh}$ for the OOG $\g^{\trsh}$ which has a generalised regret of:
$$\gnrs{\g^{\trsh}}{\bstrat^{\trsh}}(\rl,\rk)\leq\nosf\rl+\nosg\trsh+\rk\,.$$

We will now construct a strategy $\bstrat^{\dt}$ for the OOG $\g$ and will bound its generalised regret. We start with the following definitions:

\begin{definition}
Given $\trsh\in\mathbb{R}^+$ we define:
$$\maxl{\trsh}:=\max\left\{\ints_{\act{}}\nac_{t}[\strat{t}^{\trsh}(\bnac)]~\biggr|~\bnac\in\lost{}^{\ntri{}},t\in\na{\ntri{}}\right\}\,. $$
\end{definition}

\begin{definition}
Given $\bnac\in\lost{}^{\ntri{}}$ and $\tau,\tau'\in\na{\ntri{}}$ with $\tau'>\tau$ we define $\shift{\bnac}{\tau,\tau'}\in\lost{}^{\ntri{}}$ by:
\begin{itemize}
\item $\shift{\nac_{t}}{\tau,\tau'}:=\nac_{t+\tau}$~ for all $t\in\na{\tau'-\tau}$\,.
\item $\shift{\nac_{t}}{\tau,\tau'}:=\zfun$~ for all $t\in\na{\ntri{}}$ with $t>\tau'-\tau$.
\end{itemize}
\end{definition}

We consider a fixed choice $\bnac\in\lost{}^{\ntri{}}$, of Nature's selections. We now define the strategy $\bstrat^{\dt}$:

\begin{definition}\label{dtdef}
We define the strategy $\bstrat^{\dt}$, as well as sequences $\bs{\trsh},\bs{l}\in\mathbb{R}^{\ntri{}}$ and $\bs{\tau}\in\mathbb{N}^{\ntri{}}$, iteratively as follows:
\begin{itemize}
\item $\ttr{1}:=1$\,.
\item $\tatr{1}:=0$\,.
\item $\strat{1}^{\dt}(\bnac):=\strat{1}^{\ttr{1}}(\shift{\bnac}{{0, \ntr}})$\,.
\item $\ltr{1}:=\ints_{\act{}}\nac_{1}~[\strat{1}^{\dt}(\bnac)]$\,.
\end{itemize}
For all $t\in\na{\ntri{}-1}$ we define the following:
\begin{itemize}
\item If $\ltr{t}< 2\ttr{t}\nosg$ then:
\begin{itemize}
\item $\ttr{t+1}:=\ttr{t}$\,.
\item $\tatr{t+1}:=\tatr{t}$\,.
\item $\strat{t+1}^{\dt}(\bnac):=\strat{t+1-\tatr{t}}^{\ttr{t}}(\shift{\bnac}{{\tatr{t},\ntr}})$\,.
\item $\ltr{t+1}:=\ltr{t}+\ints_{\act{}}\nac_{t+1}~[\strat{t+1}^{\dt}(\bnac)]$\,.
 \end{itemize}
 \item If $\ltr{t}\geq 2\ttr{t}\nosg$ then:
\begin{itemize}
\item $\ttr{t+1}:=2\ttr{t}$\,.
\item $\tatr{t+1}:=t$\,.
\item $\strat{t+1}^{\dt}(\bnac):=\strat{1}^{\ttr{t+1}}(\shift{\bnac}{{t,\ntr}})$\,.
\item $\ltr{t+1}:=\ints_{\act{}}\nac_{t+1}~[\strat{t+1}^{\dt}(\bnac)]$\,.
 \end{itemize}
\end{itemize}
\end{definition}
The next theorem gives bounds the general regret of the strategy $\bstrat^{\dt}$.
\begin{theorem}\label{dtth}
$\bstrat^{\dt}$ has a generalised regret bounded by:
$$\gnrs{\g}{\bstrat^{\dt}}(\rl,\rk)\leq 5\nosf\rl+8\nosg\rk+\frac{1}{\ntr}\sum_{i=1}^{\lceil\log_2(\rk)\rceil}\maxl{2^i}\,.$$
\end{theorem}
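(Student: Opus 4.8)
The plan is to analyse the segmented run of $\bstrat^{\dt}$ by bookkeeping the losses segment by segment. Fix Nature's sequence $\bnac$ and fix a comparator $\lac\in\act{}$ with $\cplx{\g}(\lac)\leq\rk$ and $\avlsn{\bcstr{\lac}}{\bnac}\leq\rl$. By Definition \ref{dtdef}, the trials $\na{\ntri{}}$ are partitioned into consecutive segments, where a new segment begins precisely when the running loss $\ltr{t}$ reaches the threshold $2\ttr{t}\nosg$; on the $i$-th segment (indexing so that $\trsh=2^{i}$, or rather $\trsh$ doubles each time starting from $1$) the strategy $\bstrat^{2^i}$ for $\g^{2^i}$ is run from scratch on the shifted sequence $\shift{\bnac}{\tau,\ntr}$ where $\tau$ is the trial at which the segment started. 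First I would observe that, because $\ltr{t}$ is reset to a single-trial loss at the start of each segment and only crosses the threshold once, the total loss accumulated \emph{within} a segment using threshold $\trsh$ is at most $2\trsh\nosg + \maxl{\trsh}$: the ``$2\trsh\nosg$'' from the stopping rule and one extra ``$\maxl{\trsh}$'' for the trial on which the threshold is exceeded. Summing the $2\trsh\nosg$ parts over the geometric sequence of thresholds $1,2,4,\dots$ up to the final threshold gives a bound that is a constant times (final threshold)$\cdot\nosg$, and summing the $\maxl{\trsh}$ parts gives exactly the $\frac{1}{\ntr}\sum_{i=1}^{\lceil\log_2(\rk)\rceil}\maxl{2^i}$ term after dividing by $\ntr$.

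The crux is to control how large the final threshold can get, i.e.\ how many doublings occur. The key step is: as long as the current threshold $\trsh$ satisfies $\trsh\geq\rk=\cplx{\g}(\lac)$, the comparator $\lac$ is an admissible action in $\g^{\trsh}$ (since $\cplx{\g}^{\trsh}(\lac)=0<\ity$), so we may invoke the assumed regret bound $\gnrs{\g^{\trsh}}{\bstrat^{\trsh}}(\rl',\rk')\leq\nosf\rl'+\nosg\trsh+\rk'$ against $\lac$ on the shifted sequence. Here I would use the observation that the shifted sequence $\shift{\bnac}{\tau,\ntr}$ has the comparator $\lac$ achieving an average loss no larger than $\rl\ntri{}/(\ntri{}-\tau)\le$ something controllable — actually, since $\zfun$ contributes zero and $\shift{\bnac}{\tau,\ntr}$ just pads with zeros, $\sum_t \nac_t(\lac)$ over the shifted sequence equals the tail sum $\sum_{t>\tau}\nac_t(\lac)\le \ntri{}\rl$, so the per-game average-loss parameter for the comparator is still at most $\rl$ (using $\ntri{\g^{\trsh}}=\ntr$). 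Hence the \emph{total} loss of $\bstrat^{\trsh}$ on a segment started with threshold $\trsh\ge\rk$ is at most $\ntr(\nosf\rl+\nosg\trsh+\rk)$. I would then argue that such a segment cannot trigger another doubling ``too soon'': combined with the within-segment crossing bound this shows the threshold never needs to exceed roughly $2\rk$ in the regime where the comparator is admissible, so the number of doublings past $\rk$ is $O(1)$ and the final threshold is $O(\rk)$. This is the step I expect to be the main obstacle — reconciling the shifted-sequence regret guarantee with the stopping rule to pin down that the threshold stabilises at $\Theta(\rk)$, and being careful that the ``$5\nosf\rl$'' and ``$8\nosg\rk$'' constants come out exactly (they should accumulate from: the geometric sum $1+2+4+\cdots$ giving a factor near $2$ on $\nosg\rk$, doubled again for the overshoot, plus the $\nosf\rl$ contributions from the $O(1)$ segments run past threshold $\rk$, plus the $\rk$ terms).

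Finally I would assemble the pieces: bound the cumulative loss of $\bstrat^{\dt}$ over all trials by (loss on segments run with threshold $<\rk$) $+$ (loss on the $O(1)$ segments with threshold $\ge\rk$). For the first group the within-segment bound $2\trsh\nosg+\maxl{\trsh}$ suffices and the $2\trsh\nosg$ terms geometric-sum to $O(\nosg\rk)$; for the second group the regret guarantee gives $O(\nosf\rl+\nosg\rk+\rk)$ per segment with $O(1)$ segments; the $\maxl{2^i}$ terms collect over all $i$ from $1$ to $\lceil\log_2(\rk)\rceil$. Dividing by $\ntri{}=\ntr$ and taking the maximum over admissible $\bnac$ (equivalently over admissible comparators $\lac$) yields $\gnrs{\g}{\bstrat^{\dt}}(\rl,\rk)\leq 5\nosf\rl+8\nosg\rk+\frac{1}{\ntr}\sum_{i=1}^{\lceil\log_2(\rk)\rceil}\maxl{2^i}$, as claimed. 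The only genuinely delicate bookkeeping is tracking the constants through the doubling, which is why I would set up the segment-loss inequality carefully at the outset rather than optimising constants at the end.
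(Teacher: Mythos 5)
Your high-level plan — segment the run, bound within-segment loss by the stopping rule (plus one overshoot trial contributing $\maxl{\trsh}$) for thresholds below $\rk$, and invoke the assumed regret guarantee of $\bstrat^{\trsh}$ on the shifted sequences once $\trsh\geq\rk$ — matches the paper's decomposition (Lemmas~\ref{endeq1l}--\ref{dtlem2}). The shifted-sequence observation that padding with $\zfun$ keeps the comparator's total loss at most $\ntr\rl$ is also exactly right.

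There is, however, a genuine gap at the step you yourself flag as the crux. You assert that ``the threshold never needs to exceed roughly $2\rk$'' so that ``the number of doublings past $\rk$ is $O(1)$ and the final threshold is $O(\rk)$.'' This is false. The stopping rule triggers when the segment's accumulated loss reaches $2\nosg\trsh\ntr$, and the regret guarantee only caps the segment loss at (roughly) $\ntr(\nosf\rl+\nosg\trsh)$ once $\trsh\geq\rk$; these conflict only when $\nosg\trsh\gtrsim\nosf\rl$. So if $\rl$ is large relative to $\nosg\rk/\nosf$, the threshold can double arbitrarily many times past $\rk$, up to $\trsh\approx\nosf\rl/\nosg$ — not $O(\rk)$. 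Consequently your plan to geometrically sum ``$2\trsh\nosg$'' up to an $O(\rk)$ final threshold, and to attribute ``$\nosf\rl$ contributions'' to ``$O(1)$ segments,'' does not go through as stated (and the $\maxl{2^i}$ terms would also run past $\lceil\log_2(\rk)\rceil$ if you applied the crossing bound on those late segments).

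The missing idea is the paper's Lemma~\ref{dtlem3}: whenever the final threshold satisfies $\cplx{\g}(\lac)\leq 2^{j-1}$, the segment that promoted the threshold from $2^{j-1}$ to $2^{j}$ must have accumulated loss at least $2\nosg 2^{j-1}\ntr$, while the regret guarantee caps that segment's loss by $\nosg 2^{j-1}\ntr + \nosf\ntr\avlsn{\bcstr{\lac}}{\bnac}$; subtracting gives $\nosg 2^{j-1}\leq\nosf\avlsn{\bcstr{\lac}}{\bnac}\leq\nosf\rl$. The large term $\nosg 2^{j+1}=4\nosg 2^{j-1}$ therefore does not get charged to $\nosg\rk$ at all — it gets charged to $4\nosf\rl$. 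Combined with the single $\nosf\rl$ coming directly from the regret guarantee on the late segments (the tail sums over $\nac_t(\lac)$ telescope to one $\ntr\rl$, not a per-segment $\rl$), this is exactly where $5\nosf\rl$ comes from, and only the early segments (thresholds below $2^k\approx\rk$) are bounded by the stopping rule, yielding the $8\nosg\rk$ and the $\maxl{2^i}$ sum truncated at $\lceil\log_2(\rk)\rceil$. Without this charging argument, the $\nosg\cdot(\text{final threshold})$ term in your plan is not controllable in terms of $\rk$ alone, and the claimed constants cannot be recovered.
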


\section{The Development of the Strategy}\label{devstratsec}

In this section we develop a strategy $\bstrat^{\onfl(N,C,D)}$ for the OOG $\flg(N,C,D)$ defined by:

\begin{itemize}
\item $\act{\flg}:=\pows{\na{\nsi}}\setminus\{\emptyset\}$\,.
\item $\cplx{\flg}(\sels{})=|\sels{}|$ for all $\sels{}\in\act{\flg}$\,.
\item $\lost{\flg}:=\left\{\fll{\bocv{}}{\bccm{}}~ |~ \bocv{}\in[0,\mcos]^{\nsi}~\operatorname{and}~ \bccm{}\in[0,\mdis]^{\nsi}\right\}$\,.
\end{itemize}

where, for $\bocv{}\in[0,\mcos]^{\nsi}$, $\bccm{}\in[0,\mdis]^{\nsi}$ and $\sels{}\in\act{\flg}$ we have:
$$\fll{\bocv{}}{\bccm{}}(\sels{})=\sum_{i\in\sels{}}\ocv{}{i}+\min_{i\in\sels{}}\ccm{}{i}\,.$$

Our strategy has a generalised regret $\gnrs{}{}$ bounded as:

$$\gnrs{}{}(\rl,\rk)\in\mathcal{O}\left(\rl\ln(\ntr) +\rk(\mcos+\mdis)\sqrt{\frac{\ln(\nsi)}{T}} \right)\,.$$

To construct our strategy we will move between different OOGs: using the strategy of one OOG to build, via transformations or the doubling trick, a strategy for the next. The sequence of OOGs is as follows:
\begin{enumerate}
\item $\cod$. This is the classic game of online convex optimisation over a simplex.
\item $\flo$. This game is the same as $\flg$ except that it has a parameter $\fns$ such the complexity of a set $\spf$ is $\ity\indi{|\spf|\neq\fns}$.
\item $\flt$. This game is the same as $\flg$ except that it has a parameter $\fns$ such the complexity of a set $\spf$ is $\ity\indi{|\spf|>\fns}$.
\item $\flg$.
\end{enumerate}
For each algorithm $\ona$, described in Section \ref{algsec}, we will, in this section, define and analyse a strategy $\bstrat^{\ona}$, for the above OOG $\tilde{\ona}$, which is implemented by $\ona$. The proofs of all theorems in this section are to be found in Section \ref{proofsec}\,.

\subsection{The Game $\cod$}
We shall approach the facility location game via the well studied OOG $\gneg{N}{G}$ for some $N\in\mathbb{N}$ and $G\in\rplus$. In this subsection we shall refer to $\gneg{N}{G}$ as $\cod$, which is defined by:
\begin{itemize}
\item $\act{\cod}:=\spx{N}$\,.
\item $\lost{\cod}$ is the set of (differentiable) convex functions $f:[0,1]^{N}\rightarrow\mathbb{R}$ in which $\pder{i}{f}(\bs{x})\in[0,G]$ for all $\bs{x}\in\spx{N}$ and $i\in\na{N}$\,.
\item $\cplx{\cod}(\bs{x}):=0$ for all $\bs{x}\in\spx{N}$\,.
\end{itemize}
We now define the Exponentiated gradient strategy $\bnstrat{\oneg(N,G)}$ for $\cod$. We first define $\lr:=\frac{1}{G}\sqrt{\frac{\log(N)}{\ntr}}$. Given some $\bnac\in\lost{\cod}^{\ntr}$ we define $\nstrat{\oneg(N,G)}{t}({\bnac})$ as follows:

Define $\bwtf{1}:=\bs{1}/N$ and for all $t\in\na{\ntr}$ and $i\in\na{N}$ define:
$$\wtf{t+1}_i:=\frac{\wtf{t}_i\exp(-\lr\pgrd{f_t}{\bwtf{t}}{i})}{\sum_{j\in\na{N}}\wtf{t}_j\exp(-\lr\pgrd{f_t}{\bwtf{t}}{j})}\,.$$
We then define:
$$\nstrat{\oneg(N,G)}{t}({\bnac}):=\dpm{\spx{N}}{\bwtf{t}}\,.$$
 The following theorem is a well known result.
\begin{theorem}\label{t1r2}
The strategy $\bnstrat{\oneg(N,G)}$ has a generalised regret $\genr$ which is bounded as:
$$\genr(\rl,\rk)\leq \rl+G\sqrt{2\ln(N)/\ntr}\,.$$
\end{theorem}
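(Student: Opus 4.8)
The plan is to recognise this as the classical regret bound for exponentiated gradient / Hedge on the simplex, repackaged in the generalised-regret language, and to prove it by the standard potential-function argument.

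First I would strip away the OOG formalism. For $\cod=\gneg{N}{G}$ the complexity function is identically zero, so the constraint $\cplx{}(\lac)\le\rk$ in the definition of $\abs{\rl}{\rk}$ is vacuous and $\rk$ plays no role --- consistent with its absence on the right-hand side. Since $\nstrat{\oneg(N,G)}{t}(\bnac)=\dpm{\spx N}{\bwtf t}$ is a point mass, $\avlsn{\bnstrat{\oneg(N,G)}}{\bnac}=\frac1\ntr\sum_{t\in\na\ntr}f_t(\bwtf t)$ where I write $\bnac=(f_1,\dots,f_\ntr)$, and likewise $\avlsn{\bcstr{\bs x}}{\bnac}=\frac1\ntr\sum_t f_t(\bs x)$. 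Hence the theorem is equivalent to the pointwise statement: for every $\bnac\in\lost{\cod}^\ntr$ and every $\bs x^*\in\spx N$,
$$\sum_{t\in\na\ntr}\bigl(f_t(\bwtf t)-f_t(\bs x^*)\bigr)\le G\sqrt{2\ntr\ln N}.$$

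Next I would linearise. Each $f_t\in\lost{\cod}$ is convex and differentiable with $\pgrd{f_t}{\bs x}{i}\in[0,G]$ on $\spx N$, so with $\bgv t:=\grd{f_t}{\bwtf t}\in[0,G]^N$ convexity gives $f_t(\bwtf t)-f_t(\bs x^*)\le\bgv t\cdot(\bwtf t-\bs x^*)$; it therefore suffices to bound $\sum_t\bgv t\cdot(\bwtf t-\bs x^*)$, and the EG update $\wtf{t+1}_i\propto\wtf t_i\exp(-\lr\gv t i)$ is exactly multiplicative weights driven by the loss vectors $\bgv 1,\dots,\bgv\ntr$. Now run the standard argument: put $\Phi_t:=\tfrac1N\sum_{i\in\na N}\exp(-\lr\sum_{s\le t}\gv s i)$ and $\Phi_0:=1$; the identity $\wtf t_i=\tfrac1N\exp(-\lr\sum_{s<t}\gv s i)/\Phi_{t-1}$ gives $\Phi_t/\Phi_{t-1}=\sum_i\wtf t_i\exp(-\lr\gv t i)$, the Hoeffding-lemma bound (using $\gv t i\in[0,G]$) gives $\ln(\Phi_t/\Phi_{t-1})\le-\lr\,\bgv t\cdot\bwtf t+\lr^2G^2/8$, and $x^*_i\le1$ followed by Jensen's inequality for $\exp$ gives $\ln\Phi_\ntr\ge-\ln N-\lr\sum_t\bgv t\cdot\bs x^*$. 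Telescoping, combining, and substituting $\lr=\tfrac1G\sqrt{\ln N/\ntr}$ yields $\sum_t\bgv t\cdot(\bwtf t-\bs x^*)\le\frac{\ln N}{\lr}+\frac{\ntr\lr G^2}{8}=\tfrac98 G\sqrt{\ntr\ln N}\le G\sqrt{2\ntr\ln N}$ (since $81/64<2$), which with the linearisation and the first step proves the theorem.

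The main obstacle is not conceptual but a matter of constants. With the learning rate $\lr=\tfrac1G\sqrt{\ln N/\ntr}$ that the algorithm actually uses, the cruder second-order estimate $e^{-a}\le1-a+\tfrac12a^2$ on the potential ratio only gives $\tfrac32 G\sqrt{\ln N/\ntr}$ for the average regret, which is larger than the claimed $\sqrt2\,G\sqrt{\ln N/\ntr}$; recovering the stated constant requires the sharper Hoeffding-lemma bound (variance proxy $(b-a)^2/8$ for a $[0,G]$-valued quantity) used above. Everything else --- the collapse of the generalised regret to ordinary regret because $\cplx{\cod}\equiv0$ and the strategy is a point mass, the convexity linearisation, and the telescoping --- is routine, and one could alternatively just cite a standard reference such as \cite{Kivinen97} or \cite{Shalev-Shwartz12} for the pointwise bound.
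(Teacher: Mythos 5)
Your proof is correct and follows the same overall route as the paper's: collapse the generalised regret to ordinary regret (since $\cplx{\cod}\equiv0$ and the strategy is a point mass), linearise by convexity, and telescope a potential driven by the exponentiated-gradient update. The one place you differ is the per-step estimate of the potential ratio, and that difference matters. The paper works with the relative-entropy potential $\rele{\bwst}{\bwtf t}$ and bounds the drop per step using $e^{-x}\le 1-x+x^2/2$ followed by $\ln(1+x)\le x$, arriving at
$$\sum_{t\in\na{\ntr}}\bigl(f_t(\bwtf t)-f_t(\bwst)\bigr)\le\frac{\ln N}{\lr}+\frac12\lr G^2\ntr\,,$$
and then asserts that with $\lr=\tfrac1G\sqrt{\ln N/\ntr}$ this equals $G\sqrt{2\ntr\ln N}$. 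In fact it equals $\tfrac32\,G\sqrt{\ntr\ln N}$, and since $\tfrac32>\sqrt2$ the paper's own final step is an arithmetic error: the displayed chain of inequalities does not establish the claimed constant with the learning rate the algorithm actually uses. (The identity the paper seems to have had in mind is the optimised value of $\tfrac{\ln N}{\lr}+\tfrac12\lr G^2\ntr$, which is $G\sqrt{2\ntr\ln N}$, but that optimum is attained at $\lr=\tfrac1G\sqrt{2\ln N/\ntr}$, not at the $\lr$ defined in Algorithm \ref{alg2} and in Appendix \ref{devstratsec}.) Your use of Hoeffding's lemma puts an $\lr^2G^2/8$ rather than $\lr^2G^2/2$ term in the potential ratio, giving $\tfrac98\,G\sqrt{\ntr\ln N}\le G\sqrt{2\ntr\ln N}$, so your derivation does prove the theorem as stated with the stated learning rate. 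Your observation that the cruder second-order estimate is insufficient here is exactly right, and your proof is therefore not merely an alternative but an actual repair of the paper's argument; the authors should either switch to the Hoeffding bound as you do, or change $\lr$ to $\tfrac1G\sqrt{2\ln N/\ntr}$, or weaken the constant in the theorem statement to $\tfrac32$.
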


\subsection{The Game $\flo$}

In this section, given some $\fns\in\na{N}$, we consider the OOG $\flo(N, C, D, \fns)$ which is identical to $\fl$ except that:
$$\cplx{\flo(N, C, D, \fns)}(\spf):=\ity\indi{|\spf|\neq\fns}$$ for all $\spf\in\act{\flg}$. In this subsection we will refer to $\flo(N, C, D, \fns)$ as $\flo$. Letting $\tfns:=\fns\lceil\ln(\ntr)/2\rceil$ we will, in this section, create and analyse a transformation $\tro$ from $\flo$ into $\cod(N,(C+D)\tfns)$. We first define two functions. We define a function $\ord:\mathbb{R}^N\times\na{N}\rightarrow\na{N}$ such that, for all $\bdv\in\mathbb{R}^N$, we have:
\begin{itemize}
\item $\{\ord(\bdv,i):i\in\na{N}\}=\na{N}$\,.
\item $\dv{\ord(\bdv,i+1)}\leq\dv{\ord(\bdv,i)}~~~\forall i\in\na{N-1}$\,.
\end{itemize}
and we define the function $\mts{}:\na{N}^{\tfns}\rightarrow\act{\flo}$ such that for all $\biid\in\na{N}^{\tfns}$ we have:
$$\mts{}(\biid):=\{i\in\na{\nsi}~|~\exists j\in\na{\tfns}:\iid{j}=i\}\,.$$
We now define the transformation $\tro$ by:
$$\at{\tro}(\spf)_i:=\indi{i\in\spf}/\fns~~~~~~\forall i\in\na{N},~ \spf\in\act{\flo}:|\spf|=\fns\,.$$
$$\at{\tro}(\spf)_i~\operatorname{is~arbitrary}~~~~~~\forall i\in\na{N},~ \spf\in\act{\flo}:|\spf|\neq\fns\,.$$
$$\pt{\tro}(\bs{w}):=\sum_{\biid\in\na{\nsi}^{\tfns}}\left(\prod_{i\in\na{\tfns}}w_{\iid{i}}\right)\dpm{\act{\flf{\fns}}}{\mts{}(\biid)}\,.$$
$$[\ft{\tro}(\fll{\bocv{}}{\bdv})](\bs{w}):=\tfns\bocv{}\cdot\bs{w}+\dv{\ord(\bdv,N)}+\sum_{i\in\na{N-1}}\left(\dv{\ord(\bdv,i)}-\dv{\ord(\bdv,i+1)}\right)\left(\sum_{j\in\na{i}}w_{\ord(\bdv,j)}\right)^{\tfns}\,.$$
for all $\bs{w}\in\act{\cod(N,(C+D)\tfns)}$ and $(\bocv{},{\bdv})\in[0,C]^N\times[0,D]^N$.
Note that, given $\bs{w}\in\spx{N}$, it is easy to sample from $\pt{\tro}(\bs{w})$: just sample $\tfns$ points $i$ uniformly at random, and with replacement, with probability $w_i$.  The fact that $\tro$ is a true transformation follows from the following two theorems:
\begin{theorem}\label{fnst1}
For all $(\bocv{},{\bdv})\in[0,C]^N\times[0,D]^N$ we have that $\ft{\tro}(\fll{\bocv{}}{\bdv})\in\lost{\co{N,(C+D)\tfns}}$\,.
\end{theorem}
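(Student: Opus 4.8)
The plan is to verify, directly from the explicit formula for $f:=\ft{\tro}(\fll{\bocv{}}{\bdv})$, the three conditions that define membership of $f$ in $\lost{\co{N,(C+D)\tfns}}$: that $f$ is differentiable on $[0,1]^N$, that $f$ is convex on $[0,1]^N$, and that $\pder{i}f(\bs{w})\in[0,(C+D)\tfns]$ for every $\bs{w}\in\spx{N}$ and $i\in\na{N}$. To keep the bookkeeping light I would first fix $\bdv$, abbreviate $\ord(i):=\ord(\bdv,i)$ so that $\dv{\ord(1)}\geq\dv{\ord(2)}\geq\cdots\geq\dv{\ord(N)}\geq 0$, and introduce the partial sums $S_i(\bs{w}):=\sum_{j\in\na{i}}w_{\ord(j)}$; each $S_i$ is a linear functional of $\bs{w}$ which is nonnegative on $[0,1]^N$ and satisfies $S_i(\bs{w})\leq 1$ on $\spx{N}$. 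With this notation $f(\bs{w})=\tfns\,\bocv{}\cdot\bs{w}+\dv{\ord(N)}+\sum_{i\in\na{N-1}}(\dv{\ord(i)}-\dv{\ord(i+1)})\,S_i(\bs{w})^{\tfns}$.

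Differentiability is immediate: since $\tfns=\fns\lceil\ln(\ntr)/2\rceil$ is a positive integer, $f$ is a polynomial in the coordinates of $\bs{w}$. For convexity I would argue term by term. The term $\tfns\,\bocv{}\cdot\bs{w}$ is linear and $\dv{\ord(N)}$ is constant; for each $i$, the coefficient $\dv{\ord(i)}-\dv{\ord(i+1)}$ is nonnegative because $\ord$ sorts $\bdv$ in nonincreasing order, and $\bs{w}\mapsto S_i(\bs{w})^{\tfns}$ is convex on $[0,1]^N$, being the composition of the convex nondecreasing function $t\mapsto t^{\tfns}$ on $\rplus$ (note $\tfns\geq 1$) with the affine map $S_i$, whose range on $[0,1]^N$ lies in $\rplus$. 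A nonnegative combination of convex functions is convex, so $f$ is convex on $[0,1]^N$.

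For the gradient bound, fix $i\in\na{N}$ and let $m$ be the unique index with $\ord(m)=i$. From $\pder{i}S_k=\indi{m\leq k}$, differentiating term by term gives $\pder{i}f(\bs{w})=\tfns\,\ocv{}{i}+\tfns\sum_{k=m}^{N-1}(\dv{\ord(k)}-\dv{\ord(k+1)})\,S_k(\bs{w})^{\tfns-1}$, where the sum is empty (hence $0$) when $m=N$. On $[0,1]^N$ every summand is nonnegative (opening costs are $\geq 0$, the differences of sorted connection costs are $\geq 0$, and $S_k^{\tfns-1}\geq 0$), which yields $\pder{i}f(\bs{w})\geq 0$. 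For the upper bound I would restrict to $\bs{w}\in\spx{N}$, where $S_k(\bs{w})^{\tfns-1}\leq 1$; then the sum telescopes to $\dv{\ord(m)}-\dv{\ord(N)}\leq D$, and using $\ocv{}{i}\leq C$ we get $\pder{i}f(\bs{w})\leq\tfns C+\tfns D=(C+D)\tfns$. The only points that need genuine care — and so the closest thing to an obstacle — are (i) checking that $S_i^{\tfns}$ is convex on the whole box $[0,1]^N$ rather than merely on the simplex, which holds because $S_i\geq 0$ there so we never leave the region where $t\mapsto t^{\tfns}$ is convex, and (ii) remembering that the per-coordinate gradient bound is only required on $\spx{N}$, which is precisely what licenses the estimate $S_k\leq 1$.
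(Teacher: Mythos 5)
Your proof is correct and follows essentially the same route as the paper's: convexity via nonnegative coefficients times convex compositions $S_i^{\tfns}$ plus a linear part, and the gradient bound via differentiating, bounding $S_k^{\tfns-1}\leq 1$ on $\spx{N}$, and telescoping the differences of the sorted connection costs. You are in fact slightly more careful than the paper in explicitly checking that the partial derivatives are nonnegative (the definition of $\lost{}$ requires $\pder{i}f(\bs{w})\in[0,G]$, not merely bounded in magnitude), a point the paper's proof glosses over.
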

\begin{theorem}\label{fnst2}
For all $(\bocv{},{\bdv})\in[0,C]^N\times[0,D]^N$ and $\bs{w}\in\spx{N}$ we have:
$$[\ft{\tro}(\fll{\bocv{}}{\bdv})](\bs{w})\geq\int_{\act{\flo}}\fll{\bocv{}}{\bdv}~[\pt{\tro}(\bs{w})]\,.$$
\end{theorem}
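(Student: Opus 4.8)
The plan is to read the right‑hand side as an expectation and evaluate it. Unwinding the definition of $\pt{\tro}$ and of integration against a sum of point masses, if we let $\biid=(\iid{1},\dots,\iid{\tfns})$ be a tuple of $\tfns$ independent draws from the distribution on $\na{N}$ with $\prob{\iid{j}=i}=w_i$, then
$$\int_{\act{\flo}}\fll{\bocv{}}{\bdv}~[\pt{\tro}(\bs{w})]=\ex{\fll{\bocv{}}{\bdv}(\mts{}(\biid))}=\ex{\sum_{i\in\mts{}(\biid)}\ocv{}{i}}+\ex{\min_{i\in\mts{}(\biid)}\ccm{}{i}}\,,$$
where we used linearity of expectation and the definition of $\fll{\bocv{}}{\bdv}$, and everything is well defined since $\tfns\ge 1$ forces $\mts{}(\biid)\neq\emptyset$. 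I would then bound the two expectations separately and show their sum is at most $[\ft{\tro}(\fll{\bocv{}}{\bdv})](\bs{w})$, the opening‑cost expectation matching the term $\tfns\bocv{}\cdot\bs{w}$ and the connection‑cost expectation matching the remaining telescoped sum in the definition of $\ft{\tro}(\fll{\bocv{}}{\bdv})$.

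For the opening cost, the key observation is that every element of $\mts{}(\biid)$ equals $\iid{j}$ for at least one $j$, so since each $\ocv{}{i}\ge 0$ we get $\sum_{i\in\mts{}(\biid)}\ocv{}{i}\le\sum_{j\in\na{\tfns}}\ocv{}{\iid{j}}$; taking expectations and using that each $\iid{j}$ has law $\bs{w}$ gives $\ex{\sum_{i\in\mts{}(\biid)}\ocv{}{i}}\le\tfns\,\bocv{}\cdot\bs{w}$. For the connection cost I would abbreviate $\ord(i):=\ord(\bdv,i)$, so that $\dv{\ord(1)}\ge\dots\ge\dv{\ord(N)}$, and note that because this sequence is non‑increasing, $\min_{i\in\mts{}(\biid)}\ccm{}{i}=\dv{\ord(k)}$, where $k$ is the largest index with $\ord(k)\in\mts{}(\biid)$ (well defined since $\mts{}(\biid)\neq\emptyset$, and the identity holds even if $\bdv$ has ties). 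Telescoping,
$$\dv{\ord(k)}=\dv{\ord(N)}+\sum_{i\in\na{N-1}}\left(\dv{\ord(i)}-\dv{\ord(i+1)}\right)\indi{i\ge k}\,,$$
and $\{i\ge k\}$ is exactly the event that none of $\ord(i+1),\dots,\ord(N)$ is sampled, i.e.\ that every $\iid{m}$ lies in $\{\ord(1),\dots,\ord(i)\}$; by independence this event has probability $\left(\sum_{j\in\na{i}}w_{\ord(j)}\right)^{\tfns}$. Since the coefficients $\dv{\ord(i)}-\dv{\ord(i+1)}\ge 0$ are deterministic, taking expectations yields the exact equality
$$\ex{\min_{i\in\mts{}(\biid)}\ccm{}{i}}=\dv{\ord(N)}+\sum_{i\in\na{N-1}}\left(\dv{\ord(i)}-\dv{\ord(i+1)}\right)\left(\sum_{j\in\na{i}}w_{\ord(j)}\right)^{\tfns}\,,$$
which is precisely the remaining part of $[\ft{\tro}(\fll{\bocv{}}{\bdv})](\bs{w})$. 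Adding this identity to the opening‑cost inequality completes the argument.

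The main obstacle — the only genuinely non‑mechanical step — is the connection‑cost evaluation: recognising that the minimum connection cost over the sampled multiset equals $\dv{\ord(k)}$ for the largest sampled order‑index $k$, rewriting this via the telescoped indicator sum, and correctly identifying $\{i\ge k\}$ with an i.i.d.\ event whose probability is a clean power of a partial sum of the $w_{\ord(j)}$. The probabilistic reinterpretation of $\pt{\tro}$, the opening‑cost bound, and the bookkeeping around $\mts{}(\biid)\neq\emptyset$ and ties in $\bdv$ are all routine.
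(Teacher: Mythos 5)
Your proposal is correct and follows essentially the same route as the paper: it bounds the opening-cost expectation by $\tfns\,\bocv{}\cdot\bs{w}$ using the union-style inequality $\sum_{i\in\mts{}(\biid)}\ocv{}{i}\le\sum_{j}\ocv{}{\iid{j}}$ (the paper's Lemma \ref{pal1}), telescopes the connection cost via indicators of the prefix-containment events (the paper's functions $h_i$ and Lemma \ref{pal3}), and evaluates those indicator expectations as $\bigl(\sum_{j\in\na{i}}w_{\ord(\bdv,j)}\bigr)^{\tfns}$ (the paper's Lemma \ref{pal2}). The only cosmetic differences are that you phrase the measure-theoretic integrals as i.i.d.\ expectations rather than writing out the sum over tuples $\biid$, and that your choice of $k$ as the largest sampled order-index handles ties in $\bdv$ a bit more cleanly than the paper's argmin formulation in Lemma \ref{pal3}.
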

We have the following theorem:
\begin{theorem}\label{t1r3}
For all $\rl,\rk\in\rplus$ we have $\lm{\tro}(\rl,\rk)\leq\lceil\ln(\ntr)/2\rceil\rl+D\sqrt{1/\ntr}+\rk$\,.
\end{theorem}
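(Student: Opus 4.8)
The plan is to unwind the definition
\[
\lm{\tro}(\rl,\rk)=\max\{[\ft{\tro}(\fll{\bocv{}}{\bdv})](\at{\tro}(\spf))~|~\fll{\bocv{}}{\bdv}\in\lost{\flo},~\spf\in\act{\flo},~\fll{\bocv{}}{\bdv}(\spf)\le\rl,~\cplx{\flo}(\spf)\le\rk\}
\]
and bound the bracketed quantity uniformly over all admissible pairs. Since $\cplx{\flo}(\spf)=\ity\indi{|\spf|\neq\fns}$ and $\rk$ is finite, every admissible $\spf$ has $|\spf|=\fns$, so $\at{\tro}(\spf)$ is the vector $\bs{w}$ with $w_i=\indi{i\in\spf}/\fns$. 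Plugging this $\bs{w}$ into the definition of $\ft{\tro}$, the opening term is $\tfns\,\bocv{}\cdot\bs{w}=\lceil\ln(\ntr)/2\rceil\sum_{i\in\spf}\ocv{}{i}$ because $\tfns/\fns=\lceil\ln(\ntr)/2\rceil$; it remains to bound the connection term $\dv{\ord(\bdv,N)}+\sum_{i\in\na{N-1}}(\dv{\ord(\bdv,i)}-\dv{\ord(\bdv,i+1)})t_i^{\tfns}$, where $t_i:=\sum_{j\in\na{i}}w_{\ord(\bdv,j)}$ is the fraction of the $i$ sites with largest $d$-value that lie in $\spf$.

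For the connection term I would set $k:=\max\{j\in\na{N}~|~\ord(\bdv,j)\in\spf\}$, so that $\dv{\ord(\bdv,k)}=\min_{i\in\spf}\ccm{}{i}$ (the ordering $\ord(\bdv,\cdot)$ is nonincreasing in $d$). For $i\ge k$ the whole of $\spf$ sits among the top $i$ sites, hence $t_i=1$; for $i<k$ the top $i$ sites exclude $\ord(\bdv,k)$ and so contain at most $\fns-1$ elements of $\spf$, giving $t_i\le 1-1/\fns$. Splitting the sum at $k$, the $i\ge k$ block telescopes to $\dv{\ord(\bdv,k)}-\dv{\ord(\bdv,N)}$, which together with the leading $\dv{\ord(\bdv,N)}$ recovers exactly $\min_{i\in\spf}\ccm{}{i}$, while the $i<k$ block is at most $(1-1/\fns)^{\tfns}\sum_{i<k}(\dv{\ord(\bdv,i)}-\dv{\ord(\bdv,i+1)})=(1-1/\fns)^{\tfns}(\dv{\ord(\bdv,1)}-\dv{\ord(\bdv,k)})\le(1-1/\fns)^{\tfns}D$, using nonnegativity of the consecutive gaps and $\dv{\ord(\bdv,1)}\le D$.

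The single quantitative estimate needed is $(1-1/\fns)^{\tfns}=(1-1/\fns)^{\fns\lceil\ln(\ntr)/2\rceil}\le e^{-\lceil\ln(\ntr)/2\rceil}\le e^{-\ln(\ntr)/2}=\sqrt{1/\ntr}$, which is precisely where the choice $\tfns=\fns\lceil\ln(\ntr)/2\rceil$ is exploited. Combining the pieces, $[\ft{\tro}(\fll{\bocv{}}{\bdv})](\at{\tro}(\spf))\le\lceil\ln(\ntr)/2\rceil\sum_{i\in\spf}\ocv{}{i}+\min_{i\in\spf}\ccm{}{i}+D\sqrt{1/\ntr}$; since $\lceil\ln(\ntr)/2\rceil\ge1$ and $\min_{i\in\spf}\ccm{}{i}\ge0$, the first two summands are at most $\lceil\ln(\ntr)/2\rceil\fll{\bocv{}}{\bdv}(\spf)\le\lceil\ln(\ntr)/2\rceil\rl$, and adding $0\le\rk$ at the end yields the claimed bound after taking the maximum over admissible pairs. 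I expect the only delicate point to be the telescoping bookkeeping around the index $k$, together with the degenerate cases $k=N$ (no $i\ge k$ block) and $\fns=1$ (all relevant $t_i$ vanish); the exponential estimate and the remaining arithmetic are routine.
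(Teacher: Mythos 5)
Your proposal matches the paper's proof essentially line for line: the same reduction to $|\spf|=\fns$ (with the $\rk\ge\ity$ edge case dispatched trivially), the same split of the telescoping sum at the index $k$ corresponding to $\min_{i\in\spf}d_i$, the same estimate $(1-1/\fns)^{\tfns}\le e^{-\lceil\ln(\ntr)/2\rceil}\le\sqrt{1/\ntr}$, and the same final absorption of $\min_{i}d_i$ into $\lceil\ln(\ntr)/2\rceil\,f(\spf)$. The only cosmetic difference is that you define $k$ as $\max\{j:\ord(\bdv,j)\in\spf\}$ rather than via an argmin, which avoids an ambiguity under ties but is otherwise the same argument.
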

We define  $\bstrat^{\onflo(N,C,D,\fns)}:=\stf{\tro}\left(\bnstrat{\oneg(N,(C+D)\tfns)}\right)$. Combining theorems \ref{t1r1}, \ref{t1r2} and \ref{t1r3} gives us:
\begin{theorem}\label{t2r2}
$\bstrat^{\onflo(N,C,D,\fns)}$ has a generalised regret $\gnrs{}{}$ that is bounded by:
$$\gnrs{}{}(\rl,\rk)\leq\lceil\ln(\ntr)/2\rceil\rl+(2\fns\lceil\ln(\ntr)/2\rceil(C+D)+D)\sqrt{\ln(N)/\ntr}+\rk\,.$$
\end{theorem}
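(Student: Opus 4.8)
The plan is to simply chain together the three bounds already established: Theorem \ref{t1r2} (the regret of the Exponentiated gradient strategy $\bnstrat{\oneg(N,G)}$ for $\cod$), Theorem \ref{t1r3} (the bound $\lm{\tro}(\rl,\rk)\leq\lceil\ln(\ntr)/2\rceil\rl+D\sqrt{1/\ntr}+\rk$), and Theorem \ref{t1r1} (the transfer of generalised regret through a transformation). First I would verify the hypothesis of Theorem \ref{t1r1}: we need $\lm{\tro}$ to be bounded above pointwise by a function $\hphi$ that is concave in its first argument. By Theorem \ref{t1r3} we may take $\hphi(\rl,\rk):=\lceil\ln(\ntr)/2\rceil\rl+D\sqrt{1/\ntr}+\rk$, which is affine — hence concave — in $\rl$, so the hypothesis is met. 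I would also note that $\tro$ is a genuine transformation from $\flo$ into $\cod(N,(C+D)\tfns)$ by Theorems \ref{fnst1} and \ref{fnst2} together with the required inequality $\cplx{\cod}(\at{\tro}(\spf))\leq\cplx{\flo}(\spf)$, which holds trivially since $\cplx{\cod}\equiv 0$.

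Then, since $\bstrat^{\onflo(N,C,D,\fns)}$ is by definition $\stf{\tro}\bigl(\bnstrat{\oneg(N,(C+D)\tfns)}\bigr)$, Theorem \ref{t1r1} gives
$$\gnrs{\flo}{\bstrat^{\onflo(N,C,D,\fns)}}(\rl,\rk)\leq\gnrs{\cod}{\bnstrat{\oneg(N,(C+D)\tfns)}}\bigl(\hphi(\rl,\rk),\rk\bigr)\,.$$
Next I would apply Theorem \ref{t1r2} with $G=(C+D)\tfns=(C+D)\fns\lceil\ln(\ntr)/2\rceil$, which bounds the right-hand side by $\hphi(\rl,\rk)+(C+D)\tfns\sqrt{2\ln(N)/\ntr}$. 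Substituting the explicit form of $\hphi$ and of $\tfns$ yields
$$\gnrs{}{}(\rl,\rk)\leq\lceil\ln(\ntr)/2\rceil\rl+D\sqrt{1/\ntr}+\rk+(C+D)\fns\lceil\ln(\ntr)/2\rceil\sqrt{2\ln(N)/\ntr}\,.$$

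Finally I would absorb the constants into the stated form. The terms $D\sqrt{1/\ntr}$ and $(C+D)\fns\lceil\ln(\ntr)/2\rceil\sqrt{2\ln(N)/\ntr}$ combine, using $\sqrt{\ln N}\geq 1$ for $N\geq 3$ (and $\sqrt 2<2$ to pad the coefficient), into at most $\bigl(2\fns\lceil\ln(\ntr)/2\rceil(C+D)+D\bigr)\sqrt{\ln(N)/\ntr}$, which matches the claim. This is essentially bookkeeping; the only genuinely substantive inputs are Theorems \ref{t1r1}, \ref{t1r2} and \ref{t1r3}, all of which are available. I do not anticipate a real obstacle here — the one point requiring a moment's care is checking the concavity hypothesis of Theorem \ref{t1r1} and making sure the argument fed to $\gnrs{\cod}{\cdot}$ in its first slot is exactly $\hphi(\rl,\rk)$ and not $\lm{\tro}(\rl,\rk)$ itself, since Theorem \ref{t1r1} is stated in terms of the concave upper bound.
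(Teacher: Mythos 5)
Your proposal is correct and follows exactly the route the paper takes: the paper's own proof is the one-line statement that the result ``comes directly from theorems \ref{t1r1}, \ref{t1r2} and \ref{t1r3}''. You have supplied the bookkeeping that the paper omits, and your observation that absorbing $D\sqrt{1/T}$ into $D\sqrt{\ln(N)/T}$ implicitly needs $\ln(N)\geq 1$ (i.e.\ $N\geq 3$) is a genuine, if minor, loose end in the paper's stated constant, which the authors did not flag.
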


\subsection{The Game $\flt$}
In this section, given some $\fns\in\na{N}$, we consider the OOG $\flt(N, C, D, \fns)$ which is identical to $\fl$ except that:
$$\cplx{\flt(N, C, D, \fns)}(\spf):=\ity\indi{|\spf|>\fns}$$ for all $\spf\in\act{\flg}$. In this subsection we will refer to $\flt(N, C, D, \fns)$ as $\flt$.  We will  now analyse a transformation $\trt$ from $\flt$ into $\flo(2N, C, C+D)$ which is defined as follows:
$$\at{\trt}(\spf):=\spf\cup\{N+i:i\leq\fns-|\spf|\}~~~~~~\forall \spf\in\act{\flt}:|\spf|\leq\fns\,.$$
$$\at{\trt}(\spf)~\operatorname{is~arbitrary}~~~~~~\forall \spf\in\act{\flt}:|\spf|>\fns\,.$$
$$\pt{\trt}(\spf'):=\dpm{\act{\flb{\fns}}}{\spf'\cap\na{N}}~~~~~~\forall \spf'\in\act{\flo(2N, C, C+D)}:\spf'\cap\na{N}\neq\emptyset\,.$$
$$\pt{\trt}(\spf'):=\dpm{\act{\flb{\fns}}}{\{1\}}~~~~~~\forall \spf'\in\act{\flo(2N, C, C+D)}:\spf'\cap\na{N}=\emptyset\,.$$
$$\ft{\trt}(\fll{\bocv{}}{\bdv})=\fll{\hat{\bocv{}}}{\hat{\bdv}}~~~~~~\forall\bocv{},\bdv\in\mathbb{R}^N\,.$$
where $\hat{\bocv{}},\hat{\bdv}\in\mathbb{R}^{2N}$ are defined so that:
$$\hat{c}_{i}:=c_i,~ ~~\hat{d}_i:=d_i~~~~~~\forall i\in\na{N}\,.$$
$$\hat{c}_i:=0, ~~~\hat{d}_i:=C+D~~~~~~\forall i\in\na{2N}\setminus\na{N}\,.$$
The following theorem asserts that $\trt$ is a genuine transformation:
\begin{theorem}\label{btt}
$\trt$ is a transformation from $\flt(N, C, D, \fns)$ into $\flo(2N, C, C+D,\fns)$\,.
\end{theorem}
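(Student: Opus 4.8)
The plan is to verify, clause by clause, the three requirements in the definition of a transformation for the triple $(\at{\trt},\pt{\trt},\ft{\trt})$: (i) that $\at{\trt}$ sends $\act{\flt}$ into $\act{\flo(2N,C,C+D,\fns)}$ without increasing complexity; (ii) that $\pt{\trt}$ is a well-defined map into $\spx{\act{\flt}}$; (iii) that $\ft{\trt}$ sends $\lost{\flt}$ into $\lost{\flo(2N,C,C+D,\fns)}$; and (iv) the domination inequality $[\ft{\trt}(f)](\spf')\geq\int_{\act{\flt}}f~[\pt{\trt}(\spf')]$ for every loss $f$ and every action $\spf'$ of $\flo(2N,C,C+D,\fns)$.

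For (i) I would argue as follows. Take $\spf\in\act{\flt}$. If $|\spf|\leq\fns$ then, since $\fns\leq N$ and $\spf\subseteq[N]$ is disjoint from $\{N+1,\dots,2N\}$, the set $\at{\trt}(\spf)=\spf\cup\{N+i:i\leq\fns-|\spf|\}$ is a nonempty subset of $[2N]$ of cardinality exactly $|\spf|+(\fns-|\spf|)=\fns$; hence $\cplx{\flo(2N,C,C+D,\fns)}(\at{\trt}(\spf))=\ity\indi{|\at{\trt}(\spf)|\neq\fns}=0=\cplx{\flt}(\spf)$. If $|\spf|>\fns$ then $\cplx{\flt}(\spf)=\ity$, so the complexity inequality holds trivially whatever (arbitrary) value $\at{\trt}(\spf)$ takes in $\act{\flo(2N,C,C+D,\fns)}$. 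For (ii) I would just note that $\pt{\trt}(\spf')$ is, in either case of its definition, a Dirac measure concentrated on a nonempty subset of $[N]$ — namely $\spf'\cap[N]$ when that is nonempty, and $\{1\}$ otherwise — hence an element of $\spx{\act{\flt}}$. For (iii), writing $f=\fll{\bocv{}}{\bdv}$ with $\bocv{}\in[0,C]^N$ and $\bdv\in[0,D]^N$, the extended vectors obey $\hat{\bocv{}}\in[0,C]^{2N}$ (the new coordinates equal $0$) and $\hat{\bdv}\in[0,C+D]^{2N}$ (the old coordinates lie in $[0,D]$, the new ones equal $C+D$), so $\ft{\trt}(f)=\fll{\hat{\bocv{}}}{\hat{\bdv}}$ indeed lies in $\lost{\flo(2N,C,C+D,\fns)}$.

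The substance is (iv), which I would prove by splitting on whether $\spf'\cap[N]$ is empty. If $\spf'\cap[N]\neq\emptyset$, then $\int_{\act{\flt}}f~[\pt{\trt}(\spf')]=f(\spf'\cap[N])=\sum_{i\in\spf'\cap[N]}c_i+\min_{i\in\spf'\cap[N]}d_i$, while $[\ft{\trt}(f)](\spf')=\sum_{i\in\spf'}\hat c_i+\min_{i\in\spf'}\hat d_i=\sum_{i\in\spf'\cap[N]}c_i+\min_{i\in\spf'}\hat d_i$ since the dummy opening costs vanish; because every real connection cost is $\leq D\leq C+D$ and $\spf'\cap[N]\neq\emptyset$, the minimum of $\hat d$ over $\spf'$ is attained at an index in $[N]$, so the two expressions coincide and the inequality holds with equality. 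If $\spf'\cap[N]=\emptyset$ then $\spf'$ consists only of dummy sites, so $[\ft{\trt}(f)](\spf')=0+(C+D)=C+D$, whereas $\int_{\act{\flt}}f~[\pt{\trt}(\spf')]=f(\{1\})=c_1+d_1\leq C+D$, which is exactly what is needed.

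I expect no real obstacle: the whole thing is a routine verification. The only place calling for a moment's care is the first case of (iv), where one must observe that padding an action with dummy sites whose connection cost is set to the upper bound $C+D$ cannot lower the realized minimum connection cost (this is precisely why that value was chosen), together with the bookkeeping in (i) that $\at{\trt}$ preserves cardinality — so the hard constraint $|\cdot|=\fns$ built into $\flo$ is met — and that $\fns\leq N$ leaves room among $\{N+1,\dots,2N\}$ for the padding.
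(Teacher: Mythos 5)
Your proof is correct and follows essentially the same approach as the paper's: verify the complexity inequality for $\at{\trt}$ by splitting on $|\spf|\leq\fns$ versus $|\spf|>\fns$, confirm $\ft{\trt}$ lands in $\lost{\flo(2N,C,C+D,\fns)}$ by checking the component bounds, and establish the domination inequality by casing on whether $\spf'\cap[N]$ is empty (equality when nonempty, $c_1+d_1\leq C+D$ when empty). Your observation that the minimum of $\hat d$ over $\spf'$ is attained in $[N]$ because $D\leq C+D$ is in fact slightly more careful than the paper, which uses a strict inequality that would fail when $C=0$, though the conclusion is the same.
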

We also have the following theorem:
\begin{theorem}\label{t2r1}
For all $\rl,\rk\in\rplus$ we have $\lm{\trt}(\rl,\rk)\leq\rl+\rk$\,.
\end{theorem}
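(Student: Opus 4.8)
The plan is to unpack the definition of $\lm{\trt}$ and reduce the claim to a single direct evaluation of $[\ft{\trt}(f)](\at{\trt}(x))$ at the relevant arguments. By definition,
$$\lm{\trt}(\rl,\rk)=\max\{[\ft{\trt}(f)](\at{\trt}(x))~|~(x,f)\in\act{\flt}\times\lost{\flt},~f(x)\leq\rl,~\cplx{\flt}(x)\leq\rk\}.$$
Since $\cplx{\flt}(\spf)=\ity\,\indi{|\spf|>\fns}$ and $\rk\in\rplus$ is a finite real number (recall $\ity$ is a surrogate for $\infty$), the constraint $\cplx{\flt}(x)\leq\rk$ can only be satisfied when $|x|\leq\fns$. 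Hence it suffices to bound $[\ft{\trt}(\fll{\bocv{}}{\bdv})](\at{\trt}(\spf))$ over all $\spf\in\act{\flt}$ with $1\leq|\spf|\leq\fns$ and all $\bocv{}\in[0,C]^N$, $\bdv\in[0,D]^N$ with $\fll{\bocv{}}{\bdv}(\spf)\leq\rl$.

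Next I would substitute the explicit forms of the transformation maps. Here $\at{\trt}(\spf)=\spf\cup\{N+i:i\leq\fns-|\spf|\}$, i.e. $\spf$ together with $\fns-|\spf|$ ``dummy'' sites (no dummies when $|\spf|=\fns$), and $\ft{\trt}(\fll{\bocv{}}{\bdv})=\fll{\hat{\bs{c}}}{\hat{\bs{d}}}$ with $\hat{c}_i=c_i$, $\hat{d}_i=d_i$ for $i\in\na{N}$ and $\hat{c}_i=0$, $\hat{d}_i=C+D$ for $i\in\na{2N}\setminus\na{N}$. Then
$$[\ft{\trt}(\fll{\bocv{}}{\bdv})](\at{\trt}(\spf))=\sum_{i\in\at{\trt}(\spf)}\hat{c}_i+\min_{i\in\at{\trt}(\spf)}\hat{d}_i.$$
The opening-cost sum collapses to $\sum_{i\in\spf}c_i$ because every dummy site contributes $\hat{c}_i=0$. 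For the connection-cost term, $\spf$ is nonempty and every genuine site $i\in\spf$ has $\hat{d}_i=d_i\leq D\leq C+D=\hat{d}_j$ for each dummy $j$; so the minimum over $\at{\trt}(\spf)$ is attained inside $\spf$ and equals $\min_{i\in\spf}d_i$. Therefore $[\ft{\trt}(\fll{\bocv{}}{\bdv})](\at{\trt}(\spf))=\sum_{i\in\spf}c_i+\min_{i\in\spf}d_i=\fll{\bocv{}}{\bdv}(\spf)\leq\rl\leq\rl+\rk$, and taking the maximum over admissible pairs gives $\lm{\trt}(\rl,\rk)\leq\rl+\rk$ (the degenerate case of an empty maximand set is covered by $\max\emptyset=0\leq\rl+\rk$).

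There is essentially no real obstacle: $\trt$ is designed precisely so that padding an undersized comparator with dummies alters neither the realized opening cost (dummies are free) nor the realized connection cost (dummies sit at distance $C+D\geq D$, so a genuine site of $\spf$ always wins the minimum). The only two points needing attention are (i) invoking the finiteness of $\rk$ to discard comparators with $|\spf|>\fns$, and (ii) the chain $\hat{d}_i=d_i\leq D\leq C+D$ that prevents a dummy from attaining the connection-cost minimum; both are immediate from $\bdv\in[0,D]^N$. In fact the argument yields the slightly stronger $\lm{\trt}(\rl,\rk)\leq\rl$; the stated bound $\rl+\rk$ is the convenient form for composing $\trt$ with $\stf{\trt}$ via Theorem \ref{t1r1}, being (trivially) concave in its first argument.
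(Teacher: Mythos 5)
Your proof is correct and follows the same route as the paper: unpack the definition of $\lm{\trt}$, observe that the dummy sites contribute zero opening cost and a connection cost of $C+D\geq D$ so the connection-cost minimum is attained inside $\spf$, and conclude $[\ft{\trt}(f)](\at{\trt}(\spf))=\fll{\bocv{}}{\bdv}(\spf)\leq\rl\leq\rl+\rk$. The one thing the paper does that you elide is first dispose of the case $\rk\geq\ity$ (there $[\ft{\trt}(f)](\at{\trt}(\spf))<\ity\leq\rk$ trivially, since $\at{\trt}$ is arbitrary on oversized $\spf$), which is actually why the $+\rk$ term appears in the statement; your parenthetical that the constraint $\cplx{\flt}(x)\leq\rk$ forces $|x|\leq\fns$ implicitly takes the limit $\ity\to\infty$ before fixing $\rk$, which is consistent with the paper's conventions but is the step the paper makes explicit.
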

We define  $\bstrat^{\onflt(N, C, D, \fns)}:=\stf{\trt}\left(\bnstrat{\flo(2N, C, C+D)}\right)$. Combining theorems \ref{t1r1}, \ref{t2r2} and \ref{t2r1} gives us:
\begin{theorem}\label{bflth}
$\bstrat^{\operatorname{\onflt(N, C, D, \fns)}}$ has a generalised regret $\gnrs{}{}$ that is bounded by:
$$\gnrs{}{}(\rl,\rk)\leq\lceil\ln(\ntr)/2\rceil\rl+(2\fns\lceil\ln(\ntr)/2\rceil(2C+D)+(C+D))\sqrt{\ln(2N)/\ntr}+\rk\,.$$
\end{theorem}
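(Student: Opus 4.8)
The plan is to obtain Theorem~\ref{bflth} purely by composing the three results it cites. Recall that by construction $\bstrat^{\onflt(N,C,D,\fns)}=\stf{\trt}(\bstrat^{\onflo(2N,C,C+D,\fns)})$, and that Theorem~\ref{btt} already certifies $\trt$ as a genuine transformation from $\flt(N,C,D,\fns)$ into $\flo(2N,C,C+D,\fns)$. So Theorem~\ref{t1r1} becomes applicable as soon as we exhibit a function $\hphi$, concave in its first argument, that dominates $\lm{\trt}$ pointwise.

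By Theorem~\ref{t2r1}, $\lm{\trt}(\rl,\rk)\le\rl+\rk$, whose right-hand side is affine — hence concave — in $\rl$; taking $\hphi(\rl,\rk):=\rl+\rk$ and running it through Theorems~\ref{t1r1} and~\ref{t2r2} already produces a bound of the claimed form, but with $\rk$ multiplied by $\lceil\ln(\ntr)/2\rceil$ (since it falls into the $\lceil\ln(\ntr)/2\rceil$-scaled slot of the $\flo$ bound). To land exactly on the stated constant — with only a bare $+\rk$ — one instead uses the sharper estimate $\lm{\trt}(\rl,\rk)\le\rl$, valid because $\at{\trt}$ pads a comparator set $\spf$ with $|\spf|\le\fns$ only with dummy sites of zero opening cost and connection cost $\mcos+\mdis$, and since $\mcos+\mdis\ge\min_{i\in\spf}\ccm{}{i}$ one has $[\ft{\trt}(\fll{\bocv{}}{\bdv})](\at{\trt}(\spf))=\fll{\bocv{}}{\bdv}(\spf)$; the padding thus never inflates the comparator's loss. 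With $\hphi(\rl,\rk):=\rl$, Theorem~\ref{t1r1} bounds the generalised regret of $\bstrat^{\onflt(N,C,D,\fns)}$ at $(\rl,\rk)$ by that of $\bstrat^{\onflo(2N,C,C+D,\fns)}$ at the same pair $(\rl,\rk)$.

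It then only remains to substitute the Theorem~\ref{t2r2} bound for $\bstrat^{\onflo(2N,C,C+D,\fns)}$, read under the parameter renaming $N\mapsto 2N$, $D\mapsto C+D$ and $\fns$ unchanged (legitimate since $\fns\in\na{N}\subseteq\na{2N}$), which equals
$$\lceil\ln(\ntr)/2\rceil\rl+(2\fns\lceil\ln(\ntr)/2\rceil(\mcos+(\mcos+\mdis))+(\mcos+\mdis))\sqrt{\ln(2N)/\ntr}+\rk\,,$$
and to collapse $\mcos+(\mcos+\mdis)=2\mcos+\mdis$, which reproduces exactly the expression of Theorem~\ref{bflth}.

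The single genuinely delicate point — and hence the main obstacle — is this tightening of the $\lm{\trt}$ bound: one must check that padding a comparator $\spf$ with dummy sites leaves its loss unchanged, i.e.\ that the dummies contribute nothing to the opening-cost sum (they have zero opening cost) and do not alter $\min_{i\in\spf}\ccm{}{i}$ (they have connection cost $\mcos+\mdis\ge\min_{i\in\spf}\ccm{}{i}$), so that $\hphi$ can be taken equal to $\rl$ rather than $\rl+\rk$. Everything else is routine: confirming $\flo(2N,C,C+D,\fns)$ is a well-formed target game and performing the arithmetic of the parameter substitution into already-established regret bounds.
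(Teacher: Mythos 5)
Your observation that the naive composition of Theorems~\ref{t1r1}, \ref{t2r2} and \ref{t2r1} produces a $\lceil\ln(\ntr)/2\rceil\rk$ term rather than the bare $+\rk$ in the statement is correct---the paper's ``direct from'' proof genuinely glosses over this---and your arithmetic for the rest of the chain (parameter substitution $N\mapsto 2N$, $D\mapsto\mcos+\mdis$, and $\mcos+(\mcos+\mdis)=2\mcos+\mdis$) is right. But the specific repair you propose, taking $\hphi(\rl,\rk):=\rl$, does not satisfy the hypothesis of Theorem~\ref{t1r1}: that theorem requires $\hphi$ to dominate $\lm{\trt}$ \emph{pointwise}, and $\lm{\trt}(\rl,\rk)\leq\rl$ is false when $\rk\geq\ity$. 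In that regime the maximum defining $\lm{\trt}(\rl,\rk)$ also ranges over sets $\spf$ with $|\spf|>\fns$, for which $\at{\trt}(\spf)$ is declared ``arbitrary''; a comparator with $f(\spf)=0\leq\rl$ could be mapped to an all-dummy set $\spf'$ with $\spf'\cap\na{N}=\emptyset$, giving $[\ft{\trt}(f)](\at{\trt}(\spf))=\mcos+\mdis>\rl$. Your padding argument (which is correct, and is essentially what the proof of Theorem~\ref{t2r1} establishes) uses $|\spf|\le\fns$, which only follows from $\cplx{\flt}(\spf)\leq\rk$ when $\rk<\ity$.

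The gap is small but real, and is easy to close with a case split you omitted: for $\rk\geq\ity$ the claimed inequality is trivial, since its right-hand side contains $+\rk\geq\ity$ which dominates the bounded left-hand side; for $\rk<\ity$ every competitor $\spf$ in scope has $|\spf|\leq\fns$, your equality $[\ft{\trt}(\fll{\bocv{}}{\bdv})](\at{\trt}(\spf))=\fll{\bocv{}}{\bdv}(\spf)$ applies, and the chain at this fixed $\rk$ gives $\gnrs{\flt}{\bstrat^{\onflt(N,C,D,\fns)}}(\rl,\rk)\leq\gnrs{\flo(2N,C,C+D,\fns)}{\bstrat^{\onflo(2N,C,C+D,\fns)}}(\rl,\rk)$, after which substituting Theorem~\ref{t2r2} lands exactly on the stated constant. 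Equivalently, apply Theorem~\ref{t1r1} with $\hphi(\rl,\rk):=\rl+\rk\,\indi{\rk\geq\ity}$, which is nonnegative, concave (indeed affine) in its first argument, and genuinely pointwise-dominates $\lm{\trt}$. With that addition your argument is complete and, to its credit, is more careful than the paper's one-line justification.
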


\subsection{The Game $\flg$}\label{flgs}
In this subsection we will refer to the OOG $\flg(N,C,D)$ as $\flg$\,.

By considering the game $\flt(N, C, D, N)$ we automatically have that the strategy $\bstrat^{\operatorname{\onflt(N, C, D, N)}}$ gives us a generalised regret $\gnrs{}{}$, for the game $\flg$, that is bounded by:
$$\gnrs{}{}(\rl,\rk)\leq\lceil\ln(\ntr/2)\rceil\rl+(2N\lceil\ln(\ntr)/2\rceil(2C+D)+(C+D))\sqrt{\ln(2N)/\ntr}\,.$$

Utilising the doubling trick of Subsection \ref{GDTss} on the game $\flt(N, C, D, \fns)$  gives us a strategy, $\bstrat^{\onfl(N,C,D)}$, for the game $\flg(N,C,D)$, with generalised regret $\genr$ bounded by
$$\genr(\rl,\rk)\in\mathcal{O}\left(\rl\ln(\ntr) +\rk(\mcos+\mdis)\ln(T)\sqrt{\frac{\ln(\nsi)}{T}} \right)\,.$$

Specifically, we define $a:=\lceil\ln(\ntr/2)\rceil(4C+2D)$, $b:=C+D$ and the OOG $\g$, which appears in Subsection \ref{GDTss}, the same as $\flg$ except that $\cplx{\g}(\spf):=(a\fns+b)/(a+b)$. For all $\trsh\geq 1$ we then define the strategy $\bstrat^{\trsh}$, appearing in Subsection \ref{GDTss}, as equal to $\bstrat^{\operatorname{\onflt(N, C, D, \lceil(\trsh(a+b)-b)/a\rceil)}}$. Combining theorems \ref{dtth} and \ref{bflth} gives us the following theorem:

\begin{theorem}\label{finth}
Define $a:=\lceil\ln(\ntr)/2\rceil(4C+2D)$, $b:=C+D$, and the OOG $\g$ as the same as $\flg$ except that $\cplx{\g}(\spf):=(a\cplx{\flg}(\spf)+b)/(a+b)$ for all $\fns\in\act{\flg(N,C,D)}$. Also define $\bstrat^{\trsh}:=\bstrat^{\operatorname{\onflt(N, C, D, \lceil(\trsh(a+b)-b)/a\rceil)}}$ for all $\trsh\geq 1$. Then the strategy $\bstrat^{\dt}$, as defined in Definition \ref{dtdef}, has a generalised regret, with respect to the OOG $\flg(N,C,D)$, of:
$$\gnrs{\flg}{\bstrat^{\dt}}(\rl,\rk)\in\mathcal{O}\left(\rl\ln(\ntr) +\rk(\mcos+\mdis)\ln(T)\sqrt{\frac{\ln(\nsi)}{T}} \right)\,.$$
\end{theorem}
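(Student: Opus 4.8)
The plan is to invoke the general doubling-trick bound of Theorem \ref{dtth} with the per-threshold strategies $\bstrat^{\trsh}$ supplied by Theorem \ref{bflth}, after first checking that the hypotheses of the doubling-trick framework are met by the modified game $\g$. First I would verify the structural preconditions of Subsection \ref{GDTss}: that $\min\{\cplx{\g}(\lac)\} = 1$, which holds because $\cplx{\flg}(\spf) = |\spf| \ge 1$ and hence $\cplx{\g}(\spf) = (a|\spf|+b)/(a+b)$ attains its minimum $1$ exactly when $|\spf| = 1$; and that there is a zero loss function $\zfun \in \lost{\flg}$, which is $\fll{\bs{0}}{\bs{0}}$ (the all-zeros opening and connection cost vectors give $\fll{\bs{0}}{\bs{0}}(\spf) = 0$ for every $\spf$). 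I would also note that the scaling of the complexity from $|\spf|$ to $(a|\spf|+b)/(a+b)$ is precisely calibrated so that the cardinality bound $\fns$ corresponding to a complexity budget $\trsh$ is $\fns = \lceil (\trsh(a+b)-b)/a \rceil$, i.e. $\cplx{\g}(\spf) \le \trsh \iff |\spf| \le \lceil(\trsh(a+b)-b)/a\rceil$.

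Next I would establish the required form of the generalised regret of $\bstrat^{\trsh}$ as a strategy for $\g^{\trsh}$. Since $\cplx{\g^{\trsh}}(\spf) = \ity\,\indi{\cplx{\g}(\spf) > \trsh} = \ity\,\indi{|\spf| > \lceil(\trsh(a+b)-b)/a\rceil}$, the game $\g^{\trsh}$ coincides with $\flt(N,C,D,\fns)$ for $\fns = \lceil(\trsh(a+b)-b)/a\rceil$, so Theorem \ref{bflth} applies verbatim to $\bstrat^{\trsh} = \bstrat^{\onflt(N,C,D,\fns)}$. Substituting $\fns \le (\trsh(a+b)-b)/a + 1 = \trsh(a+b)/a - b/a + 1$ into the bound of Theorem \ref{bflth}, the leading $2\fns\lceil\ln(\ntr)/2\rceil(2C+D)$ term becomes, using $a = \lceil\ln(\ntr)/2\rceil(4C+2D) = 2\lceil\ln(\ntr)/2\rceil(2C+D)$, a quantity of the form $\trsh(a+b) + O(C+D)$; so the regret of $\bstrat^{\trsh}$ for $\g^{\trsh}$ is bounded by $\lceil\ln(\ntr)/2\rceil\rl + (a+b)\trsh + O((C+D)\sqrt{\ln(2N)/\ntr}) + \rk$. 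Absorbing the lower-order $\sqrt{\ln(2N)/\ntr}$ term appropriately, this matches the template $\nosf\rl + \nosg\trsh + \rk$ required in Subsection \ref{GDTss} with $\nosf = \lceil\ln(\ntr)/2\rceil \in O(\ln\ntr)$ and $\nosg = a+b \in O((C+D)\ln\ntr)$ — here I would keep careful track of the constants so the additive $\sqrt{\ln(2N)\ntr}$ piece is correctly dominated; this bookkeeping, together with matching the exact multiplicative constant in the threshold test $l \ge 2(a+b)\trsh\sqrt{\ln(2N)\ntr}$ of Algorithm \ref{alg5} to the $2\ttr{t}\nosg$ test of Definition \ref{dtdef}, is where I expect the only real friction.

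Finally I would feed these constants into Theorem \ref{dtth}, which gives
$$\gnrs{\flg}{\bstrat^{\dt}}(\rl,\rk) \le 5\nosf\rl + 8\nosg\rk + \frac{1}{\ntr}\sum_{i=1}^{\lceil\log_2(\rk)\rceil}\maxl{2^i}\,,$$
and bound the residual sum: each $\maxl{2^i}$ is the largest one-trial expected loss of $\bstrat^{2^i}$, which is at most the maximal value of $\fll{\bocv{}}{\bccm{}}$ over the effective action set, namely at most $\fns(C+D)$ with $\fns \in O(2^i \ln\ntr)$ (the $\onflt$ strategy only ever plays sets of size $O(\fns)$, since it samples $\fns\lceil\ln\ntr/2\rceil$ sites and the connection-cost term contributes at most $C+D$). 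Summing the geometric series over $i = 1$ to $\lceil\log_2\rk\rceil$ gives $O(\rk(C+D)\ln\ntr)$, which is of the same order as the $8\nosg\rk = O(\rk(C+D)\ln\ntr)$ term. Collecting the three contributions yields $\gnrs{\flg}{\bstrat^{\dt}}(\rl,\rk) \in O(\rl\ln\ntr + \rk(\mcos+\mdis)\ln\ntr\sqrt{\ln\nsi/\ntr})$, which is the claimed bound — noting that the $\sqrt{\ln(2N)/\ntr} = \Theta(\sqrt{\ln\nsi/\ntr})$ factor comes along for the ride inside $\nosg$. The main obstacle is thus not conceptual but the constant-chasing needed to confirm that $\bstrat^{\trsh}$ genuinely fits the $\nosf\rl + \nosg\trsh + \rk$ shape with the stated asymptotic $\nosf,\nosg$, and that the doubling schedule implemented in Algorithm \ref{alg5} is exactly the one analysed in Definition \ref{dtdef}.
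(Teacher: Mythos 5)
Your proposal follows essentially the same route as the paper's proof: verify the doubling-trick preconditions, identify the restricted game $\g^{\trsh}$ with $\flt(N,C,D,\fns)$, feed the regret bound of Theorem \ref{bflth} into the abstract bound of Theorem \ref{dtth} with suitably chosen $\nosf,\nosg$, and then control the residual $\sum_i\maxl{2^i}$ term by bounding the per-trial loss of the sampled set. One slip worth flagging: you set $\nosg = a+b$, but the $\sqrt{\ln(2N)/\ntr}$ factor in Theorem \ref{bflth} multiplies the \emph{entire} term $(2\fns\lceil\ln\ntr/2\rceil(2C+D)+(C+D)) = \fns a + b \le \trsh(a+b)$, so the correct constant is $\nosg=(a+b)\sqrt{\ln(2N)/\ntr}$ — without it, $8\nosg\rk$ would give the weaker $O(\rk(C+D)\ln\ntr)$ rather than the claimed $O(\rk(C+D)\ln\ntr\sqrt{\ln\nsi/\ntr})$, though your final remark shows you know that factor has to live in $\nosg$; the "absorbing a lower-order additive term" step you anticipate is in fact unnecessary once you carry the $\sqrt{\ln(2N)/\ntr}$ through correctly. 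A second, smaller point: the cardinality threshold that makes $\g^{\trsh}$ \emph{equal} to $\flt(N,C,D,\fns)$ is $\fns=\lfloor(\trsh(a+b)-b)/a\rfloor$, not the ceiling (which is what the paper itself uses in this step, despite the ceiling appearing in the theorem statement); using the ceiling enlarges the comparator class and costs an extra additive $a$ inside $\nosg$, which is harmless asymptotically but is exactly the kind of constant mismatch you predicted.
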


With Theorem \ref{finth} in hand we let our strategy $\bstrat^{\onfl(N,C,D)}$ be equal to $\bstrat^{\dt}$\,.

\section{Proofs}\label{proofsec}

We now prove the theorems in sections \ref{ecs}, \ref{oogsec} and \ref{devstratsec}, in order.

\subsection{Proof of Theorem \ref{fderth}}

By a simple induction we have, for all $i\in\na{N-1}$, $s_i=\sum_{j\in\na{i}}w_{\ord(j)}$. This immediately gives us $\lambda=f(\bs{w})$. Also, this gives us, via another induction, that, for all $i\in\na{N-1}$:
\begin{align}
\notag s'_i&=\sum_{k= i}^{N-1}\left(\dv{\ord(k)}-\dv{\ord(k+1)}\right)s_{k}^{\tfns-1}\\
\notag&=\sum_{k= i}^{N-1}\left(\dv{\ord(k)}-\dv{\ord(k+1)}\right)\left(\sum_{j\in\na{k}}w_{\ord(j)}\right)^{\tfns-1}
\end{align}
Now, the derivative of $\left(\sum_{j\in\na{k}}w_{\ord(j)}\right)^{\tfns}$ with respect to $w_{\ord(i)}$ is equal to $0$ if $k<i$ and equal to $\tfns\left(\sum_{j\in\na{k}}w_{\ord(j)}\right)^{\tfns-1}$ if $k\geq i$. This means that $\tfns s'_i$ is the derivative of \\$\sum_{k\in\na{N-1}}\left(\dv{\ord(k)}-\dv{\ord(k+1)}\right)\left(\sum_{j\in\na{k}}w_{\ord(j)}\right)^{\tfns}$ with respect to $w_{\ord(i)}$. Since $\tfns\ocv{}{i}$ is the derivative of $\bocv{}\cdot\bs{w}$ with respect to $w_{\ord(i)}$ we then have that $g_{\ord(i)}$ is the derivative of $f(\bs{w})$ with respect to $w_{\ord(i)}$. This completes the proof.

\hfill $\blacksquare$

\subsection{Proof of Theorem \ref{msfsth}}

We utilise the notation defined in Algorithm \ref{algsam}. Given a node $u$, of $\mathcal{B}$ we let $\des{u}$ be the set of leaves of $\mathcal{B}$ which are descendants of $u$.

\begin{lemma}\label{msfsl1}
We have $p'_{v_0}=1$.
\end{lemma}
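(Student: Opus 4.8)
The plan is to show that the values $p'_j$ stored at the nodes of $\mathcal{B}$ are, by construction, exactly the total probability mass of the leaves descending from $j$; in particular the root stores the total mass $1$. First I would unwind the definitions in $\bs{\on{initialsation}}(\bs{p})$: the leaves are assigned $p'_j = p_{\tau(j)}$ (modulo the harmless index typo in the pseudocode), and the padding step sets $p_i = 0$ for $i \in \na{N'} \setminus \na{N}$, so that $\sum_{i \in \na{N'}} p_i = \sum_{i \in \na{N}} p_i = 1$ since $\bs{p} \in \spx{N}$. The internal nodes are then filled in from depth $H-1$ down to depth $1$ by the rule $p'_j = p'_{\lchild{j}} + p'_{\rchild{j}}$.

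The key step is an induction on the depth $\delta$, from $\delta = H$ (the leaves) up to $\delta = 0$ (the root), establishing the invariant $p'_j = \sum_{l \in \des{j}} p'_l$ for every node $j$ at depth $\delta$. The base case $\delta = H$ is immediate since a leaf is its own unique descendant. For the inductive step, a node $j$ at depth $\delta$ has children $\lchild{j}, \rchild{j}$ at depth $\delta + 1$, and since $\mathcal{B}$ is a full binary tree the leaf-descendant sets $\des{\lchild{j}}$ and $\des{\rchild{j}}$ partition $\des{j}$; applying the construction rule and the inductive hypothesis to each child gives $p'_j = \sum_{l \in \des{\lchild{j}}} p'_l + \sum_{l \in \des{\rchild{j}}} p'_l = \sum_{l \in \des{j}} p'_l$. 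Applying this at the root $v_0$, whose descendants are all $N'$ leaves, yields $p'_{v_0} = \sum_{l} p'_l = \sum_{j \text{ leaf}} p_{\tau(j)} = \sum_{i \in \na{N'}} p_i = 1$, since $\tau$ is a bijection onto $\na{N'}$.

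I do not expect any real obstacle here; this lemma is essentially bookkeeping about a standard prefix-sum / alias-style tree, and the only mild subtlety is being careful that the full-binary-tree structure guarantees the descendant sets of the two children are disjoint and exhaust the parent's descendants. The main work of the overall Theorem \ref{msfsth} — namely that the downward sampling walk in $\bs{\on{sample}}$ reaches leaf $\tau^{-1}(i)$ with probability exactly $p_i$ — will come in subsequent lemmas that telescope the conditional branch probabilities $p'_{\lchild{v_\delta}} / (p'_{\lchild{v_\delta}} + p'_{\rchild{v_\delta}})$; Lemma \ref{msfsl1} is just the first ingredient, giving the normalisation at the root so the telescoping product starts from $1$.
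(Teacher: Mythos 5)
Your proof is correct and follows essentially the same route as the paper: a reverse induction on depth establishing that each node stores the sum of the $p$-values over its leaf descendants, then evaluating at the root and using that $\bs{p} \in \spx{N}$ together with the zero-padding of indices in $\na{N'}\setminus\na{N}$. The only cosmetic difference is that you phrase the invariant as $p'_j = \sum_{l\in\des{j}} p'_l$ while the paper writes it directly as $p'_j = \sum_{l\in\des{j}} p_{\tau(l)}$, which are the same thing since leaves store $p_{\tau(\cdot)}$.
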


\begin{proof}
We first prove, via reverse induction on $\delta$ (i.e. from $\delta= H$ to $\delta=0$) then for all nodes $u$ at depth $\delta$ we have $p'_u=\sum_{j\in\des{u}}p_{\tau(j)}$. This is clearly the case when $\delta=H$ because then $u$ is a leaf so $p'_u=p_{\tau(u)}$ and $\des{u}=\{u\}$ so $\sum_{j\in\des{u}}p_j=p_{\tau(u)}$. Suppose the inductive hypothesis holds for $\delta=\delta'$ (for some $\delta'\in\na{H}$). Then, if $u$ is at depth $\delta'-1$, we have that $\lchild{u}$ and $\rchild{u}$ are a depth $\delta'$ so:
\begin{align}
\notag \sum_{j\in\des{u}}p_{\tau(j)}&=\sum_{j\in\des{\lchild{u}\cup\des{\rchild{u}}}}p_{\tau(j)}\\
\notag &=\sum_{j\in\des{\lchild{u}}}p_{\tau(j)}+\sum_{j\in\des{\rchild{u}}}p_{\tau(j)}\\
\notag &=p'_{\lchild{u}}+p'_{\rchild{u}}\\
\notag &=p'_u
 \end{align}
 so the inductive hypothesis holds for $\delta=\delta-1$. This proves the inductive hypothesis and hence that:
\begin{align}
\notag p'_{v_0}&=\sum_{j\in\des{v_0}}p_{\tau(j)}\\
\notag&=\sum_{k\in\na{N'}}p_k\\
\notag&=\sum_{k\in\na{N}}p_k\\
\notag&=1
\end{align}
\end{proof}
 
Now let $l:=\tau^{-1}(i)$ where $i$ is as in the theorem statement. For all $\delta\in\{0\}\cup\na{H}$, let $a_{\delta}$ be the ancestor of $l$ at depth $\delta$. We have the following lemma:

\begin{lemma}\label{msfsl2}
We have:
$$\mathbb{P}(l=v_{H}~|~v_{\delta}=a_{\delta})=\frac{p_i}{p'_{a_{\delta}}}$$
\end{lemma}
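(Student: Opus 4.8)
The plan is to prove Lemma~\ref{msfsl2} by reverse induction on $\delta$, running from $\delta = H$ down to $\delta = 0$, exploiting the recursive structure of the sampling procedure. The statement at $\delta = H$ is immediate: if $v_H = a_H$ then, since $a_H$ is the leaf $l$ itself, we have $l = v_H$ with probability $1$, and $p'_{a_H} = p'_l = p_{\tau(l)} = p_i$, so the right-hand side $p_i/p'_{a_H} = 1$ as well. This anchors the induction.

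For the inductive step I would assume the claim holds at depth $\delta + 1$ and establish it at depth $\delta < H$. Conditioning on $v_\delta = a_\delta$, the algorithm's $\bs{\on{sample}}$ routine sets $v_{\delta+1}$ to be $\lchild{a_\delta}$ with probability $p'_{\lchild{a_\delta}}/(p'_{\lchild{a_\delta}} + p'_{\rchild{a_\delta}})$ and $\rchild{a_\delta}$ otherwise. Exactly one of these two children is $a_{\delta+1}$ (the ancestor of $l$ at depth $\delta+1$), and if $v_{\delta+1}$ is the other child then $l$ cannot be a descendant and hence $l \neq v_H$. So, using the law of total probability over the choice of $v_{\delta+1}$ and then the inductive hypothesis on the branch where $v_{\delta+1} = a_{\delta+1}$:
\begin{align}
\notag \mathbb{P}(l = v_H \mid v_\delta = a_\delta) &= \mathbb{P}(v_{\delta+1} = a_{\delta+1} \mid v_\delta = a_\delta)\cdot\mathbb{P}(l = v_H \mid v_{\delta+1} = a_{\delta+1})\\
\notag &= \frac{p'_{a_{\delta+1}}}{p'_{\lchild{a_\delta}} + p'_{\rchild{a_\delta}}}\cdot\frac{p_i}{p'_{a_{\delta+1}}}\\
\notag &= \frac{p_i}{p'_{\lchild{a_\delta}} + p'_{\rchild{a_\delta}}}\,.
\end{align}
The key fact needed here is that $p'_{\lchild{a_\delta}} + p'_{\rchild{a_\delta}} = p'_{a_\delta}$, which is precisely how $p'$ is populated at internal nodes in the $\bs{\on{initialsation}}$ routine (for depths $\delta = 1,\dots,H-1$, and also for the root at depth $0$ — one should check the loop in the algorithm covers this, or note it follows from the descendant-sum characterisation established inside the proof of Lemma~\ref{msfsl1}). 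Substituting this gives $p_i/p'_{a_\delta}$, completing the induction.

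The main subtlety — rather than a genuine obstacle — is being careful about the conditioning and the Markov-like structure: the claim $\mathbb{P}(l = v_H \mid v_{\delta+1} = a_{\delta+1})$ should not depend on the history of how $v_{\delta+1}$ was reached, which holds because the descent from $v_{\delta+1}$ onward uses only fresh uniform random numbers $r_{\delta+1}, \dots, r_{H-1}$ and the tree weights, independent of $r_0,\dots,r_\delta$. One must also handle the edge case where a child has $p'$ equal to zero (so the conditional branching probability is degenerate) — but on the path to $l$ we have $p'_{a_{\delta+1}} \geq p_i$, and the lemma is only asserted for $i$ with, implicitly, the relevant ancestors carrying positive mass; if $p_i = 0$ the statement $\mathbb{P}(\bs{\on{sample}} = i) = p_i = 0$ holds trivially since $l$ is never selected. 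Finally, once Lemma~\ref{msfsl2} is in hand, Theorem~\ref{msfsth} follows by taking $\delta = 0$: $\mathbb{P}(\bs{\on{sample}} = i) = \mathbb{P}(l = v_H \mid v_0 = a_0) = p_i/p'_{v_0} = p_i$, where the last equality is Lemma~\ref{msfsl1} and $v_0 = a_0$ holds deterministically since $v_0$ is the root.
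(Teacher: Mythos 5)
Your proof is correct and takes essentially the same approach as the paper's: both run a reverse induction from $\delta=H$ down to $\delta=0$, and the inductive step in both combines the algorithm's branching probability $p'_{a_{\delta+1}}/(p'_{\lchild{a_\delta}}+p'_{\rchild{a_\delta}})$ with the inductive hypothesis and the identity $p'_{\lchild{a_\delta}}+p'_{\rchild{a_\delta}}=p'_{a_\delta}$, the paper making the Markov-property step explicit where you state it as an aside. Your observation that the $\bs{\on{initialsation}}$ loop runs only for $\delta=H-1,\dots,1$ and so never actually assigns $p'$ at the root is a real (if minor) infelicity in the paper's Algorithm 6 that Lemma~\ref{msfsl1} silently papers over, but it does not change the substance of either argument.
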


\begin{proof}

We prove, via reverse induction on $\delta$ (i.e. from $\delta=H$ to $\delta=0$)
 
When $\delta=H$ we have $a_{\delta}=l$ so:
\begin{align}
\notag\mathbb{P}(l=v_{H}~|~v_{\delta}=a_{\delta})&=\mathcal{P}(l=v_{H}~|~v_{H}=l)\\
\notag&=1\\
\notag&=\frac{p_i}{p_{\tau(l)}}\\
\notag&=\frac{p_i}{p'_{l}}\\
\notag&=\frac{p_i}{p'_{a_{\delta}}}
\end{align}
so the inductive hypothesis holds for $\delta=H$. Now suppose the inductive hypothesis holds for $\delta=\delta'$ (for some $\delta'\in\na{H}$). We now show that it holds for $\delta=\delta'-1$. Firstly, if $a_{\delta'}=\lchild{a_{\delta'-1}}$, we have:
\begin{align}
\notag\mathbb{P}(v_{\delta'}=a_{\delta'}~|~v_{\delta'-1}=a_{\delta'-1})&=\mathbb{P}\left(r_{\delta'-1}\leq \frac{p'_{\lchild{v_{\delta'-1}}}}{p'_{\lchild{v_{\delta'-1}}}+p'_{\rchild{v_{\delta'-1}}}}\right)\\
\notag&=\frac{p'_{\lchild{v_{\delta'-1}}}}{p'_{\lchild{v_{\delta'-1}}}+p'_{\rchild{v_{\delta'-1}}}}\\
\notag&=\frac{p'_{a_{\delta'}}}{p'_{\lchild{v_{\delta'-1}}}+p'_{\rchild{v_{\delta'-1}}}}
\end{align}
and if $a_{\delta'}=\rchild{a_{\delta'-1}}$, we have:
\begin{align}
\notag\mathbb{P}(v_{\delta'}=a_{\delta'}~|~v_{\delta'-1}=a_{\delta'-1})&=\mathbb{P}\left(r_{\delta'-1}> \frac{p'_{\lchild{v_{\delta'-1}}}}{p'_{\lchild{v_{\delta'-1}}}+p'_{\rchild{v_{\delta'-1}}}}\right)\\
\notag&=1-\frac{p'_{\lchild{v_{\delta'-1}}}}{p'_{\lchild{v_{\delta'-1}}}+p'_{\rchild{v_{\delta'-1}}}}\\
\notag&=\frac{p'_{\rchild{v_{\delta'-1}}}}{p'_{\lchild{v_{\delta'-1}}}+p'_{\rchild{v_{\delta'-1}}}}\\
\notag&=\frac{p'_{a_{\delta'}}}{p'_{\lchild{v_{\delta'-1}}}+p'_{\rchild{v_{\delta'-1}}}}
\end{align}
so in either case we have:
\begin{align}
\notag\mathbb{P}(v_{\delta'}=a_{\delta'}~|~v_{\delta'-1}=a_{\delta'-1})&=\frac{p'_{a_{\delta'}}}{p'_{\lchild{v_{\delta'-1}}}+p'_{\rchild{v_{\delta'-1}}}}\\
\notag&=\frac{p'_{a_{\delta'}}}{p'_{v_{\delta'-1}}}\\
\notag&=\frac{p'_{a_{\delta'}}}{p'_{a_{\delta'-1}}}
\end{align}
and hence, by the inductive hypothesis we have:
\begin{align}
\notag&\mathbb{P}(l=v_{H}~|~v_{\delta'-1}=a_{\delta'-1})\\
\notag=&\mathbb{P}(l=v_{H}~|~v_{\delta'}=a_{\delta'}\wedge v_{\delta'-1}=a_{\delta'-1})\mathbb{P}(v_{\delta'}=a_{\delta'}~|~v_{\delta'-1}=a_{\delta'-1})\\
\notag=&\mathbb{P}(l=v_{H}~|~v_{\delta'}=a_{\delta'})\mathbb{P}(v_{\delta'}=a_{\delta'}~|~v_{\delta'-1}=a_{\delta'-1})\\
\notag=&\frac{p_i}{p'_{a_{\delta}}}\mathbb{P}(v_{\delta'}=a_{\delta'}~|~v_{\delta'-1}=a_{\delta'-1})\\
\notag=&\frac{p_i}{p'_{a_{\delta}}}\frac{p'_{a_{\delta'}}}{p'_{a_{\delta'-1}}}\\
\notag=&\frac{p_i}{p'_{a_{\delta'-1}}}
\end{align}
so the inductive hypothesis holds for $\delta=\delta'-1$ and hence holds for all $\delta$.
\end{proof}

 Taking $\delta:=0$ in Lemma \ref{msfsl2}, and noting that the algorithm returns $i$ if and only it $v_{H}=l$ then gives us that the probability of returning $i$ is:
\begin{align}
\notag\mathbb{P}(l=v_{H})&=\mathbb{P}(l=v_{H}~|~v_{0}=a_{0})\\
\notag&=\frac{p_i}{p'_{a_{0}}}\\
\notag&=\frac{p_i}{p'_{v_{0}}}\\
&=p_i \label{indhypeq2}
\end{align}
where Equation \eqref{indhypeq2} comes from Lemma \ref{msfsl1}.

\hfill $\blacksquare$

\subsection{Proof of Theorem \ref{t1r1}}

Given $\bnac\in\lost{\g}^{\ntr}$ we have:
\begin{align}
\notag\avlsn{\stf{\tran}(\bstrat)}{\bnac}&=\frac{1}{\ntri{}}\sum_{t\in\na{\ntri{}}}\ints_{\act{\g}}\nac_t~[[\stf{\tran}(\bstrat)]_t(\bnac)]\\
\notag&=\frac{1}{\ntri{}}\sum_{t\in\na{\ntri{}}}\ints_{\act{\g}}\nac_t\left[\int_{\act{\h}}\pt{\tran}~[\strat{t}(\ft{\tran}(\bnac))]\right]\\
\notag&=\frac{1}{\ntri{}}\sum_{t\in\na{\ntri{}}}\ints_{\act{\g}}\nac_t\int_{\act{\h}}~[\pt{\tran}]~[\strat{t}(\ft{\tran}(\bnac))]\\
\notag&=\frac{1}{\ntri{}}\sum_{t\in\na{\ntri{}}}\ints_{\act{\g}}\int_{\act{\h}}\nac_t~[\pt{\tran}]~[\strat{t}(\ft{\tran}(\bnac))]\\
\notag&=\frac{1}{\ntri{}}\sum_{t\in\na{\ntri{}}}\ints_{\act{\h}}\int_{\act{\g}}\nac_t~[\pt{\tran}]~[\strat{t}(\ft{\tran}(\bnac))]\\
\notag&=\frac{1}{\ntri{}}\sum_{t\in\na{\ntri{}}}\ints_{\act{\h}}\left(\int_{\act{\g}}\nac_t~[\pt{\tran}]\right)~[\strat{t}(\ft{\tran}(\bnac))]\\
\notag&\leq\frac{1}{\ntri{}}\sum_{t\in\na{\ntri{}}}\ints_{\act{\h}}\ft{\tran}(\nac_t)~[\strat{t}(\ft{\tran}(\bnac))]\\
\notag&=\avlsn{\bstrat}{\ft{\tran}(\bnac)}
\end{align}

Now suppose we have $\rl,\rk\in\rplus$. Let:
$$\bar{\bnac}:=\operatorname{argmax}_{\bnac\in\abs{\rl}{\rk}}\{\avlsn{\stf{\tran}(\bstrat)}{\bnac}\}$$
where $\abs{\rl}{\rk}$ is the set of all $\bnac\in\lost{}^{\ntri{}}$ such that there exists $\lac\in\act{}$ with $\cplx{\g}(\lac)\leq\rk$ and $\avlsn{\bcstr{\lac}}{\bnac}\leq\rl$. Note that by the definition of generalised regret, and the above inequaltiy, we have:
$$\gnrs{\g}{\stf{\tran}(\bstrat)}(\rl,\rk)=\avlsn{\stf{\tran}(\bstrat)}{\bar{\bnac}}\leq\avlsn{\bstrat}{\ft{\tran}(\bar{\bnac})}$$
Since $\bar{\bnac}\in\abs{\rl}{\rk}$ choose $\lac\in\act{\g}$ such that with $\cplx{\g}(\lac)\leq\rk$ and $\avlsn{\bcstr{\lac}}{\bnac}\leq\rl$. Since $\cplx{\g}(\lac)\leq\rk$ we have, by definition of a transformation, that $\cplx{\h}(\at{\tran}(\lac))\leq\rk$. By definition of $\lm{\tran}(\rl,\rk)$ and the fact that $\hat{\phi}$ is non-negative and concave in its first argument, we have:
\begin{align}
\notag\avlsn{\bcstr{\at{\tran}(\lac)}}{\ft{\tran}(\bar{\bnac})}&=\frac{1}{\ntr}\sum_{t\in\na{\ntr}}[\ft{\tran}(\bar{\nac}_t)](\at{\tran}(\lac))\\
\notag&\leq\frac{1}{\ntr}\sum_{t\in\na{\ntr}}\lm{\tran}(\nac_t(\lac),\rk)\\
\notag&\leq\frac{1}{\ntr}\sum_{t\in\na{\ntr}}\hphi(\nac_t(\lac),\rk)\\
\notag&\leq\hphi\left(\frac{1}{\ntr}\sum_{t\in\na{\ntr}}\nac_t(\lac),\rk\right)\\
\notag&=\hphi\left(\avlsn{\bcstr{\lac}}{\bnac},\rk\right)\\
\notag&\leq\hphi({\rl},{\rk})
\end{align}
So we have $\at{\tran}(\lac)\in\act{\h}$ such that $\avlsn{\bcstr{\at{\tran}(\lac)}}{\ft{\tran}(\bar{\bnac})}\leq\hphi(\rl,\rk)$ and $\cplx{\h}(\at{\tran}(\lac))\leq\rk$. By definition of generalised regret, we then have:
$$\avlsn{\bstrat}{\ft{\tran}(\bar{\bnac})}\leq\gnrs{\h}{\bstrat}(\hphi(\rl,\rk),\rk)$$
Combining with the above inequality that $\gnrs{\g}{\stf{\tran}(\bstrat)}(\rl,\rk)\leq\avlsn{\bstrat}{\ft{\tran}(\bar{\bnac})}$ gives us the result.

\hfill $\blacksquare$

\subsection{Proof of Theorem \ref{dtth}}

We now analyse the strategy $\bstrat^{\dt}$. First, let $j$ be such that $\ttr{\ntri{}+1}=\nosg 2^j$ and let $\lac$ be an arbitrary element of $\act{}$.  For all $i\in\na{j}\cup\{0\}$ let $s_i$ be the first trial $t$ on which $\ttr{t}=2^i$. We define $s_{j+1}:=\ntr+1$ Note that, for all $i\in\na{j}\cup\{0\}$, we have that:
$$\{t\in\ntri{}~|~\ttr{t}=2^i\}=\{t\in\ntri~|~s_i\leq t<s_{i+1}\}$$
and for all $t'\in\{t\in\ntri{}~|~\ttr{t}=2^i\}$ we have $\tatr{t'}=s_i-1$. 

 We start with the following lemma:

\begin{lemma}\label{endeq1l}
For all $i\leq j$ we have:
$$\sum_{t=s_i}^{s_{i+1}-1}\ints_{\act{}}\nac_{t}~[\strat{t}^{\dt}(\bnac)] \leq \ntr\avlsn{\bstrat^{2^i}}{\shift{\bnac}{s_i-1, s_{i+1}-1}}$$
\end{lemma}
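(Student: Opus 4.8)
The plan is to prove that the inequality in Lemma~\ref{endeq1l} is in fact an \emph{equality}, which of course suffices. The route is: first unwind the recursion in Definition~\ref{dtdef} over the block of trials $\{s_i,\dots,s_{i+1}-1\}$ — on which, as already established, $\ttr{t}=2^i$ and $\tatr{t}=s_i-1$ — to obtain the closed form $\strat{t}^{\dt}(\bnac)=\strat{t-s_i+1}^{2^i}(\shift{\bnac}{s_i-1,\ntr})$; then reindex the sum on the left-hand side and use the ``strategy'' (causality) property together with $\zfun\equiv 0$ to recognise the result as $\ntr\,\avlsn{\bstrat^{2^i}}{\shift{\bnac}{s_i-1,s_{i+1}-1}}$. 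Note $s_{i+1}>s_i$, so $\shift{\bnac}{s_i-1,s_{i+1}-1}$ is well defined.

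First I would prove the closed form by induction on $t$ from $s_i$ to $s_{i+1}-1$. Base case $t=s_i$: if $i=0$ then $s_0=1$ and Definition~\ref{dtdef} gives directly $\strat{1}^{\dt}(\bnac)=\strat{1}^{\ttr{1}}(\shift{\bnac}{0,\ntr})=\strat{1}^{2^0}(\shift{\bnac}{0,\ntr})$; if $i\geq 1$ then trial $s_i-1$ (the last trial carrying threshold $2^{i-1}$) must have triggered the branch ``$\ltr{t}\geq 2\ttr{t}\nosg$'' since $\ttr{s_i}=2\ttr{s_i-1}$, so $\strat{s_i}^{\dt}(\bnac)=\strat{1}^{\ttr{s_i}}(\shift{\bnac}{s_i-1,\ntr})=\strat{1}^{2^i}(\shift{\bnac}{s_i-1,\ntr})$. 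For the inductive step, take $t$ with $s_i<t<s_{i+1}$; then $\ttr{t-1}=\ttr{t}=2^i$, so trial $t-1$ took the branch ``$\ltr{t}<2\ttr{t}\nosg$'', which yields $\tatr{t}=\tatr{t-1}=s_i-1$ and $\strat{t}^{\dt}(\bnac)=\strat{t-\tatr{t-1}}^{\ttr{t-1}}(\shift{\bnac}{\tatr{t-1},\ntr})=\strat{t-s_i+1}^{2^i}(\shift{\bnac}{s_i-1,\ntr})$, completing the induction.

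Then I would substitute this into $\sum_{t=s_i}^{s_{i+1}-1}\ints_{\act{}}\nac_t[\strat{t}^{\dt}(\bnac)]$ and reindex by $u:=t-s_i+1\in\{1,\dots,s_{i+1}-s_i\}$, obtaining $\sum_{u=1}^{s_{i+1}-s_i}\ints_{\act{}}\nac_{u+s_i-1}[\strat{u}^{2^i}(\shift{\bnac}{s_i-1,\ntr})]$. For such $u$ we have $u+s_i-1\leq s_{i+1}-1$, hence $\nac_{u+s_i-1}=\shift{\nac_u}{s_i-1,s_{i+1}-1}$; moreover for every $s\leq u-1$ one reads off from the definition of the shift that $\shift{\nac_s}{s_i-1,\ntr}=\nac_{s+s_i-1}=\shift{\nac_s}{s_i-1,s_{i+1}-1}$ (since $s\leq s_{i+1}-s_i-1$, neither shift reaches its $\zfun$ tail), so by the strategy property $\strat{u}^{2^i}(\shift{\bnac}{s_i-1,\ntr})=\strat{u}^{2^i}(\shift{\bnac}{s_i-1,s_{i+1}-1})$. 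This turns the sum into $\sum_{u=1}^{s_{i+1}-s_i}\ints_{\act{}}\shift{\nac_u}{s_i-1,s_{i+1}-1}[\strat{u}^{2^i}(\shift{\bnac}{s_i-1,s_{i+1}-1})]$. Finally, extending the range to all $u\in\na{\ntr}$ adds only terms with $u>s_{i+1}-s_i$, for which $\shift{\nac_u}{s_i-1,s_{i+1}-1}=\zfun$ and $\ints_{\act{}}\zfun[\mu]=0$ for any probability measure $\mu$ (as $\zfun(\lac)=0$ for all $\lac\in\act{}$); so the sum equals $\ntr\,\avlsn{\bstrat^{2^i}}{\shift{\bnac}{s_i-1,s_{i+1}-1}}$ by the definition of $\avlsn{\cdot}{\cdot}$, giving the claim. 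The only real obstacle is the index bookkeeping for the shift operator and the clean verification of the causality step; there is no analytic content beyond that.
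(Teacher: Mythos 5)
Your proof is correct and follows essentially the same route as the paper's: unwind the per-block recursion to get the closed form for $\strat{t}^{\dt}$, use the causality condition in the definition of a strategy to swap $\shift{\bnac}{s_i-1,\ntr}$ for $\shift{\bnac}{s_i-1,s_{i+1}-1}$, reindex, and extend the sum to $\na{\ntr}$ by observing that the added terms are integrals of $\zfun$ against probability measures and hence vanish. The only substantive difference is that you note — correctly — that the extension of the summation range adds nothing, so the statement is actually an equality; the paper writes the extension as two $\leq$ steps, but those are in fact equalities too, since for $t>s_{i+1}-1$ the shifted sequence has already hit its $\zfun$ tail. Your index bookkeeping ($\strat{t}^{\dt}(\bnac)=\strat{t-s_i+1}^{2^i}(\shift{\bnac}{s_i-1,\ntr})$) is also the correct one; the paper's displayed chain contains off-by-one slips ($\strat{t+1-\tatr{t}}$ and then $\strat{t-s_i}$) that your derivation silently fixes.
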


\begin{proof}
\begin{align}
\notag\sum_{t=s_i}^{s_{i+1}-1}\ints_{\act{}}\nac_{t}~[\strat{t}^{\dt}(\bnac)] &=\sum_{t=s_i}^{s_{i+1}-1}\ints_{\act{}}\nac_{t}~[\strat{t+1-\tatr{t}}^{\ttr{t}}(\shift{\bnac}{{\tatr{t},\ntr}})]\\
\notag&=\sum_{t=s_i}^{s_{i+1}-1}\ints_{\act{}}\nac_{t}~[\strat{t-s_i}^{\ttr{t}}(\shift{\bnac}{{{s_i-1,~\ntr}}})]\\
\label{chsheq}&=\sum_{t=s_i}^{s_{i+1}-1}\ints_{\act{}}\nac_{t}~[\strat{t-s_i}^{\ttr{t}}(\shift{\bnac}{{{s_i-1,~s_{i+1}-1}}})]\\
\notag&=\sum_{t=s_i}^{s_{i+1}-1}\ints_{\act{}}\shift{\nac_{t-s_i+1}}{s_i-1,~\ntr}~[\strat{t-s_i}^{\ttr{t}}(\shift{\bnac}{{{s_i-1,~s_{i+1}-1}}})]\\
\notag&=\sum_{t=s_i}^{s_{i+1}-1}\ints_{\act{}}\shift{\nac_{t-s_i+1}}{s_i-1,~s_{i+1}-1}~[\strat{t-s_i}^{\ttr{t}}(\shift{\bnac}{{{s_i-1,~s_{i+1}-1}}})]\\
\notag&\leq\sum_{t=s_i}^{\ntr}\ints_{\act{}}\shift{\nac_{t-s_i+1}}{s_i-1,~s_{i+1}-1}~[\strat{t-s_i}^{\ttr{t}}(\shift{\bnac}{{{s_i-1,~s_{i+1}-1}}})]\\
\notag&\leq\sum_{t=s_i}^{\ntr+s_i-1}\ints_{\act{}}\shift{\nac_{t-s_i+1}}{s_i-1,~s_{i+1}-1}~[\strat{t-s_i}^{\ttr{t}}(\shift{\bnac}{{{s_i-1,~s_{i+1}-1}}})]\\
\notag&=\sum_{t'=1}^{\ntr}\ints_{\act{}}\shift{\nac_{t'}}{s_i-1,~s_{i+1}-1}~[\strat{t'}^{\ttr{t}}(\shift{\bnac}{{{s_i-1,~s_{i+1}-1}}})]\\
\notag&=\sum_{t'=1}^{\ntr}\ints_{\act{}}\shift{\nac_{t'}}{s_i-1,~s_{i+1}-1}~[\strat{t'}^{2^i}(\shift{\bnac}{{{s_i-1,~s_{i+1}-1}}})]\\
\notag&=\ntr\avlsn{\bstrat^{2^i}}{\shift{\bnac}{s_i-1, s_{i+1}-1}}
\end{align}
We Equation \eqref{chsheq} comes from the fact that $\strat{t'}^{\ttr{t}}(\bnac')$ is independent of $\nac'_{t''}$ for all $t''>t'$ (for any $t'\in\ntr$ and $\bnac\in\lost{}^{\ntr}$).
\end{proof}

\begin{lemma}\label{endeq2l}
Given $i$ is such that $i\leq j$ and $\cplx{\g}(\lac)\leq 2^i$, we have:
$$\ntr\avlsn{\bstrat^{2^i}}{\shift{\bnac}{s_i-1, s_{i+1}-1}}\leq\nosg 2^i\ntr+\nosf\sum_{t'=s_i}^{s_{i+1}-1}\nac_{t'}(x)$$
\end{lemma}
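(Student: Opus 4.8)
The plan is to derive Lemma~\ref{endeq2l} directly from the assumed generalised-regret guarantee of $\bstrat^{2^i}$ on the game $\g^{2^i}$, applied to the shifted-and-zero-padded competitor sequence $\bnac':=\shift{\bnac}{s_i-1,s_{i+1}-1}$, using the constant strategy $\bcstr{\lac}$ as comparator.

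First I would compute the average loss of the constant strategy on $\bnac'$. Since $\cstr{\lac}{t}(\bnac')=\dpm{\lact}{\lac}$ for every $t$, integrating against a point mass gives $\avlsn{\bcstr{\lac}}{\bnac'}=\frac{1}{\ntr}\sum_{t\in\na{\ntr}}[\shift{\nac_t}{s_i-1,s_{i+1}-1}](\lac)$. By the definition of the shift operator the summands with $t\in\na{s_{i+1}-s_i}$ are $\nac_{s_i}(\lac),\dots,\nac_{s_{i+1}-1}(\lac)$, while the remaining summands equal $\zfun(\lac)=0$; hence $\avlsn{\bcstr{\lac}}{\bnac'}=\frac{1}{\ntr}\sum_{t'=s_i}^{s_{i+1}-1}\nac_{t'}(\lac)$. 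The hypothesis $i\leq j$ enters only to guarantee $s_i<s_{i+1}\leq\ntr+1$, so that the shift is well defined and its unpadded block occupies exactly the trials $s_i,\dots,s_{i+1}-1$.

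Next, in $\g^{2^i}$ the complexity function is $\cplx{\g^{2^i}}(\lac)=\indi{\cplx{\g}(\lac)>2^i}\ity$, which vanishes because $\cplx{\g}(\lac)\leq 2^i$ by hypothesis. Thus $\lac$ witnesses $\bnac'\in\abs{\rl'}{0}$ with $\rl':=\frac{1}{\ntr}\sum_{t'=s_i}^{s_{i+1}-1}\nac_{t'}(\lac)$, and the definition of generalised regret together with the assumed bound $\gnrs{\g^{\trsh}}{\bstrat^{\trsh}}(\rl,\rk)\leq\nosf\rl+\nosg\trsh+\rk$, instantiated at $\trsh=2^i$ and $\rk=0$, yields $\avlsn{\bstrat^{2^i}}{\bnac'}\leq\nosf\rl'+\nosg 2^i$. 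Multiplying both sides by $\ntr$ gives precisely the claimed inequality.

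I do not expect any real obstacle here: the argument is bookkeeping with the shift operator, and the only mildly delicate point is recognising that the comparator-complexity argument in the regret bound may be taken to be $0$ — this is legitimate exactly because $\cplx{\g^{\trsh}}$ is $\{0,\ity\}$-valued, and it is what makes the additive $+\rk$ term of the regret bound disappear, leaving only $\nosf\rl'+\nosg 2^i$.
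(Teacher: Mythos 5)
Your proof is correct and follows essentially the same approach as the paper's: both invoke the assumed generalised-regret bound for $\bstrat^{2^i}$ on the game $\g^{2^i}$ at $\rk=0$ (licit because $\cplx{\g^{2^i}}(\lac)=0$ once $\cplx{\g}(\lac)\leq 2^i$), unpack $\avlsn{\bcstr{\lac}}{\shift{\bnac}{s_i-1,s_{i+1}-1}}$ via the shift definition and the zero-padding $\zfun(\lac)=0$, and multiply by $\ntr$. The only cosmetic difference is that you evaluate the constant strategy's average loss before invoking the regret bound whereas the paper does it after; the underlying computation is identical.
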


\begin{proof}
We have $\cplx{\g^{2^i}}(\lac)=\cplx{\g}^{2^i}(\lac)=0$ so, by the generalised regret of $\bstrat^{2^i}$ we have:
\begin{align}
\notag&\ntr\avlsn{\bstrat^{2^i}}{\shift{\bnac}{s_i-1, s_{i+1}-1}}\\
\notag\leq& \ntr\gnrs{\g^{2^i}}{\bstrat^{2^i}}\left(\avlsn{\bcstr{\lac}}{\shift{\bnac}{s_i-1, s_{i+1}-1}},0\right)\\
\notag\leq& \ntr\nosf\avlsn{\bcstr{\lac}}{\shift{\bnac}{s_i-1, s_{i+1}-1}}+\ntr\nosg 2^i\\
\notag=&\ntr\nosg 2^i+\nosf\sum_{t=1}^T\ints_{\act{}}\shift{\nac_{t}}{s_i-1, s_{i+1}-1}~[\dpm{\act{}}{\lac}]\\
\notag=&\ntr\nosg 2^i+\nosf\sum_{t=1}^T\shift{\nac_{t}}{s_i-1, s_{i+1}-1}(x)\\
\notag=&\ntr\nosg 2^i+\nosf\sum_{t=1}^{(s_{i+1}-1)-(s_i-1)}\shift{\nac_{t}}{s_i-1, s_{i+1}-1}(x)+\nosf\sum_{t=(s_{i+1}-1)-(s_i-1)+1}^{T}\shift{\nac_{t}}{s_i-1, s_{i+1}-1}(x)\\
\notag=&\ntr\nosg 2^i+\nosf\sum_{t=1}^{s_{i+1}-s_i}\shift{\nac_{t}}{s_i-1, s_{i+1}-1}(x)+\nosf\sum_{t=(s_{i+1}-1)-(s_i-1)+1}^{T}\shift{\nac_{t}}{s_i-1, s_{i+1}-1}(x)\\
\notag=&\ntr\nosg 2^i+\nosf\sum_{t=1}^{s_{i+1}-s_i}\shift{\nac_{t}}{s_i-1, s_{i+1}-1}(x)+\nosf\sum_{t=T-(s_{i+1}-1)-(s_i-1)+1}^{T}\zfun(x)\\
\notag=&\ntr\nosg 2^i+\nosf\sum_{t=1}^{s_{i+1}-s_i}\shift{\nac_{t}}{s_i-1, s_{i+1}-1}(x)\\
\notag=&\ntr\nosg 2^i+\nosf\sum_{t=1}^{s_{i+1}-s_i}\nac_{t+s_i-1}(x)\\
\notag=&\ntr\nosg 2^i+\nosf\sum_{t'=s_i}^{s_{i+1}-1}\nac_{t'}(x)
\end{align}

\end{proof}

Combining lemmas \ref{endeq1l} and \ref{endeq2l} gives us the following lemma:
\begin{lemma}\label{leme6com}
We have:
$$\sum_{t=s_i}^{s_{i+1}-1}\ints_{\act{}}\nac_{t}~[\strat{t}^{\dt}(\bnac)] \leq \nosg 2^i\ntr+\nosf\sum_{t=s_i}^{s_{i+1}-1}\nac_{t}(x)$$
for all $i\in\mathbb{N}$ with $2^i\geq\cplx{\g^{\trsh}}(\lac)$ and $i\leq j$. 
\end{lemma}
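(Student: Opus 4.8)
The proof proceeds by chaining the two preceding lemmas through the common quantity $\ntr\,\avlsn{\bstrat^{2^i}}{\shift{\bnac}{s_i-1, s_{i+1}-1}}$. First I would verify that the two hypotheses stated in the lemma are exactly those needed to invoke Lemmas \ref{endeq1l} and \ref{endeq2l}. The condition $i\leq j$ is precisely the hypothesis of Lemma \ref{endeq1l}. For Lemma \ref{endeq2l} I need $\cplx{\g}(\lac)\leq 2^i$; since $\cplx{\g^{\trsh}}$ takes only the values $0$ and $\ity$ (with $\ity$ a surrogate for $\infty$), the assumption $2^i\geq\cplx{\g^{\trsh}}(\lac)$ forces $\cplx{\g^{\trsh}}(\lac)=0$, which is equivalent to $\cplx{\g}(\lac)\leq 2^i$; together with $i\leq j$ this is exactly what Lemma \ref{endeq2l} requires.

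Once the hypotheses are in place, the deduction is two lines. Lemma \ref{endeq1l} gives
$$\sum_{t=s_i}^{s_{i+1}-1}\ints_{\act{}}\nac_{t}~[\strat{t}^{\dt}(\bnac)] \leq \ntr\,\avlsn{\bstrat^{2^i}}{\shift{\bnac}{s_i-1, s_{i+1}-1}},$$
and Lemma \ref{endeq2l} gives
$$\ntr\,\avlsn{\bstrat^{2^i}}{\shift{\bnac}{s_i-1, s_{i+1}-1}}\leq\nosg 2^i\ntr+\nosf\sum_{t=s_i}^{s_{i+1}-1}\nac_{t}(\lac).$$
Combining these two inequalities and identifying the dummy summation index $t'$ of Lemma \ref{endeq2l} with $t$ yields the claimed bound.

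The step I expect to take the most care — though it is hardly an obstacle — is the elementary reconciliation of the complexity-threshold hypothesis described in the first paragraph, together with ensuring that the shift parameters $s_i-1, s_{i+1}-1$ are written identically in both invocations so that the middle term $\ntr\,\avlsn{\bstrat^{2^i}}{\shift{\bnac}{s_i-1, s_{i+1}-1}}$ cancels verbatim. This per-segment inequality is the form in which the bound will subsequently be summed over the segments $i=0,\dots,j$.
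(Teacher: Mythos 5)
Your proof is correct and is exactly the paper's argument: the paper's own proof of this lemma is simply ``Direct from lemmas \ref{endeq1l} and \ref{endeq2l}'', i.e.\ the same chaining through $\ntr\,\avlsn{\bstrat^{2^i}}{\shift{\bnac}{s_i-1, s_{i+1}-1}}$ that you describe. Your extra care in reconciling the hypothesis $2^i\geq\cplx{\g^{\trsh}}(\lac)$ with the condition $\cplx{\g}(\lac)\leq 2^i$ needed for Lemma \ref{endeq2l} is a reasonable reading of the paper's (admittedly loose) notation and does not change the argument.
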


\begin{proof}
Direct from lemmas \ref{endeq1l} and \ref{endeq2l}
\end{proof}

\begin{lemma}\label{dtlem1}
For any $k\leq j$ with $2^k\geq\cplx{\g^{\trsh}}(\lac)$ we have:
$$\sum_{t=s_k}^{\ntr}\ints_{\act{}}\nac_{t}~[\strat{t}^{\dt}(\bnac)]\leq\nosg(2^{j+1}-2^k)\ntr+\nosf\ntr\avlsn{\bcstr{\lac}}{\bnac}$$
\end{lemma}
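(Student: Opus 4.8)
The plan is to derive this bound by summing the single-segment estimate of Lemma~\ref{leme6com} over the consecutive segments indexed $i=k,k+1,\ldots,j$. Recall from the construction of the doubling trick that, for each $i\in\na{j}\cup\{0\}$, the set of trials on which $\ttr{t}=2^i$ is exactly $\{t : s_i\le t<s_{i+1}\}$, and that by convention $s_{j+1}:=\ntr+1$. Hence the trials $t$ with $s_k\le t\le\ntr$ partition into the disjoint blocks $[s_i,s_{i+1}-1]$ for $i=k,\ldots,j$, and therefore
\[
\sum_{t=s_k}^{\ntr}\ints_{\act{}}\nac_{t}~[\strat{t}^{\dt}(\bnac)]=\sum_{i=k}^{j}\ \sum_{t=s_i}^{s_{i+1}-1}\ints_{\act{}}\nac_{t}~[\strat{t}^{\dt}(\bnac)]\,.
\]

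The first thing I would check is that Lemma~\ref{leme6com} is applicable to every index in this range: since $k\le i$ we have $2^i\ge 2^k\ge\cplx{\g^{\trsh}}(\lac)$, and $i\le j$ by hypothesis, so for each such $i$ the lemma gives
\[
\sum_{t=s_i}^{s_{i+1}-1}\ints_{\act{}}\nac_{t}~[\strat{t}^{\dt}(\bnac)]\le\nosg 2^i\ntr+\nosf\sum_{t=s_i}^{s_{i+1}-1}\nac_{t}(\lac)\,.
\]
Summing over $i=k,\ldots,j$ and regrouping the second term into one sum over $s_k\le t\le\ntr$ yields
\[
\sum_{t=s_k}^{\ntr}\ints_{\act{}}\nac_{t}~[\strat{t}^{\dt}(\bnac)]\le\nosg\ntr\sum_{i=k}^{j}2^i+\nosf\sum_{t=s_k}^{\ntr}\nac_{t}(\lac)\,.
\]

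It then remains to handle the two remaining sums. The geometric series evaluates to $\sum_{i=k}^{j}2^i=2^{j+1}-2^k$, which produces the $\nosg(2^{j+1}-2^k)\ntr$ term. For the loss term, since every $\nac_t$ takes values in $\rplus$ we may enlarge the range of summation to all trials and then recognise the result as an average loss:
\[
\sum_{t=s_k}^{\ntr}\nac_{t}(\lac)\le\sum_{t=1}^{\ntr}\nac_{t}(\lac)=\sum_{t=1}^{\ntr}\ints_{\act{}}\nac_t~[\cstr{\lac}{t}(\bnac)]=\ntr\,\avlsn{\bcstr{\lac}}{\bnac}\,,
\]
using $\cstr{\lac}{t}(\bnac)=\dpm{\act{}}{\lac}$. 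Substituting both estimates into the previous display gives exactly the claimed inequality. I do not anticipate a genuine obstacle here; the only points that need care are the bookkeeping that the blocks $[s_i,s_{i+1}-1]$ tile $[s_k,\ntr]$ (which relies on the convention $s_{j+1}=\ntr+1$ recorded at the start of the proof of Theorem~\ref{dtth}) and the simple observation that the hypothesis $2^k\ge\cplx{\g^{\trsh}}(\lac)$ propagates to every larger index $i\ge k$, so that Lemma~\ref{leme6com} may legitimately be invoked term by term.
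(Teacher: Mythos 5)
Your proof is correct and follows essentially the same route as the paper's: decompose $[s_k,\ntr]$ into the consecutive blocks $[s_i,s_{i+1}-1]$ for $i=k,\ldots,j$, apply Lemma~\ref{leme6com} to each (using that the hypothesis $2^k\geq\cplx{\g^{\trsh}}(\lac)$ propagates to every $i\geq k$), sum the geometric series, and enlarge the remaining loss sum to the full range to recognise $\ntr\avlsn{\bcstr{\lac}}{\bnac}$.
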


\begin{proof}
\begin{align}
\notag\sum_{t=s_k}^{\ntr}\ints_{\act{}}\nac_{t}~[\strat{t}^{\dt}(\bnac)]&=\sum_{i=k}^j\sum_{t=s_i}^{s_{i+1}-1}\ints_{\act{}}\nac_{t}~[\strat{t}^{\dt}(\bnac)]\\
\label{E32eq2}&=\sum_{i=k}^j \left(\nosg2^i\ntr+\nosf\sum_{t=s_i}^{s_{i+1}-1}\nac_{t}(x)\right)\\
\notag&=\nosg(2^{j+1}-2^k)\ntr+\nosf\sum_{i=k}^j \sum_{t=s_i}^{s_{i+1}-1}\nac_{t}(x)\\
\notag&=\nosg(2^{j+1}-2^k)\ntr+\nosf\sum_{t=s_k}^{\ntr}\nac_{t}(x)\\
\notag&\leq\nosg(2^{j+1}-2^k)\ntr+\nosf\sum_{t=1}^{\ntr}\nac_{t}(x)\\
\notag&=\nosg(2^{j+1}-2^k)\ntr+\nosf\ntr\avlsn{\bcstr{\lac}}{\bnac}
\end{align}
where Equation \eqref{E32eq2} comes from Lemma \ref{leme6com}
\end{proof}

\begin{lemma}\label{Eeq33l}
For all $i\leq j$ we have:
$$\ltr{s_{i+1}-1}=\sum_{t=s_i}^{s_{i+1}-1}\ints_{\act{}}\nac_{t}~[\strat{t}^{\dt}(\bnac)]$$
where $\ltr{s_{i+1}-1}$ is as in Definition \ref{dtdef}.
\end{lemma}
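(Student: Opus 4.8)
The plan is to prove, by induction on $t$ inside each segment, the slightly stronger statement: for every $i\le j$ and every $t$ with $s_i\le t\le s_{i+1}-1$,
$$\ltr{t}=\sum_{s=s_i}^{t}\ints_{\act{}}\nac_{s}~[\strat{s}^{\dt}(\bnac)]\,,$$
and then specialise to $t=s_{i+1}-1$ to get the lemma. The engine of the induction is the observation, already recorded just before Lemma \ref{endeq1l}, that $\{t\mid\ttr{t}=2^i\}=\{t\mid s_i\le t<s_{i+1}\}$; this, together with the fact that each $\ttr{t+1}$ equals either $\ttr{t}$ or $2\ttr{t}$, pins down exactly which branch of Definition \ref{dtdef} fires at every step, which is all the recursion for $\ltr{}$ depends on.

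For the base case $t=s_i$ I would split on $i$. If $i=0$ then $s_0=1$ and $\ltr{1}=\ints_{\act{}}\nac_{1}~[\strat{1}^{\dt}(\bnac)]$ is exactly the initialisation clause of Definition \ref{dtdef}. If $i\ge1$ then $s_i-1\in\na{\ntr-1}$, and since $\ttr{s_i}=2^i\ne 2^{i-1}=\ttr{s_i-1}$ the update from trial $s_i-1$ to $s_i$ cannot be the non-reset branch (which would leave $\ttr{}$ unchanged); hence the reset branch fired, and that branch sets $\ltr{s_i}=\ints_{\act{}}\nac_{s_i}~[\strat{s_i}^{\dt}(\bnac)]$, which is the claim for the one-term sum. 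For the inductive step, assume the identity at some $t$ with $s_i\le t<s_{i+1}-1$. Then $t+1\le s_{i+1}-1<s_{i+1}$, so $\ttr{t+1}\ne 2^{i+1}$; since $\ttr{t}=2^i$ and $\ttr{t+1}\in\{2^i,2^{i+1}\}$, we get $\ttr{t+1}=2^i=\ttr{t}$, so the update $t\to t+1$ took the $\ltr{t}<2\ttr{t}\nosg$ branch, which gives $\ltr{t+1}=\ltr{t}+\ints_{\act{}}\nac_{t+1}~[\strat{t+1}^{\dt}(\bnac)]$; substituting the inductive hypothesis closes the step.

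The same argument also covers the last segment $i=j$, where $s_{j+1}-1=\ntr$: no reset can occur strictly inside it, precisely because $j$ is chosen maximal with $\ttr{\ntr+1}=\nosg 2^j$, so the accumulating branch is used throughout and the telescoping sum runs cleanly to $\ntr$. There is no real obstacle here; the only thing needing care is making the branch identification airtight at the two segment boundaries --- that crossing from $s_i-1$ into $s_i$ is \emph{forced} to be a reset (so $\ltr{}$ is re-initialised rather than accumulated) and that no spurious reset occurs in the interior of a segment --- and both are immediate consequences of the characterisation of $\{t\mid\ttr{t}=2^i\}$ invoked above, so once that is cited the induction is purely mechanical.
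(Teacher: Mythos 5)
Your proof is correct and follows essentially the same route as the paper's: the paper likewise reads the recursion for $\ltr{}$ directly off Definition \ref{dtdef} (reset at the start of a segment, accumulation inside it) and concludes by induction. You have merely spelled out the branch identification --- the forced reset at $t=s_i$ via $\ttr{s_i}\neq\ttr{s_i-1}$, the forced non-reset for $s_i\le t<s_{i+1}-1$, and the $i=0$ initialisation case --- all of which the paper's one-sentence proof leaves implicit.
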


\begin{proof}
From Definition \ref{dtdef} we have $\ltr{s_i}=\ints_{\act{}}\nac_{s_i}~[\strat{s_i}^{\dt}(\bnac)]$ and for all $t\in\na{\ntr}$ with $s_i\leq t< s_{i+1}$ we have $\ltr{t+1}:=\ltr{t}+\ints_{\act{}}\nac_{t+1}~[\strat{t+1}^{\dt}(\bnac)]$ so by induction we have the result.
\end{proof}

\begin{lemma}\label{dtlem2}
For any $k\leq j+1$  We have:
$$\sum_{t=1}^{s_{k}-1}\ints_{\act{}}\nac_{t}~[\strat{t}^{\dt}(\bnac)]\leq 2\nosg 2^k\ntr+\sum_{i=1}^{k-1}\maxl{2^i}$$
\end{lemma}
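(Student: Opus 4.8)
The plan is a segment-by-segment accounting: write the left-hand side as a sum over the doubling-trick segments $\seg{0},\dots,\seg{k-1}$, and bound the loss incurred during each segment $\seg{i}$ (on which the sub-strategy $\bstrat^{2^i}$ is running) by $2\cdot 2^i\nosg+\maxl{2^i}$ --- the first term being the doubling threshold of $\seg{i}$, the second accounting for the single trial on which the segment's running loss may overshoot that threshold.

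Concretely: since $k\le j+1$ we have $k-1\le j$, so Lemma~\ref{Eeq33l} applies for every $i\in\{0,\dots,k-1\}$ and gives $\ltr{s_{i+1}-1}=\sum_{t=s_i}^{s_{i+1}-1}\ints_{\act{}}\nac_{t}~[\strat{t}^{\dt}(\bnac)]$; adding these over $i$ and using $s_0=1$ yields $\sum_{t=1}^{s_k-1}\ints_{\act{}}\nac_{t}~[\strat{t}^{\dt}(\bnac)]=\sum_{i=0}^{k-1}\ltr{s_{i+1}-1}$. To bound each $\ltr{s_{i+1}-1}$: if $i+1\le j$ then $\seg{i}$ ended at trial $s_{i+1}-1$ precisely because its running loss first reached the threshold there, hence every strictly earlier trial $t$ of $\seg{i}$ had $\ltr{t}<2\cdot 2^i\nosg$; when $\seg{i}$ has at least two trials this gives $\ltr{s_{i+1}-2}<2\cdot 2^i\nosg$, and then by Definition~\ref{dtdef} together with the definition of $\maxl{2^i}$ we get $\ltr{s_{i+1}-1}=\ltr{s_{i+1}-2}+\ints_{\act{}}\nac_{s_{i+1}-1}~[\strat{s_{i+1}-1}^{\dt}(\bnac)]<2\cdot 2^i\nosg+\maxl{2^i}$, while if $\seg{i}$ consists of a single trial then $\ltr{s_{i+1}-1}\le\maxl{2^i}$ directly. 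The remaining case $i=j$ arises only when $k=j+1$ (so $s_{j+1}-1=\ntr$): here $\seg{j}$ never crossed its threshold, so $\ltr{t}<2\cdot 2^j\nosg$ for all its trials $t<\ntr$, and the same computation gives $\ltr{\ntr}<2\cdot 2^j\nosg+\maxl{2^j}$. Hence $\ltr{s_{i+1}-1}\le 2\cdot 2^i\nosg+\maxl{2^i}$ for every $i\le k-1$.

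Summing, $\sum_{t=1}^{s_k-1}\ints_{\act{}}\nac_{t}~[\strat{t}^{\dt}(\bnac)]\le 2\nosg\sum_{i=0}^{k-1}2^i+\sum_{i=0}^{k-1}\maxl{2^i}=2\nosg(2^k-1)+\maxl{1}+\sum_{i=1}^{k-1}\maxl{2^i}$, so it remains to absorb $2\nosg(2^k-1)+\maxl{1}$ into $2\nosg 2^k\ntr$. The geometric part is immediate: $2\nosg(2^k-1)\le 2\nosg 2^k\le 2\nosg 2^k\ntr$ since $\ntr\ge 1$, and this leaves a factor-$\ntr$ slack of $2\nosg 2^k(\ntr-1)$ which must cover the single leftover $\maxl{1}=\maxl{2^0}$ coming from $\seg{0}$ (where the strategy is run with $\trsh=1$). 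Equivalently one may organise the whole argument as an induction on $k$: the vacuous case $k=0$ starts it, and in the step one just adds $\seg{k}$'s contribution $\le 2\cdot 2^k\nosg+\maxl{2^k}$, which keeps the bound because $2\nosg 2^k\ntr+2\cdot 2^k\nosg\le 2\nosg 2^{k+1}\ntr$ for $\ntr\ge 1$ and because $\maxl{2^k}$ with $k\ge 1$ is exactly the newly admitted summand. Either way, all the content is concentrated in the base case $k=1$, i.e. in covering the $\maxl{2^0}$ overshoot of the very first segment together with the degenerate-segment and terminal-segment bookkeeping above; that is the part I expect to need care, the rest being the routine geometric-series estimate.
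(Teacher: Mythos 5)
Your approach is the paper's: write $\sum_{t=1}^{s_k-1}$ as a sum over doubling segments, use Lemma~\ref{Eeq33l} to identify each segment's total with $\ltr{s_{i+1}-1}$, bound this as the pre-doubling running loss (which is below threshold) plus a single overshoot step $\leq\maxl{2^i}$, and sum the geometric series. One quibble on the details: you take the threshold to be $2\cdot 2^i\nosg$, reading Definition~\ref{dtdef} verbatim, but that definition is missing a factor of $\ntr$ --- the threshold actually used both in Algorithm~\ref{alg5} and in every other proof step of this section (compare the paper's proof of this lemma and of Lemma~\ref{dtlem3}) is $2\ttr{t}\nosg\ntr$. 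With that, the geometric piece sums to $2\nosg\ntr(2^k-1)$, so the slack against $2\nosg 2^k\ntr$ is only $2\nosg\ntr$, not a full factor of $\ntr$.

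The more serious issue is the one you flag yourself. Because $s_0=1$, the decomposition correctly runs over $i=0,\dots,k-1$ and therefore produces a $\maxl{2^0}=\maxl{1}$ overshoot that the lemma's bound does not accommodate. You call this the ``part I expect to need care'' and gesture at absorbing it into the slack, but you do not carry the absorption out, and I do not think you can from the hypotheses of Subsection~\ref{GDTss}: with the correct threshold the needed inequality is $\maxl{1}\leq 2\nosg\ntr$, yet the assumed bound $\gnrs{\g^{\trsh}}{\bstrat^{\trsh}}(\rl,\rk)\leq\nosf\rl+\nosg\trsh+\rk$ only controls the \emph{average} loss of $\bstrat^{\trsh}$, not its worst single-trial loss $\maxl{\trsh}$, and nothing else in the setup constrains $\maxl{1}$. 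So your proof as written has a genuine gap. To your credit, you have actually located a gap in the paper's own argument: the paper's proof silently begins its final summation at $i=1$, omitting segment $\seg{0}$ without justification; what the calculation really yields is $\leq 2\nosg 2^k\ntr+\sum_{i=0}^{k-1}\maxl{2^i}$, i.e.\ the lemma (and in turn Theorem~\ref{dtth}) is missing a $\maxl{2^0}$ term. This extra term is harmless downstream, being dominated in the final $\mathcal{O}(\cdot)$ of Theorem~\ref{finth}, but neither your argument nor the paper's discharges it for the general doubling trick as stated.
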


\begin{proof}
Let $i$ be an arbitrary number such that $i<k$. Since, by Definition \ref{dtdef}, we must have that $\ltr{s_{i+1}-2}<2\nosg\ttr{s_{i+1}-2}\ntr=2\nosg2^i\ntr$ we must have:
\begin{align}
\notag\ltr{s_{i+1}-1}&<2\nosg 2^i\ntr+\ints_{\act{}}\nac_{t}~[\strat{s_{i+1}-1}^{\dt}(\bnac)]\\
\notag&=2\nosg2^i\ntr+\ints_{\act{}}\nac_{t}~\strat{s_{i+1}-1-\tatr{t}}^{2^i}(\shift{\bnac}{{s_{i+1}-1-\tatr{t},\ntr}})\\
\notag&\leq 2\nosg 2^i \ntr+ \maxl{2^i}
\end{align}
Substituting into the equality of Lemma \ref{Eeq33l} gives us:
\begin{align}
\notag\sum_{t=s_i}^{s_{i+1}-1}\ints_{\act{}}\nac_{t}~[\strat{t}^{\dt}(\bnac)]<2\nosg 2^i\ntr + \maxl{2^i}
\end{align}
so:
\begin{align}
\notag\sum_{t=1}^{s_{k}-1}\ints_{\act{}}\nac_{t}~[\strat{t}^{\dt}(\bnac)]&=\sum_{i=1}^{k-1}\sum_{t=s_i}^{s_{i+1}-1}\ints_{\act{}}\nac_{t}~[\strat{s_{i+1}-1}^{\dt}(\bnac)]\\
\notag&=\sum_{i=1}^{k-1}\left(2\nosg2^i \ntr+ \maxl{2^i}\right)\\
\notag&\leq 2\nosg 2^k\ntr+\sum_{i=1}^{k-1}\maxl{2^i}
\end{align}
\end{proof}

\begin{lemma}\label{dtlem3}
If $\cplx{\g}(\lac)\leq 2^{j-1}$ We have:
$$\nosg 2^{j-1}\leq \nosf\avlsn{\bcstr{\lac}}{\bnac}$$
\end{lemma}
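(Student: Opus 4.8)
The plan is to read the inequality off the defining property of $j$: in the set-up of this proof $j$ is the maximal level reached by the doubling trick over the $\ntr$ trials (so $\ttr{\ntr}=2^{j}$), whence, when $j\ge1$, there was a genuine doubling step from level $2^{j-1}$ to level $2^{j}$. The inequality that triggers that step forces the loss accumulated over the level-$2^{j-1}$ segment to be at least $2^{j}\nosg\ntr$, while Lemma~\ref{leme6com} caps that same accumulated loss at $\nosg 2^{j-1}\ntr+\nosf\sum_{t}\nac_t(\lac)$; subtracting the two bounds leaves exactly $\nosg 2^{j-1}\ntr\le\nosf\sum_t\nac_t(\lac)$, which is the claim after dividing by $\ntr$.

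In detail, first dispose of $j=0$: then the hypothesis $\cplx{\g}(\lac)\le 2^{j-1}=1/2$ can never hold, since every action of $\g$ has complexity at least $1$, so the lemma is vacuous; assume henceforth $j\ge1$. Recall that $s_i$ is the first trial at level $2^i$, that $t\mapsto\ttr{t}$ is non-decreasing, and (by the convention in this proof) $s_{j+1}=\ntr+1$; hence $s_{j-1}<s_j\le\ntr$ and $\ttr{t}=2^{j-1}$ for every $t\in\{s_{j-1},\dots,s_j-1\}$, in particular $\ttr{s_j-1}=2^{j-1}$. Because the level doubles between trial $s_j-1$ and trial $s_j$, the ``doubling'' branch of Definition~\ref{dtdef} is the one taken at $t=s_j-1$, and (exactly as in the proof of Lemma~\ref{dtlem2}) the inequality triggering it reads $\ltr{s_j-1}\ge 2\,\ttr{s_j-1}\,\nosg\,\ntr=2^{j}\nosg\ntr$.

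Now apply Lemma~\ref{Eeq33l} with $i=j-1$ to write $\ltr{s_j-1}=\sum_{t=s_{j-1}}^{s_j-1}\ints_{\act{}}\nac_t~[\strat{t}^{\dt}(\bnac)]$, and apply Lemma~\ref{leme6com} with $i=j-1$ — legitimate since by hypothesis $\cplx{\g}(\lac)\le 2^{j-1}$ and $j-1\le j$ — to bound that same sum above by $\nosg 2^{j-1}\ntr+\nosf\sum_{t=s_{j-1}}^{s_j-1}\nac_t(\lac)$. Chaining with the lower bound of the previous paragraph gives $2^{j}\nosg\ntr\le\nosg 2^{j-1}\ntr+\nosf\sum_{t=s_{j-1}}^{s_j-1}\nac_t(\lac)$, hence $\nosg 2^{j-1}\ntr\le\nosf\sum_{t=s_{j-1}}^{s_j-1}\nac_t(\lac)\le\nosf\sum_{t=1}^{\ntr}\nac_t(\lac)=\nosf\ntr\,\avlsn{\bcstr{\lac}}{\bnac}$, where the middle step uses $\nac_t\ge 0$ together with $\{s_{j-1},\dots,s_j-1\}\subseteq\{1,\dots,\ntr\}$ and the last equality is the definition of the constant strategy $\bcstr{\lac}$. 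Dividing by $\ntr$ yields $\nosg 2^{j-1}\le\nosf\,\avlsn{\bcstr{\lac}}{\bnac}$.

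\textbf{Main obstacle.} There is no deep difficulty; the only points that need care are (i) confirming the level-$2^{j-1}$ segment is non-empty, so that the threshold in force at trial $s_j-1$ is genuinely $2\cdot 2^{j-1}\nosg\ntr$, and (ii) keeping the factor $\ntr$ in the doubling threshold consistent across Definition~\ref{dtdef} and Lemmas~\ref{leme6com} and~\ref{dtlem2}, since it is exactly the identity $2^{j}-2^{j-1}=2^{j-1}$ that turns the gap between the doubling lower bound and the Lemma~\ref{leme6com} upper bound into the target coefficient $2^{j-1}$.
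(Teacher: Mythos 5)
Your proof is correct and follows essentially the same route as the paper's: invoke the doubling trigger to lower bound $\ltr{s_j-1}$ by $2^j\nosg\ntr$, rewrite $\ltr{s_j-1}$ via Lemma \ref{Eeq33l} as the loss accumulated over the level-$2^{j-1}$ segment, upper bound that sum via Lemma \ref{leme6com}, and subtract. The only difference is that you explicitly dispose of the vacuous case $j=0$ using $\min_{\lac}\cplx{\g}(\lac)=1$, a point the paper leaves implicit.
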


\begin{proof}
From Definition \ref{dtdef} we have $\ltr{s_j-1}\geq 2\cdot 2^{j-1}\nosg\ntr$ so, by lemmas \ref{leme6com} and \ref{Eeq33l} we have:
\begin{align}
\notag2^j\nosg\ntr&\leq\ltr{s_j-1}\\
\notag&=\sum_{t=s_{j-1}}^{s_{j}-1}\ints_{\act{}}\nac_{t}~[\strat{t}^{\dt}(\bnac)]\\
\notag&\leq\nosg 2^{j-1}\ntr+\nosf\sum_{t=s_{j-1}}^{s_{j}-1}\nac_{t}(\lac)\\
\notag&\leq\nosg 2^{j-1}\ntr+\nosf\sum_{t=1}^{\ntr}\nac_{t}(\lac)\\
\notag&=\nosg 2^{j-1}\ntr+\nosf\sum_{t=1}^T\ints_{\act{}}\nac_t~[\dpm{\act{}}{\lac}]\\
\notag&=\nosg 2^{j-1}\ntr+\nosf\ntr\avlsn{\bcstr{\lac}}{\bnac}
\end{align}
so $\nosg 2^{j-1}=(2-1)\nosg 2^{j-1}=\nosg 2^j-\nosg 2^{j-1}\leq\nosf\avlsn{\bcstr{\lac}}{\bnac}$
\end{proof}

\begin{lemma}\label{dtlem4}
If $\cplx{\g}(\lac)\leq 2^{j-1}$ then:
$$\avlsn{\bstrat^{\dt}}{\bnac}\leq 5\nosf\avlsn{\bcstr{\lac}}{\bnac}+2\nosg\cplx{\g}(\lac)+\frac{1}{\ntr}\sum_{i=1}^{k-1}\maxl{2^i}$$
where $k:=\min\{k\leq j~|~\cplx{\g}(\lac)\leq 2^{k}\}$
\end{lemma}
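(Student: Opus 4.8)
The plan is to split the total expected loss $\ntr\avlsn{\bstrat^{\dt}}{\bnac}=\sum_{t=1}^{\ntr}\ints_{\act{}}\nac_t~[\strat{t}^{\dt}(\bnac)]$ at the trial $s_k$, bound each of the two resulting pieces by an already-proved lemma, and then tidy up the constants using the minimality of $k$ together with Lemma~\ref{dtlem3}. First I would check that $k:=\min\{k\leq j~|~\cplx{\g}(\lac)\leq 2^k\}$ is well defined: the hypothesis $\cplx{\g}(\lac)\leq 2^{j-1}$ puts $j$ into that set (and forces $j\geq 1$, since $\cplx{\g}(\lac)\geq 1$), so the minimum exists and $0\leq k\leq j$. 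Minimality of $k$ then yields the two facts I will use below: $\cplx{\g}(\lac)\leq 2^k$, and, for $k\geq 1$, $\cplx{\g}(\lac)>2^{k-1}$, whence $2^k<2\cplx{\g}(\lac)$ in every case (the case $k=0$ being immediate from $2^0=1\leq\cplx{\g}(\lac)$).

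Next I would write $\ntr\avlsn{\bstrat^{\dt}}{\bnac}=\sum_{t=1}^{s_k-1}\ints_{\act{}}\nac_t~[\strat{t}^{\dt}(\bnac)]+\sum_{t=s_k}^{\ntr}\ints_{\act{}}\nac_t~[\strat{t}^{\dt}(\bnac)]$. To the first (prefix) sum I apply Lemma~\ref{dtlem2}, which is applicable because $k\leq j+1$, obtaining the bound $2\nosg 2^k\ntr+\sum_{i=1}^{k-1}\maxl{2^i}$. To the second (suffix) sum I apply Lemma~\ref{dtlem1}, which is applicable because $k\leq j$ and $2^k\geq\cplx{\g}(\lac)$, obtaining $\nosg(2^{j+1}-2^k)\ntr+\nosf\ntr\avlsn{\bcstr{\lac}}{\bnac}$. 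Adding the two bounds, the $-2^k$ coming from the suffix bound cancels one of the two copies of $2^k$ coming from the prefix bound, so the $\nosg$-contribution collapses to $\nosg(2^k+2^{j+1})\ntr$; dividing through by $\ntr$ gives $\avlsn{\bstrat^{\dt}}{\bnac}\leq\nosg 2^k+\nosg 2^{j+1}+\nosf\avlsn{\bcstr{\lac}}{\bnac}+\frac{1}{\ntr}\sum_{i=1}^{k-1}\maxl{2^i}$.

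Finally I would dispose of the two $\nosg$-terms. The first satisfies $\nosg 2^k\leq 2\nosg\cplx{\g}(\lac)$ by the minimality fact above. For the second, write $\nosg 2^{j+1}=4\nosg 2^{j-1}$ and invoke Lemma~\ref{dtlem3}, which applies precisely because $\cplx{\g}(\lac)\leq 2^{j-1}$, to get $\nosg 2^{j-1}\leq\nosf\avlsn{\bcstr{\lac}}{\bnac}$ and hence $\nosg 2^{j+1}\leq 4\nosf\avlsn{\bcstr{\lac}}{\bnac}$. Substituting both estimates into the previous display produces $5\nosf\avlsn{\bcstr{\lac}}{\bnac}+2\nosg\cplx{\g}(\lac)+\frac{1}{\ntr}\sum_{i=1}^{k-1}\maxl{2^i}$, which is exactly the claimed inequality. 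I do not expect any genuine obstacle here: the whole argument is bookkeeping with the segment boundaries $s_k$ and the indices $j,k$. The only spots that need a moment's care are extracting $2^k<2\cplx{\g}(\lac)$ from minimality (handling the $k=0$ boundary case via $\cplx{\g}(\lac)\geq 1$) and confirming that the hypothesis of Lemma~\ref{dtlem1}, stated there via $\cplx{\g^{\trsh}}$, is indeed implied by $2^k\geq\cplx{\g}(\lac)$.
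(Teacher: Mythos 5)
Your proof is correct and follows the same route as the paper: split $\ntr\avlsn{\bstrat^{\dt}}{\bnac}$ at $t=s_k$, bound the prefix via Lemma~\ref{dtlem2} and the suffix via Lemma~\ref{dtlem1}, then use minimality of $k$ to get $2^k<2\cplx{\g}(\lac)$ and Lemma~\ref{dtlem3} to replace $\nosg 2^{j+1}$ by $4\nosf\avlsn{\bcstr{\lac}}{\bnac}$. Your additional care about the boundary case $k=0$ (using $\cplx{\g}(\lac)\geq 1$) and about Lemma~\ref{dtlem1}'s hypothesis being stated via $\cplx{\g^{\trsh}}$ (which reads as a typo for $\cplx{\g}$, consistent with Lemma~\ref{endeq2l}) is sound and does not change the argument.
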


\begin{proof}
 Combining lemmas \ref{dtlem1}, \ref{dtlem2} and \ref{dtlem3} gives us:
\begin{align}
\notag\avlsn{\bstrat^{\dt}}{\bnac}&=\frac{1}{\ntr}\sum_{t=1}^{\ntr}\ints_{\act{}}\nac_{t}~[\strat{t}^{\dt}(\bnac)]\\
\notag&=\frac{1}{\ntr}\sum_{t=1}^{s_{k}-1}\ints_{\act{}}\nac_{t}~[\strat{t}^{\dt}(\bnac)]+\frac{1}{\ntr}\sum_{s_k}^{\ntr}\ints_{\act{}}\nac_{t}~[\strat{t}^{\dt}(\bnac)]\\
\notag&\leq 2\nosg 2^k+\frac{1}{\ntr}\sum_{i=1}^{k-1}\maxl{2^i}+\nosg((2^{j+1}-2^k)+\nosf\avlsn{\bcstr{\lac}}{\bnac})\\
\notag&=\nosf\avlsn{\bcstr{\lac}}{\bnac}+\nosg2^k+\nosg2^{j+1}+\frac{1}{\ntr}\sum_{i=1}^{k-1}\maxl{2^i}\\
\notag&\leq\nosf\avlsn{\bcstr{\lac}}{\bnac}+\nosg2^k+4\nosf\avlsn{\bcstr{\lac}}{\bnac}+\frac{1}{\ntr}\sum_{i=1}^{k-1}\maxl{2^i}
\end{align}
which, since $2^{k-1}<\cplx{\g}(\lac)$, is bounded above by:
\begin{equation}
\notag 5\nosf\avlsn{\bcstr{\lac}}{\bnac}+2\nosg\cplx{\g}(\lac)+\frac{1}{\ntr}\sum_{i=1}^{k-1}\maxl{2^i}
\end{equation}
\end{proof}

\begin{lemma}\label{dtlem5}
If $\cplx{\g}(\lac)>2^{j-1}$ then:
$$\avlsn{\bstrat^{\dt}}{\bnac}<8\nosg\cplx{\g}(\lac)+\frac{1}{\ntr}\sum_{i=1}^{j}\maxl{2^i}$$
\end{lemma}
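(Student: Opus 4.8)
The plan is to dispose of this remaining case — where the comparator action $\lac$ is so complex that $\cplx{\g}(\lac)$ may exceed even the last doubling level reached during the run, so that the segment-splitting argument behind Lemma~\ref{dtlem1} no longer applies — by bounding the cumulative expected loss \emph{globally}, over all segments at once, using the estimate already proved in Lemma~\ref{dtlem2}.

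Concretely, I would proceed as follows. Recall from the setup in the proof of Theorem~\ref{dtth} that the run decomposes into segments indexed by $i=0,1,\dots,j$ with $s_{j+1}:=\ntr+1$, so $s_{j+1}-1=\ntr$. Lemma~\ref{dtlem2} holds for every $k\le j+1$; taking $k:=j+1$ it reads
$$\sum_{t=1}^{\ntr}\ints_{\act{}}\nac_{t}~[\strat{t}^{\dt}(\bnac)]\le 2\nosg 2^{j+1}\ntr+\sum_{i=1}^{j}\maxl{2^i}.$$
Dividing by $\ntr$ gives $\avlsn{\bstrat^{\dt}}{\bnac}\le 4\nosg 2^{j}+\frac{1}{\ntr}\sum_{i=1}^{j}\maxl{2^i}$. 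Finally, the case hypothesis $\cplx{\g}(\lac)>2^{j-1}$ means $2^{j}<2\cplx{\g}(\lac)$, hence $4\nosg 2^{j}<8\nosg\cplx{\g}(\lac)$, which substituted above yields exactly the claimed strict inequality.

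I do not expect any genuine obstacle here; this is the easy complement to Lemma~\ref{dtlem4}. The only points needing a moment's care are bookkeeping ones: checking that applying Lemma~\ref{dtlem2} with $k=j+1$ is legitimate (it is, since $k\le j+1$ and $s_{j+1}-1=\ntr$), and that the degenerate case $j=0$ — where no doubling ever occurs, $s_1=\ntr+1$, and the sum $\sum_{i=1}^{0}\maxl{2^i}$ is empty — is consistent with the bound, which it is because $\cplx{\g}(\lac)\ge 1>1/2$ still gives $2^{0}<2\cplx{\g}(\lac)$. Together with Lemma~\ref{dtlem4}, this lemma then covers all values of $\cplx{\g}(\lac)$ and completes the proof of Theorem~\ref{dtth}.
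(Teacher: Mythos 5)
Your proof is correct and takes essentially the same route as the paper: you apply Lemma~\ref{dtlem2} with $k=j+1$, divide by $\ntr$, and use $2^{j-1}<\cplx{\g}(\lac)$ to pass from $4\nosg 2^{j}=8\nosg 2^{j-1}$ to the strict bound $8\nosg\cplx{\g}(\lac)$. The extra remarks about the legitimacy of $k=j+1$ and the $j=0$ edge case are sound bookkeeping but not needed beyond what the paper does.
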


\begin{proof}
By Lemma \ref{dtlem2} we have:
\begin{align}
\notag\avlsn{\bstrat^{\dt}}{\bnac}&=\frac{1}{\ntr}\sum_{t=1}^{\ntr}\ints_{\act{}}\nac_{t}~[\strat{t}^{\dt}(\bnac)]\\
\notag&=\frac{1}{\ntr}\sum_{t=1}^{s_{j+1}-1}\ints_{\act{}}\nac_{t}~[\strat{t}^{\dt}(\bnac)]\\
\notag&\leq 2\nosg 2^{j+1}+\frac{1}{\ntr}\sum_{i=1}^{j}\maxl{2^i}\\
\notag&=8\nosg 2^{j-1}+\frac{1}{\ntr}\sum_{i=1}^{j}\maxl{2^i}\\
\notag&<8\nosg\cplx{\g}(\lac)+\frac{1}{\ntr}\sum_{i=1}^{j}\maxl{2^i}
\end{align}
\end{proof}

\begin{lemma}
We have:
$$\avlsn{\bstrat^{\dt}}{\bnac}\leq 5\nosf\avlsn{\bcstr{\lac}}{\bnac}+8\nosg\cplx{\g}(\lac)+\frac{1}{\ntr}\sum_{i=1}^{\lceil\log_2(\cplx{\g}(\lac))\rceil}\maxl{2^i}$$
\end{lemma}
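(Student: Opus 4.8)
The plan is to obtain this lemma by an immediate case split on the size of $\cplx{\g}(\lac)$ relative to $2^{j-1}$, where $j$ and $\lac$ are the objects fixed at the start of the current proof, feeding the relevant one of Lemma~\ref{dtlem4} and Lemma~\ref{dtlem5} into the claimed bound. Two trivial facts do all the work of widening those bounds: every $\maxl{2^i}$ is non-negative (it is a maximum of Lebesgue integrals of functions in $\lost{}$, which take values in $\rplus$), and $5\nosf\avlsn{\bcstr{\lac}}{\bnac}\geq 0$. These let me freely add non-negative terms to, or extend the summation range in, the right-hand sides of Lemmas~\ref{dtlem4} and~\ref{dtlem5}.

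In the case $\cplx{\g}(\lac)\leq 2^{j-1}$, Lemma~\ref{dtlem4} applies and bounds $\avlsn{\bstrat^{\dt}}{\bnac}$ by $5\nosf\avlsn{\bcstr{\lac}}{\bnac}+2\nosg\cplx{\g}(\lac)+\tfrac{1}{\ntr}\sum_{i=1}^{k-1}\maxl{2^i}$, where $k=\min\{k\leq j~|~\cplx{\g}(\lac)\leq 2^{k}\}$. Using the standing assumption $\min_{\lac'\in\act{}}\cplx{\g}(\lac')=1$, so that $1\leq\cplx{\g}(\lac)\leq 2^{j-1}$, I would observe that this minimum is attained and equals $\lceil\log_2(\cplx{\g}(\lac))\rceil$, a value $\leq j-1\leq j$, so the constraint $k\leq j$ is inactive; hence $k-1\leq\lceil\log_2(\cplx{\g}(\lac))\rceil$, with the convention that $\sum_{i=1}^{-1}$ is the empty sum when $\cplx{\g}(\lac)=1$. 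Replacing $2\nosg$ by $8\nosg$ and extending the non-negative sum up to $\lceil\log_2(\cplx{\g}(\lac))\rceil$ then gives precisely the stated bound.

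In the case $\cplx{\g}(\lac)>2^{j-1}$, Lemma~\ref{dtlem5} gives $\avlsn{\bstrat^{\dt}}{\bnac}<8\nosg\cplx{\g}(\lac)+\tfrac{1}{\ntr}\sum_{i=1}^{j}\maxl{2^i}$. Since $\log_2(\cplx{\g}(\lac))>j-1$ and $\lceil\log_2(\cplx{\g}(\lac))\rceil$ is an integer, it is at least $j$, so $\sum_{i=1}^{j}\maxl{2^i}\leq\sum_{i=1}^{\lceil\log_2(\cplx{\g}(\lac))\rceil}\maxl{2^i}$; adding the non-negative term $5\nosf\avlsn{\bcstr{\lac}}{\bnac}$ to the right-hand side recovers the claimed inequality. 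The two cases are exhaustive (and if $\cplx{\g}(\lac)=\ity$ the bound is vacuous), so the lemma follows.

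There is no genuinely hard step here; the only thing needing care --- the ``main obstacle'' in the loose sense --- is the bookkeeping around $k$ and the degenerate parameter ranges: checking that the defining set of $k$ is non-empty, that $k-1=-1$ is handled by the empty-sum convention, that the constraint $k\le j$ is never the binding one, and that when $j\in\{0,1\}$ the split still sends $\cplx{\g}(\lac)=1$ to the correct branch. All of these collapse once one records the identity $k=\lceil\log_2(\cplx{\g}(\lac))\rceil$ in the first case and the inequality $\lceil\log_2(\cplx{\g}(\lac))\rceil\geq j$ in the second.
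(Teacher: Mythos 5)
Your proposal is correct and follows exactly the same route as the paper, whose proof of this lemma consists of the single line ``Direct from lemmas~\ref{dtlem4} and~\ref{dtlem5}.'' You have simply made explicit the routine bookkeeping that is left implicit there: the identification $k=\lceil\log_2(\cplx{\g}(\lac))\rceil$ in the first case, the inequality $\lceil\log_2(\cplx{\g}(\lac))\rceil\geq j$ in the second, and the non-negativity used to enlarge the coefficient $2\nosg$ to $8\nosg$ and to extend the summation range.
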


\begin{proof}
Direct from lemmas \ref{dtlem4} and \ref{dtlem5}
\end{proof}

So we have shown that for any $\lac\in\act{}$ and $\bnac\in\lost{}^{\ntr}$ we have: $$\avlsn{\bstrat^{\dt}}{\bnac}\leq 5\nosf\avlsn{\bcstr{\lac}}{\bnac}+8\nosg\cplx{\g}(\lac)+\frac{1}{\ntr}\sum_{i=1}^{\lceil\log_2(\cplx{\g}(\lac))\rceil}\maxl{2^i}$$ Theorem \ref{dtth} follows.

\hfill $\blacksquare$

\subsection{Proof of Theorem \ref{t1r2}}

Given vectors $\bs{a},\bs{b}\in\spx{N}$ we let $\rele{\bs{a},\bs{b}}$ be the relative entropy between $\bs{a}$ and $\bs{b}$. That is:
$$\rele{\bs{a}}{\bs{b}}=\sum_{i\in\na{N}}a_i\ln\left(\frac{a_i}{b_i}\right)$$
It is a standard result that $\rele{\bs{a}}{\bs{b}}\geq 0$.

Suppose now that we have some $\bnac\in\lost{\cod}^{\ntr}$. Let $\bwst$ be an arbitrary vector in $\spx{N}$ and for all $t\in\ntr$ let $\bwtf{t}$ be as defined in the algorithm (with respect to $\bnac$) .

Since $f_t$ is convex we have, by definition of a convex function:
$$f_t(\bwtf{t})-f_t(\bwst) \leq (\grd{f_t}{\bwtf{t}})\cdot(\bwtf{t}-\bwst)$$
so, by letting $\bgv{t}=\grd{f_t}{\bwtf{t}}$ we have:
\begin{equation}\label{t1r2e1}
\sum_{t\in\ntr}(f_t(\bwtf{t})-f_t(\bwst))\leq\sum_{t\in\ntr}(\bgv{t}\cdot\bwtf{t}-\bgv{t}\cdot\bwst)
\end{equation}
Let: 
$$\nrmc{t}:=\sum_{i\in\na{N}}\wtf{t}_{i}\exp(-\lr\gv{t}{i})$$
Since, for all $t\in\na{\ntr-1}$, we have $\wtf{t+1}_{i}=\wtf{t}_{i}\exp(-\lr\gv{t}{i})/\nrmc{t}$ we obtain:
\begin{align}
\notag\rele{\bwst}{\bwtf{t}}-\rele{\bwst}{\bwtf{t+1}}&=\sum_{i\in\na{N}}\wst{i}\ln\left(\frac{\wtf{t+1}_i}{\wtf{t}_i}\right)\\
\notag&=\sum_{i\in\na{N}}\wst{i}\ln\left(\frac{\exp(-\lr\gv{t}{i})}{\nrmc{t}}\right)\\
\notag&=-\lr\sum_{i\in\na{N}}\wst{i}\gv{t}{i}-\sum_{i\in\na{N}}\wst{i}\ln(\nrmc{t})\\
\notag&=-\lr\bwst\cdot\bgv{t}-\ln(\nrmc{t})\\
\notag&=-\lr\bwst\cdot\bgv{t}-\ln\left(\sum_{i\in\na{N}}\wtf{t}_{i}\exp(-\lr\gv{t}{i})\right)\\
\label{t1r2l1}&\geq -\lr\bwst\cdot\bgv{t}-\ln\left(\sum_{i\in\na{N}}\wtf{t}_{i}\left(1-\lr\gv{t}{i}+\frac{1}{2}\lr^2\left(\gv{t}{i}\right)^2\right)\right)\\
\notag&=-\lr\bwst\cdot\bgv{t}-\ln\left(1-\lr\bwtf{t}\cdot\bgv{t}+\frac{1}{2}\lr^2\sum_{i\in\na{N}}\wtf{t}_i(\gv{t}{i})^2\right)\\
\label{t1r2l2}&=-\lr(\bwst\cdot\bgv{t}-\bwtf{t}\cdot\bgv{t})-\frac{1}{2}\lr^2\sum_{i\in\na{N}}\wtf{t}_i(\gv{t}{i})^2\\
\notag&\geq-\lr(\bwst\cdot\bgv{t}-\bwtf{t}\cdot\bgv{t})-\frac{1}{2}\lr^2\sum_{i\in\na{N}}\wtf{t}_iG^2\\
\notag&=-\lr(\bwst\cdot\bgv{t}-\bwtf{t}\cdot\bgv{t})-\frac{1}{2}\lr^2G^2
\end{align}
where equations \eqref{t1r2l1} and \eqref{t1r2l2} come from the inequalities $\exp(-x)\leq 1-x+x^2/2$ (for $x\geq 0$) and $\ln(1+x)\leq x$ respectively.

So we have:
$$\rele{\bwst}{\bwtf{t}}-\rele{\bwst}{\bwtf{t+1}}\geq -\lr(\bwst\cdot\bgv{t}-\bwtf{t}\cdot\bgv{t})-\frac{1}{2}\lr^2G^2$$
Taking a telescoping sum (over $t\in\na{\ntr}$) gives us:
$$\rele{\bwst}{\bwtf{1}}-\rele{\bwst}{\bwtf{T+1}}\geq\sum_{t\in\na{\ntr}}-\lr(\bwst\cdot\bgv{t}-\bwtf{t}\cdot\bgv{t})-\frac{1}{2}\lr^2G^2\ntr$$
so, since relative entropies are positive, we obtain:
$$\rele{\bwst}{\bwtf{1}}\geq\sum_{t\in\na{\ntr}}-\lr(\bwst\cdot\bgv{t}-\bwtf{t}\cdot\bgv{t})-\frac{1}{2}\lr^2G^2\ntr$$
which, upon rearranging and substituting into Equation \eqref{t1r2e1} gives us:
\begin{align}
\notag\sum_{t\in\ntr}(f_t(\bwtf{t})-f_t(\bwst))&\leq\sum_{t\in\ntr}(\bgv{t}\cdot\bwtf{t}-\bgv{t}\cdot\bwst)\\
\notag&\leq\frac{1}{\lr}\rele{\bwst}{\bwtf{1}}+\frac{1}{2}\lr G^2T\\
\notag&=\frac{1}{\lr}\sum_{i\in\na{N}}\wst{i}\ln\left(N\wst{i}\right)+\frac{1}{2}\lr G^2T\\
\notag&\leq\frac{1}{\lr}\sum_{i\in\na{N}}\wst{i}\ln\left(N\right)+\frac{1}{2}\lr G^2T\\
\notag&\leq\frac{1}{\lr}\ln\left(N\right)+\frac{1}{2}\lr G^2T\\
\notag&=G\sqrt{2T\ln(N)}
\end{align}

This implies that:
\begin{align}
\notag\avlsn{\bnstrat{\oneg(N,G)}}{\bnac}&=\frac{1}{T}\sum_{t\in\na{\ntr}}\int_{\spx{N}}f_t~[\nstrat{EG}{t}]\\
\notag&=\frac{1}{T}\sum_{t\in\na{\ntr}}\int_{\spx{N}}f_t~[\dpm{\spx{N}}{\bwtf{t}}]\\
\notag&=\frac{1}{T}\sum_{t\in\na{\ntr}}f_t(\bwtf{t})\\
\notag&=\frac{1}{T}\left(\sum_{t\in\ntr}(f_t(\bwtf{t})-f_t(\bwst))\right)+\frac{1}{T}\sum_{t\in\ntr}f_t(\bwst)\\
\notag&\leq G\sqrt{\frac{2\ln(N)}{T}}+\frac{1}{T}\sum_{t\in\ntr}f_t(\bwst)\\
\notag&=G\sqrt{\frac{2\ln(N)}{T}}+\frac{1}{T}\sum_{t\in\ntr}\int_{\spx{N}}f_t~[\dpm{\spx{N}}{\bwst}]\\
\notag&=G\sqrt{\frac{2\ln(N)}{T}}+\frac{1}{T}\sum_{t\in\ntr}\int_{\spx{N}}f_t~[\cstr{\bwst}{t}]\\
\label{t1r2l3}&\leq G\sqrt{\frac{2\ln(N)}{T}}+\avlsn{\bcstr{\bwst}}{\bnac}
\end{align}

We are now ready to bound the generalised regret. Suppose we have $\rl,\rk\in\rplus$ and assume $\bnac\in\lost{\oneg(N,G)}$ and $\bs{u}\in\act{\oneg(N,G)}$ are such that $\cplx{\oneg(N,G)}(\bs{u})\leq\rk$ and $\avlsn{\bcstr{\bs{u}}}{\bnac}\leq\rl$. By Equation \eqref{t1r2l3} we have:
$$\avlsn{\bnstrat{\oneg(N,G)}}{\bnac}\leq \avlsn{\bcstr{\bwst}}{\bnac}+ G\sqrt{\frac{2\ln(N)}{T}}\leq \rl + G\sqrt{\frac{2\ln(N)}{T}}$$
Maximising across all $\bs{u}$ gives us the result.

\hfill $\blacksquare$

\subsection{Proof of Theorem \ref{fnst1}}

Suppose we have $\bocv{}\in[0,C]$ and $\bdv\in[0,D]$. We first show that $\ft{\tro}(\fll{\bocv{}}{\bdv})$ is convex. This is true since, first, $\sum_{j\in\na{i}}w_{\ord(\bdv,j)}$ is linear (and hence convex) and the function $g:\rplus\rightarrow\rplus$ with $g(x):=x^{\tfns}$ is convex, and hence $\left(\sum_{j\in\na{i}}w_{\ord(\bdv,j)}\right)^{\tfns}$ is a convex function of a convex function and hence  convex. Hence $\sum_{i\in\na{N-1}}\left(\dv{\ord(\bdv,i)}-\dv{\ord(\bdv,i+1)}\right)\left(\sum_{j\in\na{i}}w_{\ord(\bdv,j)}\right)^{\tfns}$ is a positive sum of convex functions and hence convex. Since $\tfns\bocv{}\cdot\bs{w}+\dv{\ord(\bdv,N)}$ is linear and hence convex, we then have that $\ft{\tro}(\fll{\bocv{}}{\bdv})$ is a positive sum of two convex functions and is hence convex.

We now show that $\inorm{\grd{\ft{\tro}(\fll{\bocv{}}{\bdv})}{\bs{w}}}\leq \tfns(C+D)$ for all $\bs{w}\in\spx{N}$ which will complete the proof. We have:
\begin{align}
\notag\pgrd{[\ft{\tro}(\fll{\bocv{}}{\bdv})]}{\bs{w}}{\ord(\bdv,i)}&=\tfns\ocv{}{\ord(\bdv,i)}+\tfns\sum_{j=i}^{N-1}\left(\dv{\ord(\bdv,i)}-\dv{\ord(\bdv,i+1)}\right)\left(\sum_{j\in\na{i}}w_{\ord(\bdv,j)}\right)^{\tfns-1}\\
\notag&\leq\tfns\ocv{}{\ord(\bdv,i)}+\tfns\sum_{j=i}^{N-1}\left(\dv{\ord(\bdv,i)}-\dv{\ord(\bdv,i+1)}\right)\left(\sum_{j\in\na{N}}w_{\ord(\bdv,j)}\right)^{\tfns-1}\\
\notag&\leq\tfns\ocv{}{\ord(\bdv,i)}+\tfns\sum_{j=i}^{N-1}\left(\dv{\ord(\bdv,i)}-\dv{\ord(\bdv,i+1)}\right)1^{\tfns-1}\\
\notag&=\tfns\ocv{}{\ord(\bdv,i)}+\tfns\sum_{j=i}^{N-1}\left(\dv{\ord(\bdv,i)}-\dv{\ord(\bdv,i+1)}\right)\\
\notag&\leq\tfns\ocv{}{\ord(\bdv,i)}+\tfns\dv{\ord(\bdv,i)}\\
\notag&\leq\tfns C+\tfns D
\end{align}

\hfill $\blacksquare$

\subsection{Proof of Theorem \ref{fnst2}}

Suppose we have some $\bocv{}\in[0,C]$, $\bdv\in[0,D]$ and $\bs{w}\in\spx{N}$. For $\spf\in\pows{\na{N}}$ we define $g(\spf):=\sum_{i\in\spf}\ocv{}{i}$ and for all $i\in\na{N}$ we define $h_i(\spf):=\indi{\spf\subseteq\{\ord(\bdv,j):j\in \na{i}\}}$ We start with the following lemma:

\begin{lemma}\label{pal1}
We have:
$$\int_{\act{\onflo}}g~[\pt{\tro}(\bs{w})]\leq\tfns\bs{w}\cdot\boc$$
\end{lemma}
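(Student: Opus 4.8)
The plan is to unfold the integral against the discrete measure $\pt{\tro}(\bs{w})$ and then reduce everything to a counting estimate. Recall that $\pt{\tro}(\bs{w})$ is the mixture, over sequences $\biid\in\na{\nsi}^{\tfns}$ each weighted by $\prod_{k\in\na{\tfns}}w_{\iid{k}}$, of the point mass at $\mts{}(\biid)$. Since integrating any function against a point mass returns its value there, we get
$$\int_{\act{\onflo}}g~[\pt{\tro}(\bs{w})]=\sum_{\biid\in\na{\nsi}^{\tfns}}\left(\prod_{k\in\na{\tfns}}w_{\iid{k}}\right)g(\mts{}(\biid)).$$
So it suffices to bound the right-hand side by $\tfns\bs{w}\cdot\boc$.

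Next I would bound $g(\mts{}(\biid))$ pointwise. By definition $g(\mts{}(\biid))=\sum_{i\in\mts{}(\biid)}\ocv{}{i}=\sum_{i\in\na{\nsi}}\ocv{}{i}\,\indi{i\in\mts{}(\biid)}$. The key observation, reminiscent of a union bound, is that the indicator that site $i$ occurs \emph{somewhere} in the sequence $\biid$ is at most the \emph{number of positions} at which it occurs: since $\ocv{}{i}\ge 0$ and $\sum_{j\in\na{\tfns}}\indi{\iid{j}=i}\ge 1$ whenever $i\in\mts{}(\biid)$, we have $\indi{i\in\mts{}(\biid)}\le\sum_{j\in\na{\tfns}}\indi{\iid{j}=i}$. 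Substituting gives $g(\mts{}(\biid))\le\sum_{i\in\na{\nsi}}\sum_{j\in\na{\tfns}}\ocv{}{i}\,\indi{\iid{j}=i}$.

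Finally I would plug this estimate into the sum, interchange the (finite) summations over $\biid$, $i$ and $j$, and evaluate, for each fixed $i$ and $j$, the quantity $\sum_{\biid\in\na{\nsi}^{\tfns}}\left(\prod_{k\in\na{\tfns}}w_{\iid{k}}\right)\indi{\iid{j}=i}$. Factoring the product coordinatewise, this equals $w_i\prod_{k\ne j}\left(\sum_{s\in\na{\nsi}}w_s\right)=w_i$, because $\bs{w}\in\spx{\nsi}$ makes each marginal sum equal to $1$. Hence
$$\int_{\act{\onflo}}g~[\pt{\tro}(\bs{w})]\le\sum_{i\in\na{\nsi}}\sum_{j\in\na{\tfns}}\ocv{}{i}w_i=\tfns\sum_{i\in\na{\nsi}}\ocv{}{i}w_i=\tfns\bs{w}\cdot\boc,$$
which is exactly the claim. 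The only mildly delicate step is the factorisation of $\sum_{\biid}\prod_k w_{\iid{k}}$ into a product of marginals (and the harmless reordering of the finite sums); the conceptual crux is the elementary ``number of occurrences dominates the occurrence indicator'' inequality, and I do not expect any real obstacle beyond careful bookkeeping.
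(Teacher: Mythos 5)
Your argument is correct and is essentially the paper's own proof: both unfold the integral against the discrete mixture $\pt{\tro}(\bs{w})$, both bound the opening cost of the induced set by the sum over the $\tfns$ sampled positions (you phrase this as $\indi{i\in\mts{}(\biid)}\le\sum_j\indi{\iid{j}=i}$; the paper writes $g(\mts{}(\biid))\le\sum_{j\in\na{\tfns}}\oc{\iid{j}}$ directly), and both then factor the resulting product over coordinates using $\bs{w}\in\spx{N}$. The extra intermediate sum over $i$ in your version is cosmetic and collapses to the same computation.
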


\begin{proof}
We have:
\begin{align}
\notag\int_{\act{\onflo}}g~[\pt{\tro}(\bs{w})]&=\int_{\act{\onflo}}g~d\left[\sum_{\biid\in\na{\nsi}^{\tfns}}\dpm{\act{\flf{\fns}}}{\mts{}(\biid)}\prod_{i\in\na{\tfns}}w_{\iid{i}}\right]\\
\notag&=\sum_{\biid\in\na{\nsi}^{\tfns}}\left(\prod_{i\in\na{\tfns}}w_{\iid{i}}\right)\int_{\act{\flf{\fns}}}g~d\left[\dpm{\act{\flf{\fns}}}{\mts{}(\biid)}\right]\\
\notag&=\sum_{\biid\in\na{\nsi}^{\tfns}}\left(\prod_{i\in\na{\tfns}}w_{\iid{i}}\right)g(\mts{}(\biid))\\
\notag&\leq\sum_{\biid\in\na{\nsi}^{\tfns}}\left(\prod_{i\in\na{\tfns}}w_{\iid{i}}\right)\sum_{j\in\na{\tfns}}\oc{\iid{j}}\\
\notag&=\sum_{j\in\na{\tfns}}\sum_{\biid\in\na{\nsi}^{\tfns}}\left(\prod_{i\in\na{\tfns}}w_{\iid{i}}\right)\oc{\iid{j}}\\
\end{align}
We now analyse the term $\sum_{\biid\in\na{\nsi}^{\tfns}}\left(\prod_{i\in\na{\tfns}}w_{\iid{i}}\right)\oc{\iid{j}}$ for all $j\in\na{N}$. Without loss of generality let $j=N$. Then we have:
\begin{align}
\notag\sum_{\biid\in\na{\nsi}^{\tfns}}\left(\prod_{i\in\na{\tfns}}w_{\iid{i}}\right)\oc{\iid{j}}&=\sum_{\biid\in\na{\nsi}^{\tfns}}\left(\prod_{i\in\na{\tfns}}w_{\iid{i}}\right)\oc{\iid{N}}\\
\notag&=\sum_{\iid{N}\in\na{N}}\sum_{\biid\in\na{N}^{\tfns-1}}\left(\prod_{i\in\na{\tfns}}w_{\iid{i}}\right)\oc{\iid{N}}\\
\notag&=\sum_{\iid{N}\in\na{N}}w_{\iid{N}}\oc{\iid{N}}\sum_{\biid\in\na{N}^{\tfns-1}}\left(\prod_{i\in\na{\tfns-1}}w_{\iid{i}}\right)\\
\notag&=\sum_{\iid{N}\in\na{N}}w_{\iid{N}}\oc{\iid{N}}\left(\prod_{i\in\na{\tfns-1}}\sum_{\iid{i}\in\na{N}}w_{\iid{i}}\right)\\
\notag&=\sum_{\iid{N}\in\na{N}}w_{\iid{N}}\oc{\iid{N}}\left(\prod_{i\in\na{\tfns-1}}1\right)\\
\notag&=\sum_{\iid{N}\in\na{N}}w_{\iid{N}}\oc{\iid{N}}\\
\notag&=\bs{w}\cdot\boc
\end{align}
Substituting into the above gives us the result.
\end{proof}

\begin{lemma}\label{pal2}
For all $i\in\na{N}$ we have:
$$\int_{\act{\onflo}}h_i~[\pt{\tro}(\bs{w})]=\left(\sum_{j\in\na{i}}w_{\ord(\bdv,j)}\right)^{\tfns}$$
\end{lemma}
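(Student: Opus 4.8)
The plan is to carry out the same kind of bookkeeping that was used for $g$ in the proof of Lemma \ref{pal1}, exploiting that $\pt{\tro}(\bs{w})$ is an explicit finite mixture of Dirac measures indexed by sequences $\biid\in\na{\nsi}^{\tfns}$. First I would expand, using linearity of the integral over the finite mixture together with $\int_{\act{\flf{\fns}}}h_i~[\dpm{\act{\flf{\fns}}}{\mts{}(\biid)}]=h_i(\mts{}(\biid))$,
$$\int_{\act{\onflo}}h_i~[\pt{\tro}(\bs{w})]=\sum_{\biid\in\na{\nsi}^{\tfns}}\left(\prod_{k\in\na{\tfns}}w_{\iid{k}}\right)h_i(\mts{}(\biid)),$$
exactly as in the opening lines of the proof of Lemma \ref{pal1}.

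The key step is to observe that the indicator $h_i(\mts{}(\biid))$ factorises over the coordinates of $\biid$. Indeed, $\mts{}(\biid)=\{i'\in\na{\nsi}~|~\exists k\in\na{\tfns}:\iid{k}=i'\}$ is contained in $\{\ord(\bdv,j):j\in\na{i}\}$ precisely when every coordinate $\iid{k}$ already lies in that set, so $h_i(\mts{}(\biid))=\prod_{k\in\na{\tfns}}\indi{\iid{k}\in\{\ord(\bdv,j):j\in\na{i}\}}$. Substituting this in makes the summand a product over $k\in\na{\tfns}$ of the terms $w_{\iid{k}}\indi{\iid{k}\in\{\ord(\bdv,j):j\in\na{i}\}}$, and the distributive law (the same manipulation that ends the proof of Lemma \ref{pal1}, where each coordinate of $\biid$ is summed independently) gives
$$\int_{\act{\onflo}}h_i~[\pt{\tro}(\bs{w})]=\prod_{k\in\na{\tfns}}\left(\sum_{i'\in\na{\nsi}}w_{i'}\indi{i'\in\{\ord(\bdv,j):j\in\na{i}\}}\right)=\left(\sum_{j\in\na{i}}w_{\ord(\bdv,j)}\right)^{\tfns},$$
where the last equality uses that $j\mapsto\ord(\bdv,j)$ is a bijection of $\na{\nsi}$ onto itself, so summing $w_{i'}$ over $i'\in\{\ord(\bdv,j):j\in\na{i}\}$ is the same as summing $w_{\ord(\bdv,j)}$ over $j\in\na{i}$. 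This is the claimed identity.

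There is no genuine obstacle here: the only point that needs a moment's thought is the factorisation of $h_i(\mts{}(\biid))$, i.e. the elementary set-theoretic equivalence ``$\mts{}(\biid)\subseteq S$ if and only if $\iid{k}\in S$ for every $k$''; once that is in hand, everything reduces to the ``distribute the sum over the product'' computation already performed for $g$ in Lemma \ref{pal1}.
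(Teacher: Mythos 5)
Your argument is correct and follows the paper's own proof essentially step for step: expand the integral over the finite mixture of Dirac measures, observe that $\mts{}(\biid)\subseteq\{\ord(\bdv,j):j\in\na{i}\}$ if and only if every coordinate $\iid{k}$ lies in that set, and then distribute the sum over the product. The paper phrases the middle step as $h_i(\mts{}(\biid))=\indi{\biid\in V^{\tfns}}$ and restricts the outer sum to $V^{\tfns}$, whereas you keep the full sum and absorb the restriction into per-coordinate indicator factors, but these are the same computation.
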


\begin{proof}
Letting $V:=\{\ord(\bdv,j):j\in \na{i}\}$ we have:
\begin{align}
\notag\int_{\act{\onflo}}h_i~[\pt{\tro}(\bs{w})]&=\int_{\act{\onflo}}h_i~d\left[\sum_{\biid\in\na{\nsi}^{\tfns}}\dpm{\act{\flf{\fns}}}{\mts{}(\biid)}\prod_{i\in\na{\tfns}}w_{\iid{i}}\right]\\
\notag&=\sum_{\biid\in\na{\nsi}^{\tfns}}\left(\prod_{i\in\na{\tfns}}w_{\iid{i}}\right)\int_{\act{\onflo}}h_i~d\left[\dpm{\act{\flf{\fns}}}{\mts{}(\biid)}\right]\\
\notag&=\sum_{\biid\in\na{\nsi}^{\tfns}}\left(\prod_{i\in\na{\tfns}}w_{\iid{i}}\right)h_i(\mts{}(\biid))\\
\notag&=\sum_{\biid\in\na{\nsi}^{\tfns}}\left(\prod_{i\in\na{\tfns}}w_{\iid{i}}\right)\indi{k\in V~~\forall k\in\mts{}(\biid)}\\
\notag&=\sum_{\biid\in\na{\nsi}^{\tfns}}\left(\prod_{i\in\na{\tfns}}w_{\iid{i}}\right)\indi{\iid{j}\in V~~\forall j\in\na{\tfns}}\\
\notag&=\sum_{\biid\in\na{\nsi}^{\tfns}}\left(\prod_{i\in\na{\tfns}}w_{\iid{i}}\right)\indi{\biid\in V^{\tfns}}\\
\notag&=\sum_{\biid\in V^{\tfns}}\left(\prod_{i\in\na{\tfns}}w_{\iid{i}}\right)\\
\notag&=\prod_{i\in\na{\tfns}}\sum_{\iid{i}\in V}w_{\iid{i}}\\
\notag&=\prod_{i\in\na{\tfns}}\sum_{\iid{}\in V}w_{\iid{}}\\
\notag&=\left(\sum_{\iid{}\in V}w_{\iid{}}\right)^\tfns\\
\notag&=\left(\sum_{j\in\na{i}}w_{\ord(\bdv,j)}\right)^{\tfns}
\end{align}
\end{proof}

\begin{lemma}\label{pal3}
For all $\spf\in\act{\flo}$ we have:
$$\fll{\boc}{\bdv}(\spf)=g(\spf)+\dv{\ord(\bdv,N)}+\sum_{i\in\na{N-1}}\left(\dv{\ord(\bdv,i)}-\dv{\ord(\bdv,i+1)}\right)h_i(\spf)$$
\end{lemma}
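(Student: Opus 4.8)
The plan is to split $\fll{\boc}{\bdv}(\spf)=\sum_{i\in\spf}\ocv{}{i}+\min_{i\in\spf}\dv{i}$ into its two summands. The first summand is exactly $g(\spf)$ by definition, so the lemma is equivalent to the identity
\[
\min_{i\in\spf}\dv{i}=\dv{\ord(\bdv,N)}+\sum_{i\in\na{N-1}}\left(\dv{\ord(\bdv,i)}-\dv{\ord(\bdv,i+1)}\right)h_i(\spf)\,.
\]
Recall from the definition of $\ord$ that $\{\ord(\bdv,j):j\in\na{N}\}=\na{N}$ and that $\dv{\ord(\bdv,i)}$ is non-increasing in $i$, so scanning along $\ord(\bdv,1),\ord(\bdv,2),\dots,\ord(\bdv,N)$ visits the sites in order of non-increasing connection cost.

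First I would introduce $k:=\max\{j\in\na{N}:\ord(\bdv,j)\in\spf\}$, which is well-defined since $\spf$ is a nonempty subset of $\na{N}$. Because the connection costs along $\ord(\bdv,\cdot)$ are non-increasing, $\ord(\bdv,k)$ is the element of $\spf$ of smallest connection cost, i.e. $\min_{i\in\spf}\dv{i}=\dv{\ord(\bdv,k)}$. Next I would observe that $h_i(\spf)=\indi{\spf\subseteq\{\ord(\bdv,j):j\in\na{i}\}}=\indi{i\geq k}$, since $\spf$ lies among the first $i$ entries of the ordering precisely when its largest position index $k$ is at most $i$. Substituting this into the right-hand side of the displayed identity leaves $\dv{\ord(\bdv,N)}+\sum_{i=k}^{N-1}\left(\dv{\ord(\bdv,i)}-\dv{\ord(\bdv,i+1)}\right)$; this sum telescopes to $\dv{\ord(\bdv,k)}-\dv{\ord(\bdv,N)}$, so the whole expression equals $\dv{\ord(\bdv,k)}$ (and when $k=N$ the sum is empty, so the value is again $\dv{\ord(\bdv,N)}=\dv{\ord(\bdv,k)}$). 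Comparing with $\min_{i\in\spf}\dv{i}=\dv{\ord(\bdv,k)}$ finishes the argument, and adding back $g(\spf)=\sum_{i\in\spf}\ocv{}{i}$ gives the claimed formula for $\fll{\boc}{\bdv}(\spf)$.

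There is no genuine obstacle here; the only points that deserve a word of care are that $k$ is well-defined (using $\spf\neq\emptyset$, which holds because $\spf\in\act{\flo}$), that the sort is by \emph{non-increasing} connection cost (this is exactly what makes ``largest position index in $\spf$'' coincide with ``cheapest site in $\spf$''), and the degenerate case $k=N$ in which the telescoping sum is empty.
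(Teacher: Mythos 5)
Your proof is correct and follows essentially the same route as the paper: pick the critical position in the sorted ordering corresponding to the cheapest site of $\spf$, show that $h_i(\spf)$ is the indicator of that position being at most $i$, and telescope. If anything, your choice of $k$ as the \emph{largest} position index occupied by $\spf$ is slightly more robust than the paper's argmin-based choice of $j$, since it makes the step $h_i(\spf)=\indi{i\ge k}$ immediate even when there are ties in the connection costs.
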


\begin{proof}
Let $j$ be such that $\ord(\bdv,j)=\operatorname{argmin}_{i\in\spf}\dv{i}$. For all $i<j$ we have that $\ord(\bdv,j)\notin\{\ord(\bdv,k):k\in\na{i}\}$ so since $\ord(\bdv,j)\in\spf$ we have $\spf\not\subseteq\{\ord(\bdv,k):k\in\na{i}\}$ and hence  $h_i(\spf)=0$. On the other hand, for all $k$ such that $\ord(\bdv,k)\in\spf$, we have, by definition of $j$, that $\dv{\ord(\bdv,k)}\geq\dv{\ord(\bdv,j)}$ so, by definition of $\ord$, we have $\ord(\bdv,k)\leq\ord(\bdv,j)$ and so for all $i\geq j$ we have $\ord(\bdv,k)\in\{\ord(\bdv,k'):k'\in\na{i}\}$. Hence, for all $i\geq j$ we have $\spf\subseteq\{\ord(\bdv,k):k\in\na{i}\}$ which implies $h_i(\spf)=1$. Putting together gives us $h_i(\spf)=\indi{i\geq j}$. This implies:
\begin{align}
\notag&\dv{\ord(\bdv,N)}+\sum_{i\in\na{N-1}}\left(\dv{\ord(\bdv,i)}-\dv{\ord(\bdv,i+1)}\right)h_i(\spf)\\
\notag=~&\dv{\ord(\bdv,N)}+\sum_{i\in\na{N-1}}\left(\dv{\ord(\bdv,i)}-\dv{\ord(\bdv,i+1)}\right)\indi{i\geq j}\\
\notag=~&\dv{\ord(\bdv,N)}+\sum_{i=j}^{N-1}\left(\dv{\ord(\bdv,i)}-\dv{\ord(\bdv,i+1)}\right)\\
\notag=~&\dv{\ord(\bdv,N)}+\left(\dv{\ord(\bdv,j)}-\dv{\ord(\bdv,N)}\right)\\
\notag=~&\dv{\ord(\bdv,j)}\\
\notag=~&\min_{i\in\spf}\dv{i}
\end{align}
By definition of $g(\spf)$ we then obtain the result.
\end{proof}

We are now ready to prove the theorem. By Lemma \ref{pal3} we have that:
\begin{align}
\notag&\int_{\act{\onflo}}\fll{\boc}{\bdv}~[\pt{\tro}(\bs{w})]\\
\notag=&\int_{\act{\onflo}}g~[\pt{\tro}(\bs{w})]+\dv{\ord(\bdv,N)}\int_{\act{\onflo}}1~[\pt{\tro}(\bs{w})]+\sum_{i\in\na{N-1}}\left(\dv{\ord(\bdv,i)}-\dv{\ord(\bdv,i+1)}\right)\int_{\act{\onflo}}h_i~[\pt{\tro}(\bs{w})]
\end{align}
Substituting in lemmas \ref{pal1} and \ref{pal2} gives us the result.

\hfill $\blacksquare$

\subsection{Proof of Theorem \ref{t1r3}}

Recall that, by definition of $\lm{\tro}$, we have:

$$\lm{\tro}(\rl,\rk):=\max\{[\ft{\tro}(f)](\at{\tro}(x))~|~(x,f)\in\act{\flo}\times\lost{\flo},~f(x)\leq\rl,~\cplx{\flo}(x)\leq\rk\}$$

Suppose we have $\rl,\rk\in\rplus$. When $\rk\geq\ity$ we trivially have that $[\ft{\tro}(f)](\at{\tro}(\spf))\leq\ity$ for all $\spf\in\act{\flo}$ and $f\in\lost{\flo}$ so: $$\lm{\tro}(\rl,\rk)\leq\ity\leq\rk\leq\lceil\ln(\ntr)/2\rceil\rl+D\sqrt{1/\ntr}+\rk$$

Now let's consider the case that $\rk<\ity$. Suppose we have some $(X,f)\in\act{\flo}\times\lost{\flo}$ with $f(x)\leq\rl$, and $\cplx{\flo}(x)\leq\rk$. Let $\bocv{}\in[0,C]$ and $\bdv\in[0,D]$ be such that $\fll{\bocv{}}{\bccm{}}=f$ and let $\bs{w}=\at{\tro}(\spf)$. Since $\cplx{\fns}(\spf)< \ity$ we have $|\spf|=\fns$ and hence also $w_i:=\indi{i\in\spf}/\fns$. Let $k$ be such that $\ord(\bdv,k)=\operatorname{argmin}_{i\in\spf}\dv{i}$. For all $i\in[N-1]$ we have:
$$\left(\sum_{j\in\na{i}}w_{\ord(\bdv,j)}\right)^{\tfns}\leq\left(\sum_{j\in\na{M}}w_{\ord(\bdv,j)}\right)^{\tfns}=1^{\tfns}=1$$
and for $i\in\na{k-1}$ we have:
\begin{align}
\notag\left(\sum_{j\in\na{i}}w_{\ord(\bdv,j)}\right)^{\tfns}&\leq\left(\sum_{j\in\na{N}\setminus\{k\}}w_{\ord(\bdv,j)}\right)^{\tfns}\\
\notag&=\left(\left(\sum_{j\in\na{N}}w_{\ord(\bdv,j)}\right)-w_{\ord(\bdv,k)}\right)^{\tfns}\\
\notag&=\left(1-1/\fns\right)^{\tfns}\\
\notag&\leq\exp(-1/\fns)^{\tfns}\\
\notag&=\exp(-1/\fns)^{\fns\lceil\ln(\ntr)/2\rceil}\\
\notag&=\exp(-\lceil\ln(\ntr)/2\rceil)\\
\notag&\leq\exp(-\ln(\ntr)/2)\\
\notag&= \sqrt{1/\ntr}
\end{align}
Substituting both these inequalities into the definition of $[\ft{\tro}(\fll{\bocv{}}{\bdv})](\bs{w})$ gives us
\begin{align}
\notag&[\ft{\tro}(\fll{\bocv{}}{\bdv})](\bs{w})\\
\notag=&\tfns\bocv{}\cdot\bs{w}+\dv{\ord(\bdv,N)}+\sum_{i\in\na{N-1}}\left(\dv{\ord(\bdv,i)}-\dv{\ord(\bdv,i+1)}\right)\left(\sum_{j\in\na{i}}w_{\ord(\bdv,j)}\right)^{\tfns}\\
\notag\leq&\tfns\bocv{}\cdot\bs{w}+\dv{\ord(\bdv,N)}+\sum_{i=1}^{k-1}\left(\dv{\ord(\bdv,i)}-\dv{\ord(\bdv,i+1)}\right)\sqrt{1/\ntr}+\sum_{i=k}^{N-1}\left(\dv{\ord(\bdv,i)}-\dv{\ord(\bdv,i+1)}\right)\\
\notag&=\tfns\bocv{}\cdot\bs{w}+\sum_{i=1}^{k-1}\left(\dv{\ord(\bdv,i)}-\dv{\ord(\bdv,i+1)}\right)\sqrt{1/\ntr}+\dv{\ord(\bdv,k)}\\
\notag&=\tfns\bocv{}\cdot\bs{w}+\sqrt{1/\ntr}\sum_{i=1}^{k-1}\left(\dv{\ord(\bdv,i)}-\dv{\ord(\bdv,i+1)}\right)+\dv{\ord(\bdv,k)}\\
\notag&=\tfns\bocv{}\cdot\bs{w}+\sqrt{1/\ntr}\left(\dv{\ord(\bdv,1)}-\dv{\ord(\bdv,k)}\right)+\dv{\ord(\bdv,k)}\\
\notag&\leq\tfns\bocv{}\cdot\bs{w}+\dv{\ord(\bdv,1)}\sqrt{1/\ntr}+\dv{\ord(\bdv,k)}\\
\notag&\leq\tfns\bocv{}\cdot\bs{w}+D\sqrt{1/\ntr}+\dv{\ord(\bdv,k)}\\
\notag&\leq\tfns\bocv{}\cdot\bs{w}+D\sqrt{1/\ntr}+\min_{i\in\spf}\dv{i}\\
\notag&=\tfns\sum_{i\in\spf}\oc{i}/\fns+D\sqrt{1/\ntr}+\min_{i\in\spf}\dv{i}\\
\notag&\leq \fns\lceil\ln(\ntr)/2\rceil\sum_{i\in\spf}\oc{i}/\fns+D\sqrt{1/\ntr}+\min_{i\in\spf}\dv{i}\\
\notag&\leq \lceil\ln(\ntr)/2\rceil\sum_{i\in\spf}\oc{i}+D\sqrt{1/\ntr}+\min_{i\in\spf}\dv{i}\\
\notag&\leq \lceil\ln(\ntr)/2\rceil\left(\min_{i\in\spf}\dv{i}+\sum_{i\in\spf}\oc{i}\right) + D\sqrt{1/\ntr}\\
\notag&=\lceil\ln(\ntr)/2\rceil\fll{\bocv{}}{\bccm{}}(\spf)+D\sqrt{1/\ntr}
\end{align}

So: 
\begin{align}
\notag[\ft{\tro}(f)](\at{\tro}(\spf))&\leq\lceil\ln(\ntr)/2\rceil\fll{\bocv{}}{\bccm{}}(\spf)+D\sqrt{1/\ntr}+\rk\\
\notag&\leq\lceil\ln(\ntr)/2\rceil \rl+D\sqrt{1/\ntr}+\rk
\end{align}
Since this holds for any $(X,f)\in\spf\in\act{\flo}\times\lost{\flo}$ with $f(x)\leq\rl$ and $\cplx{\flo}(x)\leq\rk$, we have:
$$\lm{\tro}(\rl,\rk)\leq\lceil\ln(\ntr)/2\rceil\rl+D\sqrt{1/\ntr}+\rk$$

\subsection{Proof of Theorem \ref{t2r2}} 

The result comes directly from theorems \ref{t1r1}, \ref{t1r2} and \ref{t1r3}

\hfill $\blacksquare$

\subsection{Proof of Theorem \ref{btt}}

Given $\spf\in\act{\flt}$ with $|\spf|>\fns$ we have $\cplx{\flt}(\spf)=\ity$. Since, for all $\spf'\in\act{\flo(2N, C, C+D)}$ we have $\cplx{\flo(2N, C, C+D)}(\spf')\leq\ity$ we trivially have that $\cplx{\flo(2N, C, C+D)}(\at{\trt}(\spf))\leq\cplx{\flt}(\spf)$. On the other hand, suppose we have $\spf\in\act{\flt}$ with $|\spf|\leq\fns$. Then $|\at{\trt}(\spf)|=|\spf|+|\{N+i : i\leq \fns - |\spf|\}|=|\spf|+\fns-|\spf|=\fns$ so $\cplx{\flo(2N, C, C+D)}(\at{\trt}(\spf))=0=\cplx{\onflt}(\spf)$. So in any case $\cplx{\flf{\fns}(2N, C, C+D)}(\at{\trt}(\spf))\leq\cplx{\flt}(\spf)$.

Now suppose we have some $f\in\lost{\flt}$. Let $\boc, \bdv$ be such that $f=\fll{\boc}{\bdv}$ and let $\hat{\bocv{}},\hat{\bdv}\in\mathbb{R}^{2N}$ be defined as:
$$\hat{c}_{i}:=c_i,~ ~~\hat{d}_i:=d_i~~~~~~\forall i\in\na{N}$$
$$\hat{c}_i:=0, ~~~\hat{d}_i:=C+D~~~~~~\forall i\in\na{2N}\setminus\na{N}$$
Note then that $\ft{\trt}(\fll{\bocv{}}{\bdv})=\fll{\hat{\bocv{}}}{\hat{\bdv}}$. Since $\hat{\boc}\in[0,C]^{2N}$ and $\hat{\bdv}\in[0,C+D]^{2N}$ we have that $\ft{\trt}(\fll{\bocv{}}{\bdv})\in\lost{\onflo(2N, C, C+D,\fns)}$. All that is left to show is that $[\ft{\trt}(\fll{\bocv{}}{\bdv})](\spf')\geq\int_{\act{\onflt}}\fll{\bocv{}}{\bdv}~[\pt{\trt}(\spf')]$ for all $\spf'\in\act{\flo}$ and $(\bocv{},\bdv)\in[0,C]^N\times[0,D]^N$. We have two cases:
\begin{itemize}
\item In the case that $\spf'\cap\na{N}=\emptyset$ then for all $i\in\spf'$ we have $i\in\na{2N}\setminus\na{N}$ so $\hat{c}_i=0$ and $\hat{d}_i=C+D$. This means that:
\begin{align}
\notag[\ft{\trt}(\fll{\bocv{}}{\bdv})](\spf')&=\sum_{i\in\spf'}\hat{c}_i+\min_{i\in\spf'}\hat{d}_i=C+D\geq c_1+d_1\\
\notag&=\fll{\bocv{}}{\bdv}(\{1\})=\int_{\act{\onflt}}\fll{\bocv{}}{\bdv}~[\dpm{\act{\flb{\fns}}}{\{1\}}]=\int_{\act{\onflt}}\fll{\bocv{}}{\bdv}~[\pt{\trt}(\spf')]
\end{align} 
\item In the case that $\spf'\cap\na{N}\neq\emptyset$ choose $i\in\spf'\cap\na{N}$ that minimises $\hat{d}_i$. Since $i\in\na{N}$ we have $\hat{d}_i\leq D<C+D=\hat{d}_j$ for all $j\in\spf'\setminus\na{N}$ so:
$$\min_{j\in\spf'}\hat{d}_j=\hat{d}_i=\min_{j\in\spf'\cap\na{N}}\hat{d}_j=\min_{j\in\spf'\cap\na{N}}{d}_j$$
also we have:
\begin{align}
\notag\sum_{j\in\spf'}\hat{c}_j&=\sum_{j\in\spf\cap\na{N}}\hat{c}_j+\sum_{j\in\spf\cap(\na{2N}\setminus\na{N}}\hat{c}_j\\
\notag&=\sum_{j\in\spf\cap\na{N}}c_j+\sum_{j\in\spf\cap(\na{2N}\setminus\na{N}}0\\
\notag&=\sum_{j\in\spf\cap\na{N}}c_j
\end{align}
so:
\begin{align}
\notag[\ft{\trt}(\fll{\bocv{}}{\bdv})](\spf')&=\sum_{j\in\spf'}\hat{c}_j+\min_{j\in\spf'}\hat{d}_j\\
\notag&=\sum_{j\in\spf\cap\na{N}}c_j+\min_{j\in\spf'\cap\na{N}}{d}_j\\
\notag&=\fll{\bocv{}}{\bdv}(\spf\cap\na{N})\\
\notag&=\int_{\act{\onflt}}\fll{\bocv{}}{\bdv}~[\dpm{\act{\flb{\fns}}}{\spf'\cap\na{N}}]\\
\notag&=\int_{\act{\onflt}}\fll{\bocv{}}{\bdv}~[\pt{\trt}(\spf')]
\end{align}
\end{itemize}

\hfill $\blacksquare$

\subsection{Proof of Theorem \ref{t2r1}}
Given $\rl, \rk\in\rplus$ suppose we have $(\spf,f)\in\act{\flt}\times\lost{\flt}$ with $f(\spf)\leq\rl$ and $\cplx{\flt}(\spf)\leq\rk$

If $\rk\geq\ity$ we trivially have that $[\ft{\trt}(f)](\at{\trt}(\spf))<\ity\leq\rl+\rk$ so suppose now that $\rk<\ity$. Then we have that $\cplx{\flt}(\spf)<\ity$ and hence $|\spf|\leq\fns$.

Let $\boc, \bdv$ be such that $f=\fll{\boc}{\bdv}$ and let $\hat{\bocv{}},\hat{\bdv}\in\mathbb{R}^{2N}$ be defined as:
$$\hat{c}_{i}:=c_i,~ ~~\hat{d}_i:=d_i~~~~~~\forall i\in\na{N}$$
$$\hat{c}_i:=0, ~~~\hat{d}_i:=C+D~~~~~~\forall i\in\na{2N}\setminus\na{N}$$
Note then that $\ft{\trt}(\fll{\bocv{}}{\bdv})=\fll{\hat{\bocv{}}}{\hat{\bdv}}$ so: 
\begin{align}
\notag[\ft{\trt}(\fll{\bocv{}}{\bdv})](\at{\trt}(\spf))&=\fll{\hat{\bocv{}}}{\hat{\bdv}}(\at{\trt}(\spf))\\
\notag&=\sum_{i\in\at{\trt}(\spf)}\hat{c}_{i}+\min_{i\in\at{\trt}(\spf)}\hat{d}_i\\
\notag&=\sum_{i\in\at{\trt}(\spf)}\hat{c}_{i}+\min_{i\in\at{\trt}(\spf)\cap\na{N}}\hat{d}_i\\
\notag&=\sum_{i\in\at{\trt}(\spf)}\hat{c}_{i}+\min_{i\in\at{\trt}(\spf)\cap\na{N}}{d}_i\\
\notag&=\sum_{i\in\at{\trt}(\spf)\cap\na{N}}c_i+\min_{i\in\at{\trt}(\spf)\cap\na{N}}{d}_i\\
\notag&=\sum_{i\in\spf}c_i+\min_{i\in\spf}{d}_i\\
\notag&=\fll{\bocv{}}{\bdv}(\spf)\\
\notag&=f(\spf)\\
\notag&\leq\rl\\
\notag&\leq\rl+\rk
\end{align}

So, in either case, we have $[\ft{\trt}(\fll{\bocv{}}{\bdv})](\at{\trt}(\spf))\leq\rl+\rk$. Since this applies to all $(\spf,f)$ with $f(\spf)\leq\rl$ and $\cplx{\flt}(\spf)\leq\rk$ we hence have $\lm{\tran}(\rl,\rk)\leq\rl+\rk$

\hfill $\blacksquare$

\subsection{Proof of Theorem \ref{bflth}}

Direct from theorems \ref{t1r1}, \ref{t2r2} and \ref{t2r1}

\hfill $\blacksquare$

\subsection{Proof of Theorem \ref{finth}}

Note first that since $\min\{\cplx{\flg}(\lac)~|~\lac\in\act{\flg}\}=1$ we also have $\min\{\cplx{\g}(\lac)~|~\lac\in\act{\g}\}=1$ which is required to use the doubling trick. We define the quantities $\nosf$ and $\nosg$ as equal to $\lceil\ln(\ntr)/2\rceil$ and $(a+b)\sqrt{\ln(2N)/\ntr}$ respectively. Defining $\g^{\trsh}$ (from $\g$), for all $\trsh\geq 1$, as in Subsection \ref{GDTss} we have, for all $\spf\in\act{\g}$:
\begin{align}
\notag\cplx{\g^{\trsh}}(\spf)&=\cplx{\g}^{\trsh}(\spf)\\
\notag&=\indi{\cplx{\g}(\spf)>\trsh}\ity\\
\notag&=\indi{(a\cplx{\flg}(\spf)+b)/(a+b)>\trsh}\ity\\
\notag&=\indi{\cplx{\flg}(\spf)>((a+b)\trsh-b)/a}\ity\\
\notag&=\indi{|\spf|>((a+b)\trsh-b)/a}\ity\\
\notag&=\indi{|\spf|>\lfloor(a+b)\trsh-b)/a\rfloor}\ity\\
\notag&=\cplx{\flt(N, C, D, \lfloor(\trsh(a+b)-b)/a\rfloor)}(\spf)
\end{align}
Hence we have that $\g^{\trsh}=\flt(N, C, D, \lfloor(\trsh(a+b)-b)/a\rfloor)$
so by Theorem \ref{bflth} we have that the strategy $\bstrat^{\trsh}$ has a generalised regret, with respect to $\g^{\trsh}$, of 
\begin{align}
\notag&\gnrs{\g^{\trsh}}{\bstrat^{\trsh}}(\rl,\rk)\\
\notag=&\gnrs{\flt(N, C, D, \lfloor(\trsh(a+b)-b)/a\rfloor)}{\bstrat^{\trsh}}(\rl,\rk)\\
\notag=&\gnrs{\flt(N, C, D, \lfloor(\trsh(a+b)-b)/a\rfloor)}{\bstrat^{\operatorname{\onflt(N, C, D, \lfloor(\trsh(a+b)-b)/a\rfloor)}}}(\rl,\rk)\\
\notag\leq&\lceil\ln(\ntr)\rceil\rl/2+(2\lfloor\trsh(a+b)-b)/a\rfloor\lceil\ln(\ntr)/2\rceil(2C+D)+(C+D))\sqrt{\ln(2N)/\ntr}+\rk\\
\notag\leq&\lceil\ln(\ntr)\rceil\rl/2+(2(\trsh(a+b)-b)/a)\lceil\ln(\ntr)/2\rceil(2C+D)+(C+D))\sqrt{\ln(2N)/\ntr}+\rk\\
\notag=&\lceil\ln(\ntr)\rceil\rl/2+((\trsh(a+b)-b)/a)a +b)\sqrt{\ln(2N)/\ntr}+\rk\\
\notag=&\lceil\ln(\ntr)\rceil\rl/2+\trsh(a+b)\sqrt{\ln(2N)/\ntr}+\rk\\
\notag=&\nosf\rl+\nosg\trsh+\rk
\end{align}
which is required for the doubling trick. Since all the conditions for the doubling trick are now satisfied we can invoke Theorem \ref{dtth}, giving us:
\begin{equation}\label{dtsubeq}
\gnrs{\g}{\bstrat^{\dt}}(\rl,\rk)\leq 5\nosf\rl+8\nosg\rk+\frac{1}{\ntr}\sum_{i=1}^{\lceil\log_2(\rk)\rceil}\maxl{2^i}
\end{equation}
where $\maxl{2^i}$ is defined as in Subsection \ref{GDTss}. That is:
$$\maxl{\trsh}:=\max\left\{\ints_{\act{}}\nac_{t}[\strat{t}^{\trsh}(\bnac)]~|~\bnac\in\lost{}^{\ntri{}},t\in\na{\ntri{}}\right\}$$
which, by above, is equal to:
$$\max\left\{\ints_{\act{}}\nac_{t}[\bstrat^{\operatorname{\onflt(N, C, D, \lfloor(\trsh(a+b)-b)/a\rfloor)}}(\bnac)]~|~\bnac\in\lost{}^{\ntri{}},t\in\na{\ntri{}}\right\}$$
Fix some $\rk\in\rplus$ and let $\fns:=(\rk(a+b)-b)/a$.

Since the strategy $\bstrat^{\operatorname{\onflt(N, C, D, \lfloor\trsh(a+b)-b)/a\rfloor)}}$ always selects a set of at most $\lceil\ln(\ntr)\rceil\lfloor(\trsh(a+b)-b)/a\rfloor$ sites we have $\maxl{\trsh}\leq \lceil\ln(\ntr)/2\rceil\lfloor(\trsh(a+b)-b)/a\rfloor C+D$
so we have:
\begin{align}
\notag\sum_{i=1}^{\lceil\log_2(\rk)\rceil}\maxl{2^i}&\leq\sum_{i=1}^{\lceil\log_2(\rk)\rceil}\lceil\ln(\ntr)/2\rceil\left\lfloor\frac{2^i(a+b)-b}{a}\right\rfloor C+D\\
\notag&\leq\sum_{i=1}^{\lceil\log_2(\rk)\rceil}\lceil\ln(\ntr)/2\rceil\frac{2^i(a+b)-b}{a}C+D\\
\notag&\leq\sum_{i=1}^{\lceil\log_2(\rk)\rceil}\lceil\ln(\ntr)/2\rceil\frac{2^i(a+b)}{a}C+D\\
\notag&\leq\lceil\ln(\ntr)/2\rceil\frac{2^{\lceil\log_2(\rk)\rceil+1}(a+b)}{a}C+D\\
\notag&\leq\lceil\ln(\ntr)/2\rceil\frac{8\rk(a+b)}{a}C+D\\
\notag&=\lceil\ln(\ntr)/2\rceil8\frac{\rk(a+b)-b}{a}C+8\lceil\ln(\ntr)/2\rceil\frac{b}{a}C+D\\
\notag&=\lceil\ln(\ntr)/2\rceil8\frac{\rk(a+b)-b}{a}C+8\lceil\ln(\ntr)/2\rceil\frac{C+D}{(4C+2D)\lceil\ln(\ntr)/2\rceil}C+D\\
\notag&=\lceil\ln(\ntr)/2\rceil8\frac{\rk(a+b)-b}{a}C+8\frac{C+D}{4C+2D}C+D\\
\notag&=\lceil\ln(\ntr)/2\rceil8\frac{\rk(a+b)-b}{a}C+8C+D\\
\notag&=4\lceil\ln(\ntr)/2\rceil\fns C+8C+D
\end{align}
so:
\begin{align}
\notag\frac{1}{\ntr}\sum_{i=1}^{\lceil\log_2(\rk)\rceil}\maxl{2^i}&\in\mathcal{O}\left(\frac{\ln(\ntr)}{\ntr}(\fns C+C+D)\right)\\
\label{dtsub1}&\subseteq\mathcal{O}\left(\fns(\mcos+\mdis)\sqrt{\frac{\ln(\nsi)}{T}} \right)
\end{align}
We also have:
\begin{align}
\notag\nosg\rk&=\rk (a+b)\sqrt{\frac{\ln(2N)}{\ntr}}\\
\notag&\frac{a\fns+b}{a+b}(a+b)\sqrt{\frac{\ln(2N)}{\ntr}}\\
\notag&=(a\fns+b)\sqrt{\frac{\ln(2N)}{\ntr}}\\
\label{dtsub2}&\in\mathcal{O}\left(\fns(C+D)\ln(\ntr)\sqrt{\frac{\ln(N)}{\ntr}}\right)
\end{align}
Combining equations \eqref{dtsubeq}, \eqref{dtsub1} and \eqref{dtsub2} gives us:
$$\gnrs{\g}{\bstrat^{\dt}}(\rl,\rk)\in\mathcal{O}\left(\rl\ln(\ntr) +\fns(\mcos+\mdis)\ln(T)\sqrt{\frac{\ln(\nsi)}{T}} \right)$$
which implies the result.

\hfill $\blacksquare$

\section{Hypothesis Classes and Infinite Complexities}\label{infclsec}
In this paper we utilise complexity functions that (informally) evaluate as infinite on some actions. Hence, to give an idea of what these infinities mean, we now consider, as an example, the general task of ``online classification''. Since infinity is not actually a number we will instead use, as a surrogate, a number $\ity$ and take the limit $\ity\rightarrow\infty$.

 In an online classification problem we have a set $\gens$ and a set $\mathcal{H}$ of functions from $\gens$ into $\{-1,\ual,1\}$ that are known to Learner. We call $\mathcal{H}$ the ``hypothesis space''. We also have a ``complexity'' function $\cplx{}':\mathcal{H}\rightarrow\rplus$. Nature chooses some $h\in\mathcal{H}$ a-priori but doesn't reveal it to Learner. Learning proceeds in trials $t=1,2,...,\ntr$. On trial $t$:
\begin{enumerate}
\item Nature chooses some $s_t\in\gens$ with $h(s_t)\neq\ual$ and reveals it to Learner.
\item Learner chooses some $\hat{y}_t\in\{-1,1\}$
\item $h(s_t)$ is revealed to Learner
\item If $\hat{y}_t\neq h(s_t)$ then Learner incurs a mistake.
\end{enumerate}
Given a strategy for Learner we define its ``mistake bound'' to be a function $M:\rplus\rightarrow\rplus$ such that $M(\beta)$ is the maximum number of mistakes made by the algorithm if nature chooses $h$ with $\cplx{}'(h)\leq\beta$. 

An example of online classification is ``online linear classification'' of dimension $n$ in which $\gens=\{\bs{s}\in\mathbb{R}^n:\|\bs{s}\|\leq1\}$ and each hypothesis $h$ is defined by a pair $(\bs{w},\mu)\in\gens\times(0,1)$ such that $$h(\bs{s})=\indi{|\bs{w}\cdot\bs{s}|\geq\mu}\operatorname{sign}(\bs{w}\cdot\bs{s})~~~\operatorname{and}~~~\cplx{}'(h)=1/\mu$$
The famous ``Perceptron'' algorithm achieves a mistake bound of $M(\beta)\leq\beta^2$ for online linear classification.

We can formulate online classification as an online optimisation game $\g$ as follows:
\begin{itemize}
\item $\act{\g}=\sfun{\gens}{\{-1,\ual,1\}}$
\item $\lost{\g}$ is the set of all $\nac\in\sfun{\act{\g}}{\rplus}$ such that there exists $(s,y)\in\gens\times\{-1,1\}$ with $\nac(g)=\indi{g(s)\neq y}$  for all  $g\in\act{\g}$
\item Given $g\in\act{\g}$ we have $\cplx{\g}(g):=\indi{g\notin\mathcal{H}}\ity+\indi{g\in\mathcal{H}}\cplx{}'(g)$
\end{itemize}
This game is equivalent to online classification by the following relationships (where $\nac_t$ and $g_t$ are Nature's and Learner's actions on trial $t$ respectively).
\begin{itemize}
\item $\nac_t(g_t)=\indi{g_t(s_t)\neq h(s_t)}$
\item $\hat{y}_t=g_t(s_t)$
\item If a mistake is made on trial $t$ then $\nac_t(g_t)=1$. Otherwise $\nac_t(g_t)=0$
\end{itemize}

Given a strategy $\bstrat$ with mistake bound $M(\cdot)$, its generalised regret $\gnrs{\g}{\bstrat}$ satisfies:
$$\gnrs{\g}{\bstrat}(\rl,\rk)\leq \rl\ity+\frac{1}{\ntr}M(\rk)+\ity\indi{\rk\geq\ity}$$

So, in Online classification, complexities evaluate as infinite on actions that correspond to functions that are not in the hypothesis space: i.e. those that Nature cannot choose.

\section{The Failure of Deterministic Algorithms}\label{failsec}

In this section we prove that no deterministic algorithm, e.g. follow the (approximate) leader, can achieve the (expected) loss bound of our algorithm; even if the opening costs do not vary from trial to trial. Specifically we prove the following theorem and corollary:

\begin{theorem}\label{failth}
Take the online learning problem of Section \ref{PDAR} with $C:=1$ and $D:=1$.  Suppose we have a deterministic algorithm $\ona$ for Learner and a function $B:(\rplus)^3\rightarrow \rplus$ such that $B$ is monotonic increasing in its first variable and, for all $\epsilon\in\rplus$\,, there exists $N,T\in\mathbb{N}$ with $B(2/\sqrt{N}, N, T)<\epsilon$. Then, for all $\epsilon\in\rplus$ there exists an $N$ and $T$ such that there are $N$ sites, $T$ trials, and a sequence of Natures selections of cost vectors $\{{\bocv{t}},{\bccm{t}}\in[0,1]^N:t\in\na{\ntr}\}$, and a set $\sels{*}\in\pows{\na{N}}\setminus\emptyset$ such that:
$$B\left(\frac{1}{\ntr}\sum_{t=1}^T\fll{\bocv{t}}{\bccm{t}}(\sels{*}), N, T\right)<\epsilon$$
and:
$$\frac{1}{\ntr}\sum_{t=1}^T\fll{\bocv{t}}{\bccm{t}}(\sels{t})\geq 1$$
where $\sels{t}$ is selection of $\ona$ at trial $t$. In addition, the selection of opening cost vectors need not vary from trial to trial. i.e. there exist $\bocv{}\in[0,1]^N$ such that $\bocv{t}:=\bocv{}$ for all $t\in\na{\ntr}$.
\end{theorem}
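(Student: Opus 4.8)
The plan is to exploit the determinism of $\ona$. Since $\ona$ is deterministic, its selection $\sels{t}$ on trial $t$ is a fixed function of the previously revealed cost vectors $(\bocv{1},\bccm{1},\dots,\bocv{t-1},\bccm{t-1})$ — all of which Nature itself has chosen — so Nature can compute $\sels{t}$ \emph{before} committing to $(\bocv{t},\bccm{t})$ and then tailor those costs to punish that very selection. I would fix $N,T$ (chosen at the end via the hypothesis on $B$), set a uniform, time-invariant opening cost $\ocv{t}{i}:=1/\sqrt{N}=:\theta$ for all $i\in\na{N}$ and $t\in\na{T}$ (this already satisfies the ``opening costs need not vary'' clause, and $\theta\le 1$), and then, on trial $t$ with $\sels{t}$ known, let Nature choose the connection costs by cases: if $|\sels{t}|\ge\sqrt{N}$ (``Case A'') set $\ccm{t}{i}:=0$ for every $i\in\na{N}$; if $|\sels{t}|<\sqrt{N}$ (``Case B'') set $\ccm{t}{i}:=1$ for $i\in\sels{t}$ and $\ccm{t}{i}:=0$ otherwise. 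All costs lie in $[0,1]$, as required.

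Next I would show that $\ona$ incurs at least $1$ on every trial. In Case A the opening cost alone is $|\sels{t}|\theta\ge\sqrt{N}\cdot(1/\sqrt{N})=1$; in Case B every selected site has connection cost $1$, so $\min_{i\in\sels{t}}\ccm{t}{i}=1$ and the loss is again at least $1$. Hence $\tfrac1T\sum_{t\in\na{T}}\fll{\bocv{t}}{\bccm{t}}(\sels{t})\ge 1$, which is the second required inequality.

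Then I would build the comparator by a pigeonhole argument. For $j\in\na{N}$ the singleton $\{j\}$ incurs $\theta+\ccm{t}{j}$ on trial $t$, and $\ccm{t}{j}=1$ only on those trials $t$ where Case B holds \emph{and} $j\in\sels{t}$; write $E_j$ for that set of trials. Summing over sites, $\sum_{j\in\na{N}}|E_j|=\sum_{t:\,\mathrm{Case\ B}}|\sels{t}|\le\sqrt{N}\,T$, so some $j^\star$ satisfies $|E_{j^\star}|\le T/\sqrt{N}$. Taking $\sels{*}:=\{j^\star\}\in\pows{\na{N}}\setminus\{\emptyset\}$ gives $\tfrac1T\sum_{t\in\na{T}}\fll{\bocv{t}}{\bccm{t}}(\sels{*})=\theta+|E_{j^\star}|/T\le 2/\sqrt{N}$. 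Finally, by the hypothesis on $B$ I would pick $N,T$ with $B(2/\sqrt{N},N,T)<\epsilon$; monotonicity of $B$ in its first argument then gives $B\!\left(\tfrac1T\sum_{t\in\na{T}}\fll{\bocv{t}}{\bccm{t}}(\sels{*}),N,T\right)\le B(2/\sqrt{N},N,T)<\epsilon$, which is the first required inequality, completing the proof.

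There is no deep difficulty here, but the crux — and the part worth getting right — is the two-case design of Nature's strategy: it must force $\ona$ to lose $1$ per trial no matter how large a set $\ona$ selects, with the opening-cost penalty handling ``large'' selections and the connection-cost penalty handling ``small'' ones. The calibration $\theta=1/\sqrt{N}$ is what simultaneously keeps the singleton comparator's per-trial opening cost at $1/\sqrt{N}$ and makes the pigeonhole bound on $|E_{j^\star}|/T$ come out to $1/\sqrt{N}$; balancing these two terms is exactly what pins down both the threshold $\sqrt{N}$ and the cost $1/\sqrt{N}$, and I would double-check the boundary cases (e.g.\ small $N$, where $|\sels{t}|\ge1=\sqrt{1}$ forces Case A) to be safe.
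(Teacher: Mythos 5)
Your proof is correct and takes essentially the same route as the paper: the same time-invariant opening cost $1/\sqrt{N}$, the same two-case adversarial choice of connection costs split at the threshold $\sqrt{N}$, the same per-trial lower bound of $1$, and the same pigeonhole argument producing a singleton comparator with average loss at most $2/\sqrt{N}$. (The only cosmetic difference is which case receives the boundary $|\sels{t}|=\sqrt{N}$, which is immaterial; and your justification of the Case B lower bound via $\min_{i\in\sels{t}}\ccm{t}{i}=1$ is in fact stated more carefully than the corresponding line in the paper.)
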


Theorem \ref{failth} has the following corollary:

\begin{corollary}\label{failco}
For any deterministic algorithm $\ona$ for Learner, for the problem of Section \ref{PDAR} then:
$$\mathbb{E}\left(\sum_{t=1}^{\ntr}\fll{\bocv{t}}{\bccm{t}}(\sels{t})\right)\notin\mathcal{O}\left(\ln(\ntr)\sum_{t=1}^{\ntr}\fll{\bocv{t}}{\bccm{t}}(\sels{*})+N(\mcos+\mdis)\ln(\ntr)\sqrt{{\ln(\nsi)}{\ntr}}\right)$$
where $\sels{t}$ is selection of $\ona$ on trial $t$ and $\sels{*}$ is an arbitrary set of sites.
\end{corollary}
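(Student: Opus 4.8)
To establish Corollary~\ref{failco} the plan is to argue by contradiction, reducing to Theorem~\ref{failth}. Suppose the stated asymptotic bound held, i.e.\ there were a constant $K$ with $\mathbb{E}\left(\sum_{t=1}^{\ntr}\fll{\bocv{t}}{\bccm{t}}(\sels{t})\right)\le K\left(\ln(\ntr)\sum_{t=1}^{\ntr}\fll{\bocv{t}}{\bccm{t}}(\sels{*})+\nsi(\mcos+\mdis)\ln(\ntr)\sqrt{\ln(\nsi)\ntr}\right)$ for every instance. Specialising to $\mcos=\mdis=1$ and dividing through by $\ntr$, this reads $\frac{1}{\ntr}\sum_t\fll{\bocv{t}}{\bccm{t}}(\sels{t})\le B(\bl,\nsi,\ntr)$, where $\bl:=\frac{1}{\ntr}\sum_t\fll{\bocv{t}}{\bccm{t}}(\sels{*})$ is the comparator's average loss and $B(\bl,\nsi,\ntr):=K\ln(\ntr)\bl+2K\nsi\ln(\ntr)\sqrt{\ln(\nsi)/\ntr}$ (the expectation is vacuous since $\ona$ is deterministic). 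This $B$ is visibly monotone increasing in its first argument.

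Next I would check the remaining hypothesis of Theorem~\ref{failth}: that for each $\lowb>0$ there exist $\nsi,\ntr$ with $B(2/\srn,\nsi,\ntr)<\lowb$. Taking $\ntr:=\lceil\nsi^{3}\rceil$, the first term of $B(2/\srn,\nsi,\ntr)$ is $O(\ln(\nsi)/\srn)$ and the second is $O\!\left(\nsi\ln(\nsi)\sqrt{\ln(\nsi)/\nsi^{3}}\right)=O\!\left((\ln\nsi)^{3/2}/\srn\right)$, both tending to $0$ as $\nsi\to\infty$, so $B(2/\srn,\nsi,\ntr)<\lowb$ for $\nsi$ large. Applying Theorem~\ref{failth} with $\lowb:=1/2$ then yields some $\nsi,\ntr$, cost vectors, and a set $\sels{*}$ such that $B\!\left(\frac{1}{\ntr}\sum_t\fll{\bocv{t}}{\bccm{t}}(\sels{*}),\nsi,\ntr\right)<1/2$ while $\frac{1}{\ntr}\sum_t\fll{\bocv{t}}{\bccm{t}}(\sels{t})\ge 1$; combined with the inequality of the previous paragraph this gives $1\le B(\cdot)<1/2$, a contradiction, so no such $K$ exists.

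It remains to indicate how I would prove the substantive statement, Theorem~\ref{failth}. Fix $\nsi$ large (so that $2/\srn$ is small), set every opening cost equal to the constant $\fc:=1/\srn\in[0,1]$ on every trial (in particular opening costs do not vary with $t$, as the theorem also asserts), and let $\ntr$ be whatever the reduction above needs. Since $\ona$ is deterministic, on trial $t$ the adversary can compute the learner's set $\sels{t}$ before committing $\bccm{t}$, and I exploit this: if $|\sels{t}|\ge\srn$ then the opening cost alone is $\fc|\sels{t}|\ge 1$, so put $\bccm{t}:=\bs{0}$; otherwise put $\dis{t}{i}:=1$ for $i\in\sels{t}$ and $\dis{t}{i}:=0$ for $i\notin\sels{t}$, forcing $\min_{i\in\sels{t}}\dis{t}{i}=1$ (note $\sels{t}\ne\emptyset$ by the rules of the game). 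Either way $\fll{\bocv{t}}{\bccm{t}}(\sels{t})\ge 1$ on every trial, hence $\frac{1}{\ntr}\sum_t\fll{\bocv{t}}{\bccm{t}}(\sels{t})\ge 1$.

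For the comparator, let $\cset:=\{t:|\sels{t}|<\srn\}$ be the ``poisoning'' trials; on $t\notin\cset$ every site has connection cost $0$, and on $t\in\cset$ site $i$ has connection cost $\indi{i\in\sels{t}}$. Summing over sites, $\sum_{i\in\na{\nsi}}\sum_{t\in\cset}\indi{i\in\sels{t}}=\sum_{t\in\cset}|\sels{t}|<\srn\,\ntr$, so by averaging there is a site $i^{*}$ with $\sum_{t\in\cset}\indi{i^{*}\in\sels{t}}<\ntr/\srn$. Taking $\sels{*}:=\{i^{*}\}$, its cumulative loss is $\sum_t\!\left(\fc+\dis{t}{i^{*}}\right)=\fc\,\ntr+\sum_{t\in\cset}\indi{i^{*}\in\sels{t}}<\ntr/\srn+\ntr/\srn=2\ntr/\srn$, i.e.\ average loss strictly below $2/\srn$; monotonicity of $B$ in its first argument then yields $B\!\left(\frac{1}{\ntr}\sum_t\fll{\bocv{t}}{\bccm{t}}(\sels{*}),\nsi,\ntr\right)\le B(2/\srn,\nsi,\ntr)<\lowb$, as required. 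I expect the only genuine care to be needed in (i) making the reactive-adversary step against a deterministic learner precise, and (ii) the elementary bookkeeping around the threshold $\srn$ (replacing it by $\lceil\srn\rceil$ where convenient and using $\lceil\srn\rceil-1<\srn$) so that the constants line up to give exactly the $2/\srn$ in the statement.
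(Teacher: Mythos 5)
Your proposal is correct and follows essentially the same route as the paper: instantiate Theorem~\ref{failth} with $B(L,N,T)=K\ln(T)L+2KN\ln(T)\sqrt{\ln(N)/T}$ (the paper absorbs the constant), verify $B(2/\sqrt{N},N,T)\to 0$ along $T\approx N^3$, and derive a contradiction with any putative big-$O$ constant; your sketch of Theorem~\ref{failth} (constant opening costs $1/\sqrt{N}$, reactive connection costs keyed to whether $|X^t|$ exceeds $\sqrt{N}$, then an averaging argument to find a cheap singleton comparator) matches the paper's Lemmas~\ref{detl1} and~\ref{dett1} up to the cosmetic choice of a strict versus non-strict threshold. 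You spell out the contradiction structure that the paper's one-line proof leaves implicit, which is a welcome clarification rather than a departure.
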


Corollary \ref{failco} follows from Theorem \ref{failth} by choosing the function $B$ (in Theorem \ref{failth}) such that
$B(L,N,T):={\ln(T)}L+2N\ln(\ntr)\sqrt{{\ln(N)}/{T}}$

We now prove Theorem \ref{failth}. Suppose we have some arbitrary $\epsilon\in\rplus$. Choose $N$ and $T$ such that $B(2/\sqrt{N}, N, T)<\epsilon$. Let $\sels{t}$ be the selection of $\ona$ at trial $t$. We define the fixed opening cost vector $\bfc$ by $\fc_i:=1/\srn$ for all $i\in[N]$.

\begin{definition}
We partition $[T]$ into two sets, $\Lambda$ and $\Upsilon$, where:
\begin{itemize}
\item $\Lambda:=\{t\in[T]:|\spl{t}|\leq\srn\}$
\item $\Upsilon:=\{t\in[T]:|\spl{t}|>\srn\}$
\end{itemize}
\end{definition}

Since the algorithm is deterministic, Nature can know the choice of $\spl{t}$ before the it chooses $\bdis{t}$. Hence, we now define an adversarial choice of this vector:
\begin{itemize}
\item If $t\in\Lambda$ then for all $i\in\spl{t}$ set $\dis{t}{i}:= 1$ and for all $i\in[N]\setminus\spl{t}$ set $\dis{t}{i}:= 0$.
\item If $t\in\Upsilon$ then for all $i\in[N]$ set $\dis{t}{i}:=0$
\end{itemize}

\begin{lemma}\label{detl1}
We have:
$$\frac{1}{T}\sum_{t\in[T]}\fll{\bocv{t}}{\bccm{t}}(\spl{t})\geq1$$
\end{lemma}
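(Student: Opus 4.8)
```latex
The plan is to show that on every trial $t$ the loss $\fll{\bocv{t}}{\bccm{t}}(\spl{t})$ is at least $1$, from which the averaged bound follows trivially. We split into the two cases defining the partition of $[T]$ into $\Lambda$ and $\Upsilon$, and in each case lower-bound the loss by a sum of the opening-cost contribution and the minimum connection-cost contribution.

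\textbf{Case $t\in\Upsilon$.} Here $|\spl{t}|>\srn$, and the opening cost of each site was fixed to $\fc_i=1/\srn$. Hence $\sum_{i\in\spl{t}}\ocv{t}{i}=|\spl{t}|/\srn>\srn/\srn=1$, so the loss $\fll{\bocv{t}}{\bccm{t}}(\spl{t})=\sum_{i\in\spl{t}}\ocv{t}{i}+\min_{i\in\spl{t}}\ccm{t}{i}\geq\sum_{i\in\spl{t}}\ocv{t}{i}>1$, since connection costs are nonnegative.

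\textbf{Case $t\in\Lambda$.} Here $|\spl{t}|\leq\srn$. By the adversarial choice of $\bdis{t}$, every site $i\in\spl{t}$ has $\dis{t}{i}=1$, so $\min_{i\in\spl{t}}\ccm{t}{i}=1$ (note $\spl{t}$ is nonempty since $\spl{t}\in\sets$). Therefore $\fll{\bocv{t}}{\bccm{t}}(\spl{t})\geq\min_{i\in\spl{t}}\ccm{t}{i}=1$.

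In both cases $\fll{\bocv{t}}{\bccm{t}}(\spl{t})\geq1$, hence $\sum_{t\in[T]}\fll{\bocv{t}}{\bccm{t}}(\spl{t})\geq T$, which upon dividing by $T$ gives the claim. There is no real obstacle here; the only subtlety is making sure the two cases are genuinely exhaustive (which they are, since $\Lambda$ and $\Upsilon$ partition $[T]$) and that the adversary is allowed to see $\spl{t}$ before committing to $\bdis{t}$, which is permitted precisely because $\ona$ is deterministic.
```
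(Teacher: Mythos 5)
Your proof is correct and follows essentially the same approach as the paper: split on $\Lambda$ vs.\ $\Upsilon$, lower-bound the loss by the connection-cost term (which is forced to $1$ by the adversary's choice) in the first case and by the opening-cost term (which exceeds $1$ because $|\spl{t}|>\srn$) in the second. In fact your write-up is a bit cleaner than the paper's, which carries notational typos in the indexing (writing $\min_{i\in[N]}$ and $\sum_{i\in[N]}$ where it means $\min_{i\in\spl{t}}$ and $\sum_{i\in\spl{t}}$); you indexed both correctly and also explicitly noted that $\spl{t}\neq\emptyset$.
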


\begin{proof}
Note that if $t\in\Lambda$ we have:
\begin{align}
\notag\fll{\bocv{t}}{\bccm{t}}(\spl{t})&\geq\min_{i\in[N]}\dis{t}{i}\\
\notag&=\min_{i\in[N]}1\\
\notag&=1
\end{align}
and if $t\in\Gamma$ we have:
\begin{align}
\notag\fll{\bocv{t}}{\bccm{t}}(\spl{t})&\geq\sum_{i\in[N]}\cst{t}{i}\\
\notag&=\sum_{i\in\spl{t}}\fc_i\\
\notag&=\sum_{i\in\spl{t}}1/\srn\\
\notag&=|\spl{t}|/\srn\\
\notag&\geq\srn/\srn\\
\notag&=1
\end{align}
So in either case we have $\fll{\bocv{t}}{\bccm{t}}(\spl{t})\geq 1$ and hence $\frac{1}{T}\sum_{t\in[T]}\fll{\bocv{t}}{\bccm{t}}(\spl{t})\geq 1$.
\end{proof}

\begin{lemma}\label{dett1}
There exists $\sels{*}\in\pows{[N]}\setminus\emptyset$ such that:
$$B\left(\frac{1}{\ntr}\sum_{t=1}^T\fll{\bocv{t}}{\bccm{t}}(\sels{*}), N, T\right)<\epsilon$$
\end{lemma}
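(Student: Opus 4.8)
The plan is to use the monotonicity of $B$ to reduce the statement to exhibiting one nonempty set $\sels{*}$ with average loss at most $2/\srn$, and then to produce such a set by an averaging argument over singletons.

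First I would record the reduction: since $B$ is monotonic increasing in its first variable and, by the choice of $N$ and $T$, $B(2/\srn, N, T) < \epsilon$, it suffices to find $\sels{*} \in \pows{[N]} \setminus \{\emptyset\}$ with $\frac{1}{T}\sum_{t\in[T]} \fll{\bocv{t}}{\bccm{t}}(\sels{*}) \leq 2/\srn$; then $B\left(\frac{1}{T}\sum_{t\in[T]}\fll{\bocv{t}}{\bccm{t}}(\sels{*}), N, T\right) \leq B(2/\srn, N, T) < \epsilon$. So the whole task is to find a good comparator set.

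The key step is to choose a site $j$ uniformly at random from $[N]$ and bound the expected average loss of the singleton $\{j\}$. For every trial $t$ we have $\fll{\bocv{t}}{\bccm{t}}(\{j\}) = \fc_j + \dis{t}{j} = 1/\srn + \dis{t}{j}$, using the fixed opening costs $\fc_i = 1/\srn$. When $t\in\Upsilon$ the construction sets $\dis{t}{i}=0$ for all $i$, so this contributes only $1/\srn$. When $t\in\Lambda$ the construction gives $\dis{t}{j} = \indi{j\in\spl{t}}$, hence $\mathbb{E}(\dis{t}{j}) = |\spl{t}|/N \leq \srn/N = 1/\srn$, using $|\spl{t}|\leq\srn$ for $t\in\Lambda$. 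Averaging over $t\in[T]$,
$$\mathbb{E}\left(\frac{1}{T}\sum_{t\in[T]} \fll{\bocv{t}}{\bccm{t}}(\{j\})\right) \leq \frac{1}{\srn} + \frac{1}{\srn} = \frac{2}{\srn}.$$
Since the expectation over the uniform choice of $j$ is at most $2/\srn$, some fixed $j^*\in[N]$ satisfies $\frac{1}{T}\sum_{t\in[T]}\fll{\bocv{t}}{\bccm{t}}(\{j^*\}) \leq 2/\srn$; taking $\sels{*} := \{j^*\} \in \pows{[N]}\setminus\{\emptyset\}$ and applying the reduction above completes the argument.

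There is no genuine obstacle in the calculation itself. The only point that needs care is realising that one cannot in general pick a deterministic $\sels{*}$ that avoids every $\spl{t}$ with $t\in\Lambda$ (these sets may together cover all of $[N]$), which is exactly why a random singleton and an expectation bound are used instead of an explicit construction.

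\hfill $\blacksquare$
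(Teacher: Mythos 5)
Your proof is correct and takes essentially the same approach as the paper: the paper picks $j$ as the argmin of $|\{t\in\Lambda : i\in\spl{t}\}|$ over $i\in[N]$ and bounds it by $T/\srn$ via pigeonhole, which is exactly the deterministic phrasing of your ``choose $j$ uniformly at random and apply expectation'' averaging argument.
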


\begin{proof}
We define $j:=\operatorname{argmin}_{i\in[N]}|\{t\in\Lambda:i\in\spl{t}\}|$ and define $\bset:=\{j\}$. We have:
\begin{align}
\notag\sum_{i\in[N]}|\{t\in\Lambda:i\in\spl{t}\}|&=\sum_{i\in[N]}\sum_{t\in\Lambda}\ind{i\in\spl{t}}\\
\notag&=\sum_{t\in\Lambda}\sum_{i\in[N]}\mathcal{I}(i\in\spl{t})\\
\notag&=\sum_{t\in\Lambda}|\spl{t}|\\
\notag&\leq\sum_{t\in\Lambda}\srn\\
\notag&\leq T\srn
\end{align}
 Hence we have that:
\begin{align}
\notag|\{t\in\Lambda:j\in\spl{t}\}|&\leq(1/N)\sum_{i\in[N]}|\{t\in\Lambda:i\in\spl{t}\}|\\
\notag&\leq (1/N)T\srn\\
\notag&=T/\srn
\end{align}
Note that on trial $t\in\Lambda$ we have $\dis{t}{j}=1$ if $j\in\spl{t}$ and $\dis{t}{j}=0$ otherwise. Also on a trial $t\in\Gamma$ we have  $\dis{t}{j}=0$. This means:
\begin{align}
\notag\sum_{t\in[T]}\dis{t}{j}&=\sum_{t\in\Lambda}\dis{t}{j}+\sum_{t\in\Gamma}\dis{t}{j}\\
\notag&=\sum_{t\in\Lambda}\dis{t}{j}\\
\notag&=\sum_{t\in\Lambda:j\in\spl{t}}\dis{t}{j}+\sum_{t\in\Lambda:j\notin\spl{t}}\dis{t}{j}\\
\notag&=|\{t\in\Lambda: j\in\spl{t}\}|\\
\notag&\leq T/\srn
\end{align}
And hence, since $\bset=\{j\}$\,:
\begin{align}
\notag\frac{1}{T}\sum_{t\in[T]}\lf{t}(\bset)&=\frac{1}{T}\sum_{t\in[T]}\left(\cst{t}{j}+\dis{t}{j}\right)\\
\notag&=\frac{1}{\srn}+\frac{1}{T}\sum_{t\in[T]}\cst{t}{j}\\
\notag&=\frac{1}{\srn}+\frac{1}{T}\sum_{t\in[T]}\frac{1}{\srn}\\
\notag&=2/\srn
\end{align}
So $B$ is monotonic increasing in its first variable we then have:
$$B\left(\frac{1}{\ntr}\sum_{t=1}^T\fll{\bocv{t}}{\bccm{t}}(\sels{*}), N, T\right)<B\left(2/\sqrt{N}, N, T\right)$$
which, by our choice of $N$ and $T$, is bounded above by $\epsilon$.
\end{proof}

Lemmas \ref{detl1} and \ref{dett1} imply Theorem \ref{failth}.

\hfill $\blacksquare$

\end{appendix}

\end{document}